\newcommand{\xmark}{\ding{55}}
\definecolor{sourcecolor}{RGB}{255, 153, 132}
\definecolor{targetcolor}{RGB}{199, 242, 179}
\theoremstyle{plain}
\newtheorem{theorem}{Theorem}[section]
\newtheorem{proposition}[theorem]{Proposition}
\newtheorem{lemma}[theorem]{Lemma}
\theoremstyle{definition}
\newtheorem{definition}[theorem]{Definition}
\newtheorem{assumption}[theorem]{Assumption}
\theoremstyle{remark}
\newcolumntype{C}{>{\centering\arraybackslash}m{0.175\linewidth}}
\def\1{\bm{1}}
\DeclareMathAlphabet{\mathsfit}{\encodingdefault}{\sfdefault}{m}{sl}
\SetMathAlphabet{\mathsfit}{bold}{\encodingdefault}{\sfdefault}{bx}{n}
\def\gC{{\mathcal{C}}}
\def\gL{{\mathcal{L}}}
\def\gM{{\mathcal{M}}}
\def\gN{{\mathcal{N}}}
\def\gP{{\mathcal{P}}}
\def\gW{{\mathcal{W}}}
\def\gX{{\mathcal{X}}}
\def\gY{{\mathcal{Y}}}
\def\sN{{\mathbb{N}}}
\def\sR{{\mathbb{R}}}
\def\sY{{\mathbb{Y}}}
\def\sZ{{\mathbb{Z}}}
\newcommand{\E}{\mathbb{E}}
\newcommand{\R}{\mathbb{R}}
\newcommand{\KL}[2]{\mathrm{KL}\left(#1 \Vert #2\right)}
\DeclareMathOperator*{\argmin}{arg\,min}
\newcommand\defeq{\mathrel{\overset{\makebox[0pt]{\mbox{\normalfont\tiny def}}}{=}}}
\newcommand{\entropy}[1]{\mathrm{H}\left(#1\right)}
\newcommand{\OT}{\text{OT}}
\newcommand{\EOT}{\text{EOT}}
\newcommand{\gPac}{\gP_{\text{ac}}}
\newcommand{\gtPlan}{\pi^*} 
\newcommand{\recPlan}{\pi^\theta} 
\newcommand{\margX}{\pi^*_x}
\newcommand{\margY}{\pi^*_y}
\newcommand{\Xtrain}{X_{\text{unpaired}}}
\newcommand{\Ytrain}{Y_{\text{unpaired}}}
\newcommand{\XYtrain}{XY_{\text{paired}}}
\newcommand{\eqdef}{\ensuremath{\stackrel{\mbox{\upshape\tiny def.}}{=}}}
\icmltitlerunning{Inverse Entropic Optimal Transport Solves Semi-supervised Learning via Data Likelihood Maximization}
\begin{document}

\twocolumn[
  \icmltitle{Inverse Entropic Optimal Transport \\Solves Semi-supervised Learning via Data Likelihood Maximization}



  \icmlsetsymbol{equal}{*}

  \begin{icmlauthorlist}
    \icmlauthor{Mikhail Persiianov}{applied_ai,ysda}
    \icmlauthor{Arip Asadulaev}{equal,mbzuai}
    \icmlauthor{Nikita Andreev}{equal}
    \icmlauthor{Nikita Starodubcev}{YR}
    \icmlauthor{Dmitry Baranchuk}{YR}
    \icmlauthor{Anastasis Kratsios}{vector,McMaster}
    \icmlauthor{Evgeny Burnaev}{applied_ai,axxx}
    \icmlauthor{Alexander Korotin}{applied_ai,axxx}
  \end{icmlauthorlist}

  \icmlaffiliation{applied_ai}{Applied AI Institute, Moscow, Russia.}
  \icmlaffiliation{ysda}{Yandex School of Data Analysis, Moscow, Russia.}
  \icmlaffiliation{axxx}{AXXX, Moscow, Russia.}
  \icmlaffiliation{mbzuai}{MBZUAI, Abu Dhabi, UAE.}
  \icmlaffiliation{YR}{Yandex Research, Moscow, Russia.}
  \icmlaffiliation{vector}{Vector Institute, Toronto, Canada.}
  \icmlaffiliation{McMaster}{McMaster University, Hamilton, Canada}

  \icmlcorrespondingauthor{Mikhail Persiianov}{persiianov.mi@gmail.com}

  \icmlkeywords{Machine Learning, ICML}

  \vskip 0.3in
]



\printAffiliationsAndNotice{}  

\begin{abstract}
  Learning conditional distributions $\pi^*(\cdot|x)$ is a central problem in machine learning, which is typically approached via supervised methods with paired data $(x,y) \sim \gtPlan$. However, acquiring paired data samples is often challenging, especially in problems such as domain translation. This necessitates the development of \textit{semi-supervised} models that utilize both limited paired data and additional unpaired i.i.d.\ samples $x \sim \margX$ and $y \sim \margY$ from the marginal distributions. The usage of such combined data is complex and often relies on heuristic approaches. To tackle this issue, we propose a new learning paradigm called \textbf{EBiEOT} that integrates both paired and unpaired data seamlessly using data likelihood maximization techniques. We demonstrate that our approach also connects intriguingly with inverse entropic optimal transport (OT). This finding allows us to apply recent advances in computational OT to establish an \textit{end-to-end} learning algorithm to get $\pi^*(\cdot|x)$. In addition, we derive the universal approximation property, demonstrating that our approach can theoretically recover true conditional distributions with arbitrarily small error. Finally, we demonstrate through empirical tests that our method effectively learns conditional distributions using paired and unpaired data simultaneously. Source code: {\faGithub}\enspace \url{https://github.com/MuXauJl11110/EBiEOT}.
\end{abstract}

\begin{figure}[ht]
    \centering 
    \textbf{Source \hspace{2.5em} Target \hspace{2.5em} FSBM \hspace{2.5em} Ours}
    \includegraphics[width=\columnwidth]{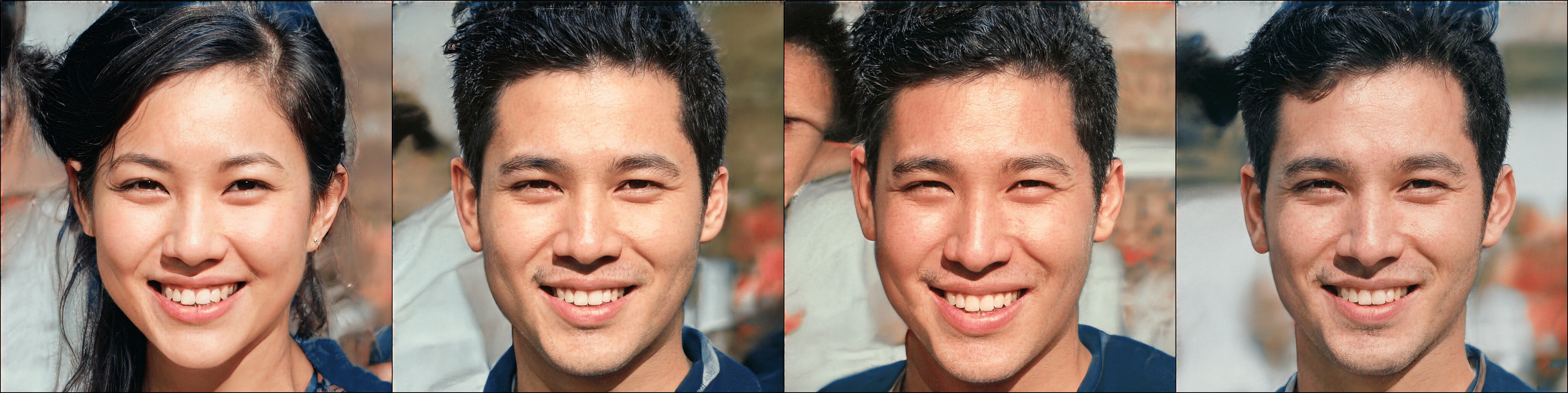}\\[-1pt]
    \includegraphics[width=\columnwidth]{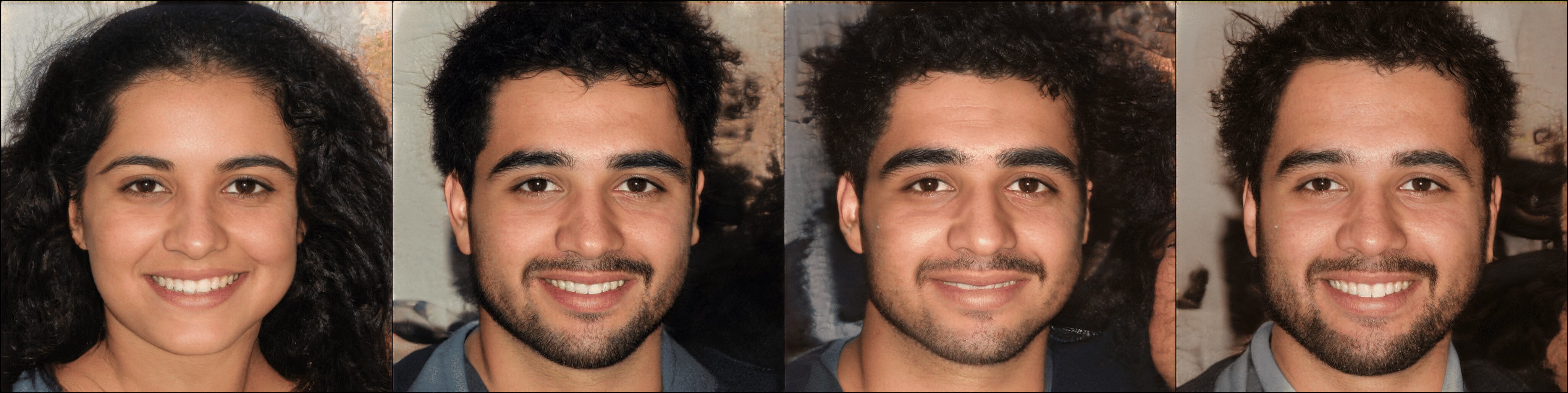}\\[-1pt]
    \includegraphics[width=\columnwidth]{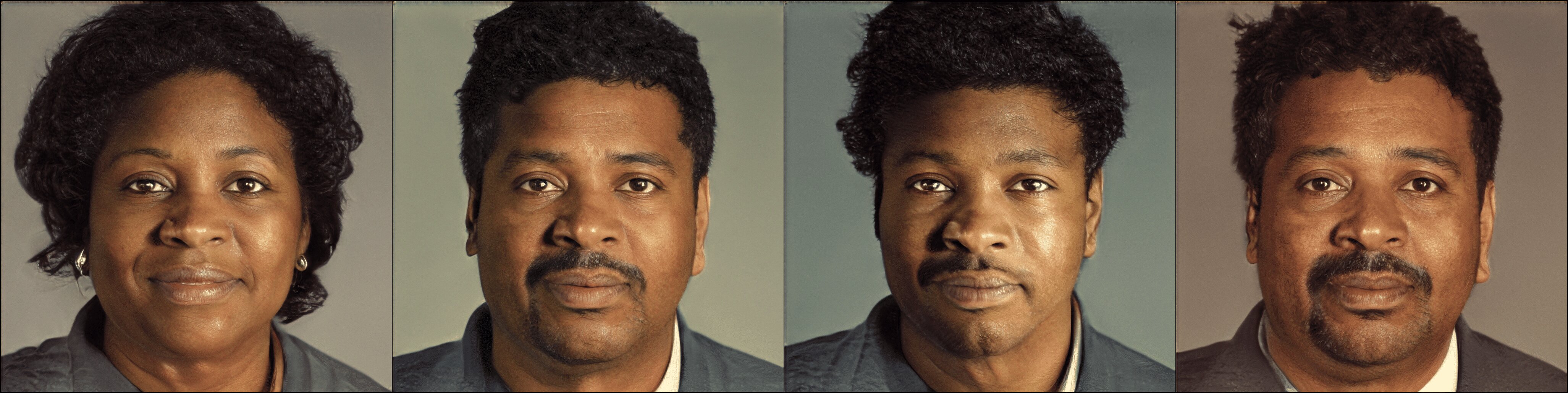}\\[-1pt]
    \includegraphics[width=\columnwidth]{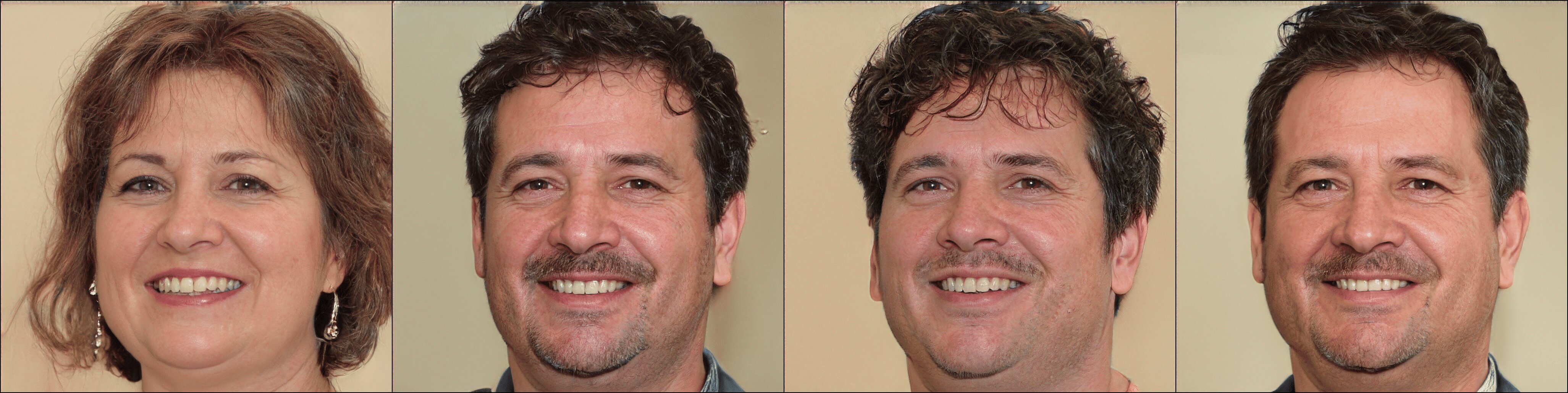}\vspace{-1mm}
    \caption{\textbf{Semi-supervised Woman-to-Man Translation:} A comparison between FSBM \citep{theodoropoulos2024feedback} and our method. Both methods are applied within the ALAE latent space \citep{pidhorskyi2020adversarial} using $1024\times1024$ FFHQ images \citep{karras2019style}. See \wasyparagraph\ref{subsec:latent-generation} for further experimental details.\vspace{-4mm}}
    \label{fig:woman-to-man}
\end{figure}

\section{Introduction}\label{sec:introduction}

Recovering conditional distributions $\gtPlan(y|x)$ from data is one of the fundamental problems in machine learning, which appears both in predictive and generative modeling. In predictive modeling, the standard examples of such tasks are the classification, where $x\in\R^{D_{x}}$ is a feature vector and $y\in \{y_1, \dots, y_K\}$ is a class label, and regression, in which case $x$ is also a feature vector and $y\in \R$ is a real number. In generative modeling, both $x$ and $y$ are feature vectors in $\R^{D_x}, \R^{D_y}$, respectively, representing complex objects, and the goal is to find a transformation between them.

In this paper, we primarily focus on the continuous setting, where both $x$ and $y$ are multi-dimensional real-valued vectors, and the true joint distribution $\gtPlan(x,y)$ is supported on $\R^{D_x}\times\R^{D_y}$. For completeness, we also discuss how the proposed framework can be adapted to classification problems in Appendix~\ref{appendix:discrete-target}, and provide semi-supervised classification experiments on the MNIST dataset \citep{bottou1994comparison} in Appendix~\ref{subsec:semi-supervised-classification}. The main focus of the paper is multi-dimensional probabilistic regression, which can also be viewed as a \textit{domain translation} problem, since $x$ and $y$ typically represent feature vectors from different domains. The objective is to perform probabilistic prediction: given a new sample $x_{\text{new}}$ from the source domain, we aim to model the corresponding conditional distribution $\gtPlan(y \vert x_{\text{new}})$ over outputs in the target domain.

It is natural to assume that learning the conditional distribution $\gtPlan(y|x)$ requires access to input–target data pairs $(x, y) \sim \gtPlan$, where $\gtPlan$ denotes the true joint distribution of the data. In such cases, $\gtPlan(y|x)$ can be modeled using standard supervised learning approaches, ranging from simple regression to conditional generative models \citep{mirza2014conditional, winkler2019learning, ardizzone2019guided, hagemann2024posterior}. However, acquiring paired data may be costly, while getting unpaired samples $x\sim \pi^{*}_{x}$ or $y\sim \pi^{*}_{y}$ from two domains may be much easier and cheaper. This fact inspired the development of unsupervised (or unpaired) learning methods, e.g., \citep{zhu2017unpaired} among many others, which aim to somehow reconstruct the dependencies $\pi^{*}(y|x)$ with access to unpaired data only.

While both paired (supervised) and unpaired (unsupervised) domain translation approaches are being extremely well developed nowadays, surprisingly, the semi-supervised setup when both paired and unpaired data is available is much less explored. This is due to the challenge of designing learning objective (loss) which can simultaneously take into account both paired and unpaired data. A common approach involves heuristically combining standard paired and unpaired losses (cf. \citep[\wasyparagraph 3.5]{tripathy2019learning}, \citep[\wasyparagraph 3.3]{jin2019deep}, \citep[\wasyparagraph C]{yang2020learning}, \citep[\wasyparagraph 3]{vasluianu2021shadow}, \citep[Eq. 8]{panda2023semi}, \citep[Eq. 8]{tang2024residualconditioned}, \citep[\wasyparagraph 3.2]{theodoropoulos2024feedback}, \citep[\wasyparagraph 3]{gu2023optimal}). However, as demonstrated in \wasyparagraph\ref{subsec:swiss_roll}, these composite objectives fail to recover the true conditional distribution even in simple cases $D_x = D_y = 2$. This raises the question: \textit{Can we design a simple loss to learn $\gtPlan(y|x)$ that \uline{naturally} integrates both paired and unpaired data?}

In our paper, we positively answer the above-raised question. Our \textbf{main contributions} are:
\begin{enumerate}[leftmargin=*, topsep=0.5mm, itemsep=0.5mm, parsep=0.5mm]
    \item We introduce a novel loss function designed to facilitate the learning of conditional distributions $\gtPlan(\cdot|x)$ using both paired and unpaired training samples drawn from $\gtPlan$ (\wasyparagraph\ref{sec:loss-derivation}). This loss function is grounded in the well-established principle of likelihood maximization. A key advantage of our approach is its ability to support end-to-end learning, thereby \textit{seamlessly} integrating both paired and unpaired data into the training process.
    
    \item We demonstrate the theoretical equivalence between our proposed loss function and the \textit{inverse entropic optimal transport} problem (\wasyparagraph\ref{sec:relation-ieot}). This finding enables us to leverage established computational OT methods to address challenges encountered in semi-supervised learning.

    \item Building upon recent advances in computational optimal transport, we develop an \textit{end-to-end} algorithm, called EBiEOT-GMM, based on a Gaussian mixture parameterization specifically designed to optimize the proposed likelihood-based objective (\wasyparagraph\ref{sec-light}). For completeness, in Appendix~\ref{sec-neural-parameterization} we additionally demonstrate that the same objective can be optimized using a fully neural-network-based parameterization, which we refer to as EBiEOT-NN.
    
    \item We prove that our proposed  parameterization satisfies the universal approximation property, namely, the proposed model class can approximate the target conditional distributions arbitrarily well under mild assumptions (\wasyparagraph\ref{sec-universal-approximation}).
\end{enumerate}

Our empirical evaluation in \wasyparagraph\ref{sec-experiments} and Appendix~\ref{subsec:semi-supervised-classification} demonstrates the impact of both unpaired and paired data on overall performance for domain translation and classification tasks. In particular, our findings show that the conditional distributions $\gtPlan(\cdot|x)$ can be effectively learned even with a modest amount of paired data $(x, y) \sim \gtPlan$, provided that sufficient auxiliary unpaired data $x \sim \margX$ and $y \sim \margY$ is available.

\textbf{Notations.} Throughout the paper, $\gX$ and $\gY$ represent Euclidean spaces, equipped with the standard norm $\Vert\cdot\Vert$, induced by the inner product $\langle\cdot, \cdot \rangle$, i.e., $\gX \defeq \R^{D_x}$ and $\gY \defeq \R^{D_y}$. The set of absolutely continuous probability distributions on $\gX$ is denoted by $\gPac(\gX)$. For simplicity, we use the same notation for both the distributions and their corresponding probability density functions. The joint probability distribution over $\gX \times \gY$ is denoted by $\pi$ with corresponding marginals $\pi_x$ and $\pi_y$. The set of joint distributions with given marginals $\alpha$ and $\beta$ is represented by $\Pi(\alpha, \beta)$. We use $\pi(\cdot|x)$ for the conditional distribution, while $\pi(y|x)$ represents the conditional density at a specific point $y$. The differential entropy is given by $\entropy{\beta} = - \int_{\gY} \beta(y)\log{\beta(y)} \, dy$.

\section{Background}

First, we recall the formulation of the domain translation problem (\wasyparagraph\ref{sec-background-domain-translation}). We remind the difference between its paired, unpaired, and semi-supervised setups. Next, we recall the basic concepts of the inverse entropic optimal transport, which are relevant to our paper (\wasyparagraph\ref{sec-background-optimal-transport}).

\subsection{Domain Translation Problems}\label{sec-background-domain-translation}

\begin{figure*}[!t]
    \centering
    \begin{subfigure}{0.31\textwidth}
        \centering
        \includegraphics[width=\linewidth]{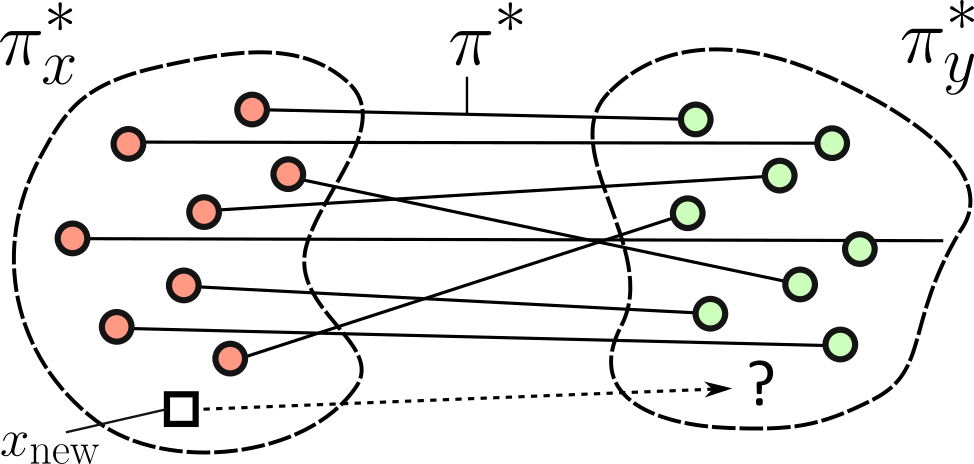}
        \caption{\centering \textit{Supervised} (paired) domain translation setup.}
    \end{subfigure}
    \hfill
    \vrule
    \hfill
    \begin{subfigure}{0.31\textwidth}
        \centering
        \includegraphics[width=\linewidth]{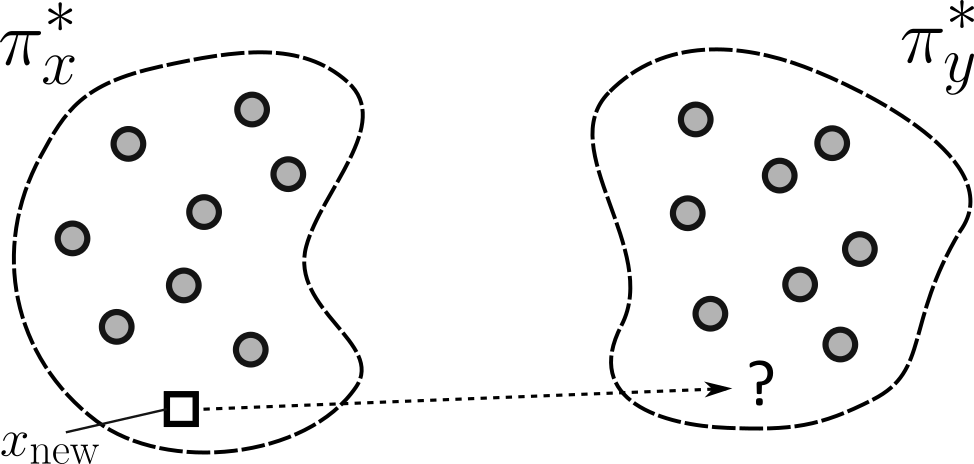}
        \caption{\centering \textit{Unsupervised} (unpaired) domain translation setup.}
    \end{subfigure}
    \hfill
    \vrule
    \hfill
    \begin{subfigure}{0.31\textwidth}
        \centering
        \includegraphics[width=\linewidth]{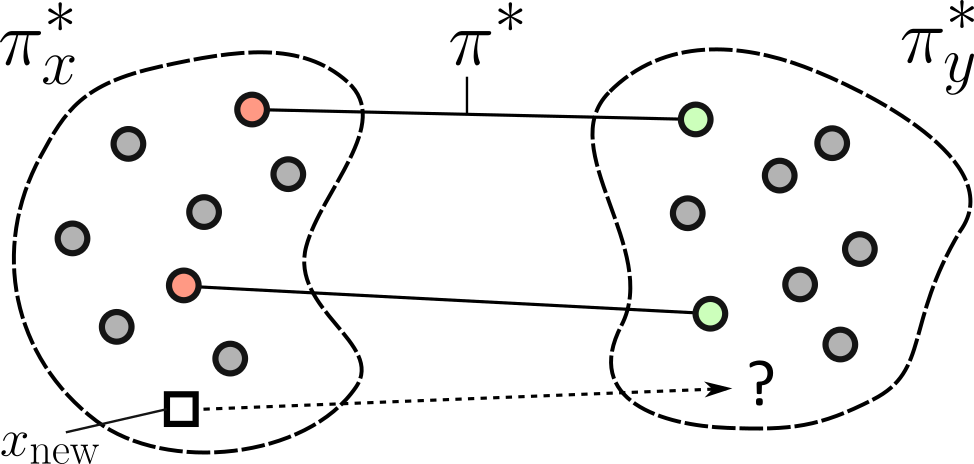}
        \caption{\centering \textit{Semi-supervised} domain translation setup (\underline{\textbf{our focus}}).}
    \end{subfigure}

    \caption{Visualization of domain translation setups. \textcolor{sourcecolor}{Red} and \textcolor{targetcolor}{green} colors indicate paired training data $\XYtrain$, while {\color{gray}grey} color indicates the unpaired training data $\Xtrain$, $\Ytrain$.\vspace{-4mm}}
\label{fig:domain_translation}
\end{figure*}

The goal of \textit{domain translation} task is to transform data samples from the source domain to the target domain while maintaining the essential content or structure. This approach is widely used in applications like computer vision \citep{zhu2017unpaired, Lin2018CVPR, Peng2023ICCV}, natural language processing \citep{Jiang2021EMNLP, Morishita2022EMNLP}, audio processing \citep{Du2022EMNLP}, etc. Domain translation task setups can be classified into supervised (paired), unsupervised (unpaired), and semi-supervised approaches based on the data used for training (Figure \ref{fig:domain_translation}).

\textbf{Supervised domain translation} relies on matched examples from both the source and target domains, where each input corresponds to a specific output, enabling direct supervision during the learning process. Formally, this setup assumes access to a set of $P$ empirical pairs $\XYtrain \defeq \{(x_1, y_1), \dots, (x_P, y_P)\} \sim \gtPlan$ from some unknown joint distribution. The goal here is to recover the conditional distributions $\pi^{*}(\cdot|x)$ to generate samples $y|x_{\text{new}}$ for new inputs $x_{\text{new}}$ that are not present in the training data. While this task is relatively straightforward to solve, obtaining such paired training datasets can be challenging, as it often involves significant time, cost, and effort.

\textbf{Unsupervised domain translation}, in contrast, does not require direct correspondences between the source and target domains \citep[Figure 2]{zhu2017unpaired}. Instead, it involves learning to translate between domains using unpaired data, which offers greater flexibility but demands more advanced techniques to achieve accurate translation. Formally, we are given $Q$ unpaired samples $\Xtrain \defeq \{x_1, \dots, x_Q\} \sim \margX$ from the source distribution, and $R$ unpaired samples $\Ytrain \defeq \{y_1, \dots, y_R\} \sim \pi^*_y$ from the target distribution. Our objective is to learn the conditional distributions $\gtPlan(\cdot|x)$ of the unknown joint distribution $\gtPlan$, whose marginals are $\margX$ and $\margY$, respectively. The unsupervised setup is inherently ill-posed, often yielding ambiguous solutions \citep{moriakov2020kernel}. Accurate translation requires constraints and regularization \citep{yuan2018unsupervised}. Still, it is highly relevant due to the prevalence of unpaired data in practice.

\textbf{Semi-supervised domain translation} integrates both paired and unpaired data to enhance the translation process \citep{tripathy2019learning, Jiang2023ICCV}. This approach leverages the precision of paired data to guide the model while exploiting the abundance of unpaired data to improve performance and generalization. Formally, the setup assumes access to paired data $\XYtrain \sim \gtPlan$ as well as additional unpaired samples $\Xtrain \sim \margX$ and $\Ytrain \sim \margY$. Note that paired samples can also be used in an unpaired manner. By convention, we assume $ P \leq Q, R $, where the first $ P $ unpaired samples are identical to the paired ones. The goal  remains to learn the true conditional mapping $ \gtPlan(\cdot|x) $ using the available data. For extended discussion of real-world applications in which the semi-supervised setting arises naturally, see Appendix \ref{appendix:examples-of-ssdt}.

\subsection{Optimal Transport (OT)}\label{sec-background-optimal-transport}

We emphasize that this section is included primarily to clarify the connection between our proposed loss function (\wasyparagraph\ref{sec:loss-derivation}) and inverse entropic optimal transport~\citep{dupuy2019estimating}, established in \wasyparagraph\ref{sec:relation-ieot}. Understanding this connection is not necessary for following the derivation of the loss itself, which is presented constructively in order to remain accessible to a broader audience.

For a more detailed discussion of entropic, weak, and inverse optimal transport, we refer the reader to Appendix~\ref{appendix:background-weak-eot}. Comprehensive introductions to the theoretical foundations of optimal transport can be found in \citep{villani2009optimal,santambrogio2015optimal,peyre2019computational}.

\textbf{Entropic OT} \citep{genevay2019entropy}.
Given source and target distributions $\alpha \in \gPac(\gX)$ and $\beta \in \gPac(\gY)$, and a cost function $c^{*}: \gX \times \gY \to \R$, the \textit{entropic} optimal transport (EOT) problem is defined as:
\begin{equation}
    \begin{aligned}\label{eq:EOT}
        \EOT_{c^{*}, \varepsilon} \left(\alpha, \beta\right) \defeq \min\limits_{\pi \in \Pi(\alpha, \beta)} \E_{x, y \sim \pi} &[c^{*}(x, y)]\\
        - \varepsilon &\E_{x \sim \alpha} \entropy{\pi(\cdot \vert x)},
    \end{aligned}
\end{equation}
where $\varepsilon > 0$ is the regularization parameter; setting $\varepsilon = 0$ recovers the classic OT formulation \citep{villani2009optimal} originally proposed by \citep{kantorovich1942translocation}. Under mild assumptions, a unique minimizer $\pi^* \in \Pi(\alpha, \beta)$ exists and is known as the \textit{entropic optimal transport plan}. We note that in the literature, the entropy regularization term in \eqref{eq:EOT} is typically written as either $-\varepsilon \entropy{\pi}$ or $+\varepsilon \KL{\pi}{\alpha \otimes \beta}$. These formulations are equivalent up to additive constants; see the discussion in \citep[\wasyparagraph2]{mokrov2024energy} or \citep[\wasyparagraph1]{gushchin2023building}. In this paper, we adopt the \textit{weak} formulation of entropic OT \eqref{eq:EOT} see \citep{gozlan2017kantorovich, backhoff2019existence, backhoff2022applications}.

\textbf{Semi-dual EOT}. Under mild assumptions on $c^{*}, \alpha, \beta$, the further EOT formulation of \eqref{eq:EOT} in semi-dual form holds:
\begin{equation}\label{eq:EOT_dual}
    \EOT_{c^{*}, \varepsilon} \left(\alpha, \beta\right) = \max_{f} \left\{
    \E_{x \sim \alpha} f^{c^{*}}(x) 
    + \E_{y \sim \beta} f(y) \right\},
\end{equation}
where $f$ ranges over a subset of continuous functions (dual potentials) subject to mild boundedness conditions; see \citep[Eq. 3.3]{backhoff2022applications} for details. The term $f^{c^{}}$ denotes the so-called \textit{weak entropic $c^{}$-transform of $f$}, defined as:
\begin{equation}\label{eq:weak-entropic-c-transform}
    f^{c^{*}}(x) \defeq \min\limits_{\mu \in \gP(\gY)} \big\{\E_{y \sim \mu}[c^{*}(x, y)]-\varepsilon\entropy{\mu} - \E_{y \sim \mu} f(y)\big\}.
\end{equation}
It has closed-form \citep[Eq. 14]{mokrov2024energy} given by:
\begin{equation}
f^{c^{}}(x) = -\varepsilon \log \int_\gY \exp\left( \frac{f(y) - c^{}(x, y)}{\varepsilon} \right) \dd y.
\label{c-transform}
\end{equation}
\textbf{Inverse EOT.} The classical forward EOT problem \eqref{eq:EOT} seeks an optimal transport plan $\gtPlan$ between two given marginal distributions $\alpha$ and $\beta$ under a fixed cost function $c^{*}$. In contrast, the \textit{inverse} EOT problem considers the reverse setting \citep[\wasyparagraph5.1]{chan2025inverse}: given a joint distribution $\gtPlan$ with marginals $\margX$ and $\margY$, the goal is to recover a cost function $c^{*}$ such that $\gtPlan$ is the EOT plan for $c^{*}$.

This inverse formulation is not uniquely defined in the literature -- each version is typically tailored to specific applications \citep{stuart2020inverse, ma2020learning, galichon2022cupid, andrade2023sparsistency}. In this work, we adopt a version that aligns with our learning objective described in \wasyparagraph\ref{sec:loss-derivation}. This choice enables us, in \wasyparagraph\ref{sec:relation-ieot}, to formally relate our proposed loss to the inverse EOT framework. We further conjecture that this connection could potentially enable the application of advanced EOT solvers (e.g., diffusion Schrödinger bridges \citep{vargas2021solving, de2021diffusion, gushchin2023entropic, shi2024diffusion, gushchin2024adversarial}) to enhance performance in semi-supervised learning scenarios, which we leave for future work.

With this motivation, we consider the \textit{inverse EOT problem} as the following minimization problem:
\begin{equation}
    \begin{aligned}\label{eq:IOT}
        \!\!c^{*}\! \in \argmin_{c}\!\bigg[\underbrace{\E_{x, y \sim \gtPlan} [c(x, y)] - \overbrace{\varepsilon\E_{x \sim \margX} \entropy{\pi^{*}(\cdot \vert x)}}^{\text{not depend on $c$}}}_{\geq \EOT_{c, \varepsilon} \left(\margX, \margY\right)}\\
        -\EOT_{c, \varepsilon} \left(\margX, \margY\right)\bigg],
    \end{aligned}
\end{equation}
where $c : \gX \times \gY \rightarrow \R$ ranges over measurable cost functions. Consider the term $\EOT_{c, \varepsilon}(\margX, \margY)$: due to entropic regularization and under mild assumptions, this expression admits a unique optimal transport plan $\gtPlan_c$ for every quadruple $(c, \varepsilon, \margX, \margY)$. While $\gtPlan_c$ matches the marginals of $\gtPlan$, its internal structure -- i.e., the conditional distributions -- may differ. The term $\E_{x, y \sim \gtPlan}[c(x, y)] - \varepsilon \E_{x \sim \margX} \entropy{\gtPlan(\cdot|x)}$ represents the \textit{transportation cost} of using $c$ to transport mass according to $\gtPlan$ (cf. the minimization objective in \eqref{eq:EOT}). If the "inner" part of $\pi^\star_c$ differs from that of $\gtPlan$, this cost exceeds $\EOT_{c, \varepsilon}(\margX, \margY)$. Therefore, the minimum of the full objective is achieved only when $\gtPlan$ coincides with the optimal transport plan for some cost $c^*$, in which case the objective value is zero. Notably, the term $-\varepsilon \E_{x \sim \margX} \entropy{\gtPlan(\cdot | x)}$ is independent of $c$ and can be omitted from the optimization. Additionally:
\begin{itemize}[leftmargin=*, topsep=0mm, itemsep=0.5mm, parsep=0.5mm]
    \item \textbf{Invariance to $\varepsilon$.} Unlike the forward EOT problem \eqref{eq:EOT}, the inverse formulation \eqref{eq:IOT} is invariant to the choice of the regularization parameter $\varepsilon > 0$, up to a rescaling of the cost function. Indeed, let $\varepsilon' > 0$ and define $c'(x,y) = \frac{\varepsilon'}{\varepsilon}c(x,y)$. Then,$\frac{1}{\varepsilon'} c'(x,y)=\frac{1}{\varepsilon} c(x,y)$, so the factor $
    \exp\!\left(-\frac{c(x,y)}{\varepsilon}\right)$ remains unchanged. Consequently, the corresponding entropic OT plan is identical: $\pi^{*}_{c,\varepsilon}=\pi^{*}_{c',\varepsilon'}$. Moreover, substituting $c'$ into \eqref{eq:IOT} multiplies the entire objective by the constant factor $\varepsilon'/\varepsilon$, which does not affect the minimizers. Therefore, the inverse problem depends only on the ratio $c/\varepsilon$, and different choices of $\varepsilon$ lead to equivalent solutions after rescaling the cost.\label{text:epsilon_independence}
    \item \textbf{Multiple solutions.} The inverse problem \eqref{eq:IOT} generally admits \textit{many} valid cost functions. For instance, $c^{*}(x, y) = -\varepsilon \log \gtPlan(x, y)$ achieves the minimum by construction. More generally, any function of the form $c'(x, y) = -\varepsilon \log \gtPlan(x, y) + u(x) + v(y)$ is also valid, since additive terms depending only on $x$ or $y$ do not affect the resulting EOT plan. In particular, setting $u(x) = \varepsilon \log \margX(x)$ and $v(y) = 0$ yields $c^*(x, y) = -\varepsilon \log \gtPlan(y|x).$
\end{itemize}

In practice, $\gtPlan$ is known only through samples and not via its density. Therefore, closed-form expressions like $-\varepsilon \log \gtPlan(x, y)$ or $-\varepsilon \log \gtPlan(y \vert x)$ cannot be computed directly. This necessitates learning a parametric estimator $\recPlan$ to approximate the unknown conditional distributions.

\section{Semi-supervised Domain Translation via Inverse EOT}

In \wasyparagraph\ref{sec:loss-derivation}, we introduce a novel likelihood-based objective, called EBiEOT (Energy-Based inverse Entropic Optimal Transport), derived from the minimization of the KL divergence. In \wasyparagraph\ref{sec:relation-ieot}, we demonstrate the proposed loss is equivalent to solving the inverse EOT problem \eqref{eq:IOT}, thereby connecting optimal transport theory with our practical framework. To implement this approach, \wasyparagraph\ref{sec-light} introduces a tailored parameterization. We subsequently prove in \wasyparagraph\ref{sec-universal-approximation} that this parameterization, when combined with our loss minimization, guarantees arbitrarily accurate reconstruction of the true conditional plan under mild assumptions. For an extension to fully neural parameterizations, see Appendix \ref{sec-neural-parameterization}. Complete \uline{proofs} for all results are located in Appendix \ref{appendix-proofs}.

\subsection{Loss Derivation}\label{sec:loss-derivation}

\textbf{Part I. Data likelihood maximization and its limitation.} Our goal is to approximate the true distribution $\gtPlan$ by some parametric model $\recPlan$, where $\theta$ represents the parameters of the model. To achieve this, we would like to employ the standard KL-divergence minimization framework, also known as data likelihood maximization. Namely, we aim to minimize:
\begin{align}
    \KL{\gtPlan}{\recPlan}
    = \E_{x, y \sim \gtPlan}\log\dfrac{\margX(x)\pi^*(y \vert x)}{\recPlan_x(x)\recPlan(y \vert x)}&=\label{eq:KL-minimization}\\
    \E_{x \sim \margX}\log\dfrac{\margX(x)}{\recPlan_x(x)}
    + \E_{x, y \sim \gtPlan}\log\dfrac{\pi^* (y \vert x)}{\recPlan (y \vert x)}&=\\
    \KL{\margX}{\recPlan_x} + \E_{x \sim \margX}\E_{y \sim \gtPlan(\cdot \vert x)} \log\dfrac{\pi^* (y \vert x)}{\recPlan (y \vert x)}&=\\
    \underbrace{\KL{\margX}{\recPlan_x}}_{\text{Marginal}} + \underbrace{\E_{x \sim \margX} \KL{\pi^*(\cdot \vert x)}{\recPlan(\cdot  \vert x)}}_{\text{Conditional}}&.\label{objective-full}
\end{align}
It is clear that objective \eqref{objective-full} splits into two independent components: the \textit{marginal} and the \textit{conditional} matching terms. Our focus will be on the conditional component $\recPlan(\cdot|x)$, as it is the necessary part for the domain translation. Note that the marginal part $\recPlan_x$ is not actually needed. The conditional part of \eqref{objective-full} can further be divided into the following terms:
\begin{equation}
    \begin{aligned}
    \E_{x \sim \margX}\E_{y \sim \gtPlan(\cdot \vert x)} \left[\log \pi^*(y \vert x) - \log\recPlan(y \vert x)\right] &= \\
    - \E_{x \sim \margX} \entropy{\pi^* (\cdot \vert x)} - \E_{x, y \sim \gtPlan} \log\recPlan(y \vert x)&.
    \end{aligned}
\end{equation}
The first term is independent of $\theta$, so we obtain the following minimization objective:
\begin{equation}\label{eq:loss_MLE}
    \gL(\theta) \defeq - \E_{x, y \sim \gtPlan} \log\recPlan(y|x).
\end{equation}
It is important to note that minimizing \eqref{eq:loss_MLE} is equivalent to maximizing the conditional likelihood, a strategy utilized in conditional normalizing flows  \citep[CondNF]{papamakarios2021normalizing}. However, a major limitation of this approach is its reliance solely on paired data from $\gtPlan$, which can be difficult to obtain in real-world scenarios. In the following section, we modify this strategy to incorporate available unpaired data within a semi-supervised learning setup (\wasyparagraph\ref{sec-background-domain-translation}).

\textbf{Part II. Solving the limitations via a tailored parameterization.}
To address the above-mentioned issue and utilize unpaired data, we first use Gibbs-Boltzmann distribution \citep{lecun2006tutorial} density parameterization:
\begin{equation}\label{eq:boltzmann_parameterization}
    \recPlan(y|x) \defeq \dfrac{\exp\left(-E^\theta(y|x)\right)}{Z^\theta(x)},
\end{equation}
where $E^\theta(\cdot|x):\gY \to \sR$ is \textit{the Energy function}, and $Z^\theta(x) \defeq \int_{\gY}\exp\left(-E^\theta(y|x)\right)\dd y$ is the normalization constant. Substituting \eqref{eq:boltzmann_parameterization} into \eqref{eq:loss_MLE}, we obtain:
\begin{equation}\label{eq:loss_without_y}
    \gL(\theta) = \E_{x,y \sim \gtPlan} E^\theta(y|x)+\E_{x \sim \margX}\log{Z^\theta(x)}.
\end{equation}
This objective already provides an opportunity to exploit the unpaired samples from the marginal distribution $\margX$ to learn the conditional distributions $\pi^{\theta}(\cdot|x)\approx \pi^{*}(\cdot|x)$. Namely, it helps to estimate the part of the objective related to the normalization constant $Z^{\theta}$. To incorporate independent samples from the second marginal distribution $\margY$, it is crucial to adopt a parameterization that separates the term in the energy function $E^\theta(y|x)$ that depends only on y. Thus, we propose:
\begin{equation}\label{eq:energy_function}
    E^\theta(y|x) \defeq \dfrac{c^\theta(x, y)-f^\theta(y)}{\varepsilon}.
\end{equation}
In fact, this parameterization allows us to decouple the cost function $c^\theta(x, y)$ and the potential function $f^\theta(y)$. Specifically, changes in $f^\theta(y)$ can be offset by corresponding changes in $c^\theta(x, y)$, resulting in the same energy function $E^\theta(y|x)$. For example, by setting $f^\theta(y) \equiv 0$ and $\varepsilon = 1$, the parameterization of the energy function $E^\theta(y | x)$ remains consistent, as it can be exclusively derived from $c^\theta(x, y)$. Substituting \eqref{eq:energy_function} into the energy term of \eqref{eq:loss_without_y}, and using the identity $\E_{x, y \sim \gtPlan}f^\theta(y) = \E_{y \sim \margY}f^\theta(y)$, yields \textit{our final objective}, which integrates both paired and unpaired data:
\begin{equation} \label{eq:loss}
    \begin{aligned}
        \gL(\theta)&=
        \underbrace{\varepsilon^{-1}\E_{x, y \sim \gtPlan}[c^\theta(x, y)]}_{\text{Joint, requires pairs $(x,y)\sim\pi^{*}$}}\\
        &- \underbrace{\varepsilon^{-1}\E_{y \sim \margY}f^\theta(y)}_{\text{Marginal, requires $y\sim \margY$}}
        + \underbrace{\E_{x \sim \margX} \log{Z^\theta(x)}}_{\text{Marginal, requires $x \sim \margX$}}
    \end{aligned}
\end{equation}
In Appendix \ref{appendix-loss-derivation}, we present a rigorous, step-by-step derivation starting from \eqref{eq:KL-minimization} and arriving at \eqref{eq:loss}, using only \textit{formal mathematical} transitions. Throughout this derivation, we assume that paired samples are drawn from the full joint distribution $\gtPlan$. However, in practice the paired data may be restricted to a subset of~$\gtPlan$, discussed in Appendix \ref{appendix:partially-paired-data}. 

At this point, a reader may come up with 2 reasonable questions regarding \eqref{eq:loss}:
\begin{enumerate}[leftmargin=*, topsep=0.5mm, itemsep=0.5mm, parsep=0.5mm]
    \item How to perform the optimization of the proposed objective? This question is not straightforward due to the existence of the (typically intractable) normalizing constant $Z_{\theta}$ in the objective.
    \item To which extent do the separate terms in \eqref{eq:loss} (paired, unpaired data) contribute to the objective, and which type of data is the most important to get the correct solution?
\end{enumerate}
We answer these questions in \wasyparagraph\ref{sec-light} and \wasyparagraph\ref{sec-experiments}. Before doing that, we show a surprising finding that our proposed objective actually solves the inverse entropic OT problem \eqref{eq:IOT}.

\subsection{Relation to Inverse EOT}\label{sec:relation-ieot}

To show that \eqref{eq:IOT} is equivalent to \eqref{eq:loss}, we begin by substituting the semi-dual EOT formulation \eqref{eq:EOT_dual} into \eqref{eq:IOT}. After dropping the constant entropy term and utilizing the identity $\min (-g) = -\max g$, the expression simplifies to:
\begin{align}\label{eq:loss_ot}
    \!\!\!\min_{c, f} \Big\{\E_{x, y \sim \gtPlan} &[c(x, y)] \!-\! \E_{x \sim \margX}f^c(x) \!-\! \E_{y \sim \margY}f(y)\Big\}.\!\!
\end{align}
Let $c^\theta$ and $f^\theta$ be parameterized by $\theta$. Using \eqref{c-transform} and the energy function in \eqref{eq:energy_function}, we obtain $(f^\theta)^{c^{\theta}}(x) = -\varepsilon \log Z^\theta(x)$. This shows that the \eqref{eq:IOT} formulation is equivalent to our proposed loss \eqref{eq:loss} up to scaling factor $\varepsilon$.

This result shows that \textit{inverse entropic OT can be viewed as a likelihood maximization problem}, enabling the use of established techniques like ELBO and EM \citep{barber2012bayesian, alemi2018fixing, bishop2023deep}. It also reframes inverse EOT as a semi-supervised domain translation task. Notably, prior work on inverse OT has largely focused on discrete, fully paired settings (see \wasyparagraph\ref{sec:related-works}).

\subsection{Practical Parameterization}\label{sec-light}

The most computationally intensive aspect of optimizing the loss function in \eqref{eq:loss} lies in calculating the integral for the normalization constant $Z^\theta$. To tackle this challenge, we propose a tailored parameterization, called EBiEOT-GMM, that yields closed-form expressions for each term in the loss function. Our proposed cost function parameterization $c^\theta$ is based on the log-sum-exp function \citep[\wasyparagraph 3.5.3]{murphy2012machine}:
\begin{equation}\label{eq:cost}
    c^\theta(x, y) = -\varepsilon\log\sum_{m=1}^M v^\theta_m(x)\exp\left(\frac{\langle a^\theta_m(x), y \rangle}{\varepsilon}\right),
\end{equation}
where $\{v^\theta_m(x):\R^{D_{x}}\rightarrow\R_{+}, a^\theta_m(x):\R^{D_{x}}\rightarrow\R^{D_{y}}\}_{m=1}^M$ are arbitrary parametric functions, e.g., \textit{neural networks}, with learnable parameters denoted by $\theta_c$. The parametric form of the cost is motivated by \citep{korotin2024light}, from which we derived a more general functional form appropriate for our setting. Therefore, we adopt a Gaussian mixture parameterization for the dual potential $f^\theta$:
\begin{equation}\label{eq:dual_potential}
    f^\theta(y) = \varepsilon\log\sum_{n=1}^N w^\theta_n\gN(y\,|\, b^\theta_n, \varepsilon B^\theta_n),
\end{equation}
where $\theta_f \defeq \{w^\theta_n, b^\theta_n, B^\theta_n\}_{n=1}^N$ are learnable parameters of the potential, with $w^\theta_n \geq 0$, $b^\theta_n \in \R^{D_y}$, and $B^\theta_n \in \R^{D_y \times D_y}$ being a symmetric positive definite matrix. Thereby, our framework comprises a total of $\theta \defeq \theta_f \cup \theta_c$ learnable parameters. For clarity and to avoid notation overload, we will omit the superscript $^\theta$ associated with learnable parameters and functions in the subsequent formulas.

\begin{proposition}[Tractable normalization constant]\label{prop:norm-const} Our parameterization of the cost function \eqref{eq:cost} and dual potential \eqref{eq:dual_potential} delivers $Z^\theta(x) \defeq \sum_{m=1}^M\sum_{n=1}^N z_{mn}(x)$, where
\begin{equation*}
    z_{mn}(x) \defeq w_nv_m(x)\exp\left(\dfrac{a^\top_m(x) B_n a_m(x) + 2b_n^\top a_m(x)}{2\varepsilon}\right).
\end{equation*}
\end{proposition}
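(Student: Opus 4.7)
The plan is a direct calculation: substitute the parameterizations \eqref{eq:cost} of $c^\theta$ and \eqref{eq:dual_potential} of $f^\theta$ into the definition of the normalization constant, and then reduce the resulting integral to a standard Gaussian moment generating function (MGF).

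First I would unwind the definition. By \eqref{eq:boltzmann_parametrization}--\eqref{eq:energy_function}, we have
\begin{equation*}
    Z^\theta(x) = \int_\gY \exp\!\bigl(-E^\theta(y|x)\bigr)\,dy = \int_\gY \exp\!\left(\frac{f^\theta(y) - c^\theta(x,y)}{\varepsilon}\right)dy.
\end{equation*}
The log-sum-exp shape of \eqref{eq:cost} immediately yields $\exp(-c^\theta(x,y)/\varepsilon) = \sum_{m=1}^M v_m(x)\exp(\langle a_m(x),y\rangle/\varepsilon)$, and the log-mixture shape of \eqref{eq:dual_potential} gives $\exp(f^\theta(y)/\varepsilon) = \sum_{n=1}^N w_n\,\gN(y\,|\,b_n,\varepsilon B_n)$. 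Multiplying the two sums and swapping sum and integral (legitimate since all summands are non-negative, so monotone convergence / Tonelli applies), one obtains
\begin{equation*}
    Z^\theta(x) = \sum_{m=1}^M\sum_{n=1}^N w_n v_m(x)\int_\gY \gN(y\,|\,b_n,\varepsilon B_n)\,\exp\!\left(\frac{\langle a_m(x), y\rangle}{\varepsilon}\right)dy.
\end{equation*}

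The remaining step is to recognize the inner integral as the MGF of $\gN(b_n,\varepsilon B_n)$ evaluated at the point $t = a_m(x)/\varepsilon$. Using the standard identity $\E_{y\sim\gN(\mu,\Sigma)}\exp(t^\top y) = \exp\!\bigl(t^\top\mu + \tfrac12 t^\top\Sigma t\bigr)$ with $\mu = b_n$, $\Sigma = \varepsilon B_n$, $t = a_m(x)/\varepsilon$, the integral simplifies to
\begin{equation*}
    \exp\!\left(\frac{b_n^\top a_m(x)}{\varepsilon} + \frac{1}{2}\cdot\frac{a_m(x)^\top}{\varepsilon}\,\varepsilon B_n\,\frac{a_m(x)}{\varepsilon}\right) = \exp\!\left(\frac{a_m^\top(x)B_n a_m(x) + 2b_n^\top a_m(x)}{2\varepsilon}\right).
\end{equation*}
Plugging this back yields exactly $z_{mn}(x)$ as defined in the statement, completing the proof.

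There is really no serious obstacle here: the positive-definiteness of $B_n$ guarantees integrability of the Gaussian against any exponential linear tilt, the positivity of $v_m$ and $w_n$ lets us freely exchange sum and integral, and the only non-trivial computation is the MGF of a multivariate Gaussian, which is textbook. The entire argument hinges on the deliberate choice in \eqref{eq:cost}--\eqref{eq:dual_potential} to pair an exponential-in-$y$ cost factorization with a Gaussian-mixture potential, so that integration against each mixture component admits a closed form.
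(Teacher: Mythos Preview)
Your proof is correct. The only difference from the paper is stylistic: you invoke the Gaussian moment generating function $\E_{y\sim\gN(\mu,\Sigma)}\exp(t^\top y)=\exp(t^\top\mu+\tfrac12 t^\top\Sigma t)$ directly, whereas the paper completes the square in the exponent by hand, rewriting the integrand as $\sum_{m,n} z_{mn}(x)\,\gN(y\,|\,d_{mn}(x),\varepsilon B_n)$ with $d_{mn}(x)=b_n+B_n a_m(x)$ and then integrating each Gaussian to $1$. Your MGF shortcut is cleaner for the purpose of Proposition~\ref{prop:norm-const} alone; the paper's explicit completion of the square buys them the Gaussian-mixture form of $\pi^\theta(y|x)$ for free, so their proof of Proposition~\ref{prop:cond-distr} becomes a one-liner, while with your approach you would need to redo that computation separately.
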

The proposition offers a closed-form expression for \(Z^\theta(x)\), which is essential for optimizing \eqref{eq:loss}. Furthermore, our following proposition supplements the previous one and provides a method for sampling $y$ given a new sample $x_{\text{new}}$.
\begin{proposition}[Tractable conditional distributions]\label{prop:cond-distr} From our parameterization of the cost function \eqref{eq:cost} and dual potential \eqref{eq:dual_potential} it follows that the $\pi^{\theta}(\cdot|x)$ are Gaussian mixtures:
\begin{equation}\label{eq:cond-distr}
    \!\!\!\!\pi^\theta(y|x) \!\!=\!\! \frac{1}{Z^\theta(x)}\sum_{m=1}^M\sum_{n=1}^N z_{mn}(x) \gN(y\,|\, d_{mn}(x), \varepsilon B_n),\!\!\!
\end{equation}
where $d_{mn}(x) \defeq b_n + B_n a_m(x)$ and $z_{mn}(x)$ defined in Proposition \ref{prop:norm-const}.
\end{proposition}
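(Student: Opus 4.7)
The plan is to work directly from the Gibbs-Boltzmann parametrization \eqref{eq:boltzmann_parametrization} together with the energy form \eqref{eq:energy_function}, and show that after substituting the log-sum-exp cost \eqref{eq:cost} and the Gaussian-mixture potential \eqref{eq:dual_potential}, the unnormalized conditional density splits into a sum of $MN$ terms, each of which can be re-expressed as a scaled Gaussian via a standard completion-of-squares argument. Normalizing by $Z^\theta(x)$ (whose closed form is already supplied by Proposition~\ref{prop:norm-const}) then yields the claimed mixture representation.

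More concretely, I would first note that \eqref{eq:cost} and \eqref{eq:dual_potential} are designed so that
\begin{equation*}
\exp\!\left(-\tfrac{c^\theta(x,y)}{\varepsilon}\right) = \sum_{m=1}^M v_m(x)\exp\!\left(\tfrac{\langle a_m(x),y\rangle}{\varepsilon}\right), \qquad \exp\!\left(\tfrac{f^\theta(y)}{\varepsilon}\right) = \sum_{n=1}^N w_n\,\gN(y\,|\,b_n,\varepsilon B_n).
\end{equation*}
Hence the numerator of \eqref{eq:boltzmann_parametrization} factorizes as a double sum, with generic term $v_m(x)w_n\exp(\langle a_m(x),y\rangle/\varepsilon)\,\gN(y\,|\,b_n,\varepsilon B_n)$. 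The single non-trivial step is then to absorb the linear exponential into the Gaussian. Writing the Gaussian in its canonical form and collecting the exponent in $y$, I expect to complete the square in the form
\begin{equation*}
-\tfrac{1}{2\varepsilon}(y-b_n)^\top B_n^{-1}(y-b_n) + \tfrac{1}{\varepsilon}\langle a_m(x),y\rangle = -\tfrac{1}{2\varepsilon}(y-d_{mn}(x))^\top B_n^{-1}(y-d_{mn}(x)) + \tfrac{a_m(x)^\top B_n a_m(x) + 2b_n^\top a_m(x)}{2\varepsilon},
\end{equation*}
with $d_{mn}(x) \defeq b_n + B_n a_m(x)$; this uses only the identity $B_n^{-1}d_{mn}(x) = B_n^{-1}b_n + a_m(x)$ and the cancellation $d_{mn}(x)^\top B_n^{-1}d_{mn}(x) - b_n^\top B_n^{-1}b_n = a_m(x)^\top B_n a_m(x) + 2b_n^\top a_m(x)$. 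Because the normalizing constant of the Gaussian depends only on $B_n$ (not on its mean), the prefactor collects exactly into $z_{mn}(x) = w_n v_m(x)\exp\!\left(\tfrac{a_m(x)^\top B_n a_m(x) + 2b_n^\top a_m(x)}{2\varepsilon}\right)$ from Proposition~\ref{prop:norm-const}.

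Putting these pieces together, the unnormalized conditional density equals $\sum_{m,n} z_{mn}(x)\gN(y\,|\,d_{mn}(x),\varepsilon B_n)$. Dividing by $Z^\theta(x)$ gives \eqref{eq:cond-distr}, and as a sanity check one recovers $Z^\theta(x) = \sum_{m,n} z_{mn}(x)$ by integrating in $y$, which is consistent with Proposition~\ref{prop:norm-const}. There is no real obstacle here beyond careful bookkeeping: the main technical point is the completion-of-squares identity above, and the only subtlety is making sure the mean shift $d_{mn}(x) = b_n + B_n a_m(x)$ is expressed in terms of the covariance $B_n$ (not $B_n^{-1}$), which is a natural consequence of conjugating a Gaussian in mean parametrization against a linear exponential tilt.
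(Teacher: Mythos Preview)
Your proposal is correct and follows essentially the same approach as the paper: the paper derives the identity $\exp\!\big(\tfrac{f^\theta(y)-c^\theta(x,y)}{\varepsilon}\big)=\sum_{m,n} z_{mn}(x)\,\gN(y\,|\,d_{mn}(x),\varepsilon B_n)$ via the same completion-of-squares computation (carried out inside the proof of Proposition~\ref{prop:norm-const}) and then simply invokes \eqref{eq:boltzmann_parametrization}--\eqref{eq:energy_function} to conclude \eqref{eq:cond-distr}. Your write-up is slightly more explicit about the algebraic cancellation yielding $d_{mn}(x)=b_n+B_n a_m(x)$ and the constant $z_{mn}(x)$, but the route is identical.
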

\textsc{Training}. As stated in \wasyparagraph\ref{sec-background-domain-translation}, since we only have access to samples from the distributions, we minimize the empirical counterpart of \eqref{eq:loss} via the gradient descent w.r.t.\ $\theta$:
\begin{equation}
    \begin{aligned}\label{eq:emp_loss}
        \!\!\!\!\gL(\theta) \approx \widehat{\gL}(\theta)\! \defeq &\varepsilon^{-1} \frac{1}{P} \sum_{p=1}^P c^\theta(x_p, y_p)\\
        - &\varepsilon^{-1}\frac{1}{R} \sum_{r=1}^R f^\theta(y_r) \!+\! 
        \frac{1}{Q}\! \sum_{q=1}^Q \log Z^\theta(x_q).
    \end{aligned}
\end{equation}
\textsc{Inference}. According to our Proposition \ref{prop:cond-distr}, the conditional distributions $\recPlan (\cdot|x)$ are Gaussian mixtures \eqref{eq:cond-distr}. As a result, sampling $y$ given $x$ is fast and straightforward.

\subsection{Universal Approximation of the Proposed Parameterization}\label{sec-universal-approximation}

One may naturally wonder how expressive is our proposed parameterization of $\pi_{\theta}$ in \wasyparagraph\ref{sec-light}. Below we show that this parameterization allows approximating any distribution $\pi^{*}$ that satisfies mild assumptions on boundness and regularity assumptions, see the \uline{details} in Appendix \ref{a:Approximation}.

\begin{theorem}[Proposed parameterization guarantees universal conditional distributions] Under mild assumptions on the joint distribution $\gtPlan$, for all $\delta>0$ there exists \textbf{(a)} an integer $N>0$ and a Gaussian mixture $f^{\theta}$ \eqref{eq:dual_potential} with $N$ components, \textbf{(b)} an integer $M>0$ and cost $c^\theta$\eqref{eq:cost} defined by fully-connected neural networks $a_{m}:\R^{D_x}\rightarrow\R^{D_y}, v_{m}:\R^{D_{x}}\rightarrow\R_{+}$ with ReLU activations such that $\pi^{\theta}$ defined by \eqref{eq:boltzmann_parameterization} and \eqref{eq:energy_function} satisfies $\KL{\pi^{*}}{\pi^{\theta}}<\delta.$
\label{thm-universal-approximation}
\end{theorem}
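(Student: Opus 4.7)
The plan is to combine the density of Gaussian mixture models (GMMs) in the space of absolutely continuous conditional densities with the universal approximation property of ReLU networks, and then propagate this into a KL bound via the mild regularity/tail assumptions on $\pi^*$.

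First I would analyse which conditional laws the parametrization \eqref{eq:cond-distr} can realize. Fixing for simplicity $N=1$, $w_1=1$, $b_1=0$, and $B_1 = \sigma^2\varepsilon^{-1} I$, the formula \eqref{eq:cond-distr} collapses to the conditional Gaussian mixture $\pi^\theta(y|x) = \sum_{m=1}^M \tfrac{z_{m1}(x)}{Z^\theta(x)}\,\gN(y\mid \mu_m(x),\sigma^2 I)$ with means $\mu_m(x) = B_1 a_m(x)$ and unnormalized weights $z_{m1}(x) = v_m(x)\exp\!\big(\|\mu_m(x)\|^2/(2\sigma^2)\big)$. Any desired continuous positive target weight profile $\alpha_m(x)$ can be reproduced by choosing $v_m(x) = \alpha_m(x)\exp(-\|\mu_m(x)\|^2/(2\sigma^2))$; after the normalization by $Z^\theta(x)$, the family we can express therefore contains every conditional GMM with shared isotropic covariance, arbitrary continuous means, and arbitrary continuous positive weights.

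Second, I would use standard mollification to approximate $\pi^*(y|x)$ by such a conditional GMM. Pick a small $\sigma>0$ and a sufficiently fine grid $\{\mu_k\}_{k=1}^M$ covering the effective support of $\margY$. Defining $\alpha_k(x)$ as the $\pi^*$-based posterior probabilities of the $k$-th grid cell gives $\widetilde{\pi}(y|x) \defeq \sum_k \alpha_k(x)\gN(y|\mu_k,\sigma^2 I)$, which converges to $\pi^*(y|x)$ in $L^1$ as $\sigma\downarrow 0$ and the grid refines; this is the classical GMM universal approximation argument. It then remains to approximate the continuous maps $\alpha_k(\cdot)$ and (trivially constant in $x$, or more generally $x$-dependent) $\mu_k(\cdot)$ by ReLU networks $v_m^\theta$ and $a_m^\theta$, invoking the ReLU universal approximation theorem on a compact effective support of $\margX$. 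Positivity of $v_m$ is handled either by a final ReLU plus a small positive offset, or by squaring a linear readout; the offset induces only a controlled perturbation of the normalized mixture weights.

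The hard part is controlling the approximation in KL rather than in $L^1$, since small total-variation distance does not in general imply small KL. This is where the ``mild assumptions'' on $\pi^*$ (e.g., compactly supported or subgaussian $y$, boundedness of $\pi^*(\cdot|x)$ away from zero on its effective support, and continuous dependence on $x$) are used: they furnish a uniform positive lower bound on the surrogate $\pi^\theta(y|x)$ on the set where $\pi^*(y|x)$ is non-negligible, plus integrability of the log-ratio. Using the inequality $\log(\pi^*/\pi^\theta) = \log\!\big(1 + (\pi^*-\pi^\theta)/\pi^\theta\big) \le (\pi^*-\pi^\theta)/\pi^\theta$, together with a split of the $y$-integration into an effective-support region and a tail whose contribution is bounded via the tail assumption, the $L^1$-type closeness upgrades to $\KL{\pi^*}{\pi^\theta}<\delta$ by choosing $M$, the grid resolution, and the ReLU network sizes sufficiently large.
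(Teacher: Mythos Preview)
Your sketch is correct and follows the same three-step arc as the paper's proof: (i) show that conditional Gaussian mixtures are dense among the relevant conditional densities (via mollification), (ii) realize such mixtures through the parametrization \eqref{eq:cond-distr} with ReLU networks for the mean/weight functions, and (iii) upgrade the $L^1$/TV closeness to a KL bound using boundedness assumptions on the densities.

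The execution differs in two places. For (i)--(ii), the paper does not build the approximating mixture directly as you do; instead it verifies that $(\gP_1^+(\R^D),d_{TV})$ is a \emph{QAS space} in the sense of Acciaio et al.\ (quantizable by Gaussian mixtures, via a mollification-plus-Riemann-sum argument very close to yours, and approximately simplicial), and then invokes a general universal approximation theorem from that reference as a black box to obtain the uniform TV bound for the conditional map $x\mapsto\hat\pi(\cdot|x)$. For (iii), rather than your $\log(1+t)\le t$ (i.e.\ chi-square) route, the paper assumes that both $\pi^*(\cdot|x)$ and $\hat\pi(\cdot|x)$ have Lebesgue densities uniformly bounded above and below and applies a reverse Pinsker inequality (Sason, 2015), which gives $\KL{\pi^*(\cdot|x)}{\hat\pi(\cdot|x)}\le C\,d_{TV}(\pi^*(\cdot|x),\hat\pi(\cdot|x))$ with $C$ depending only on those bounds. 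Your construction is more elementary and self-contained, and your choice $N=1$ with isotropic covariance is a nice simplification the paper does not exploit; the paper's route is shorter once the external machinery is taken for granted, and its KL step is cleaner but under a stronger two-sided density bound than the tail/support splitting you outline.
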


\section{Related Works}\label{sec:related-works}

In this section, we briefly summarize the most relevant prior work; a more detailed discussion appears in Appendix~\ref{appendix:related-works}. Existing semi-supervised domain translation approaches typically combine ad hoc objectives based on GAN losses and paired-data regularization \citep{chen2023semi, panda2023semi}, or use \textit{keypoint-guided OT} \citep{gu2022keypoint}, later extended to diffusion-based models \citep{gu2023optimal, theodoropoulos2024feedback}. Importantly, the paradigms outlined above do not offer any theoretical guarantees for reconstructing the conditional distribution $\gtPlan(y|x)$, as they depend on heuristic loss constructions. We show that such approaches actually fail to recover the true plan even in toy 2-dimensional cases, refer to experiments in \wasyparagraph\ref{sec-experiments} for an illustrative example. \textbf{Inverse OT solvers:} works \citep{dupuy2019estimating, stuart2020inverse} focuses on reconstructing cost functions (often in discrete settings), whereas our aim is to learn conditional distribution $\recPlan(\cdot|x)$. \textbf{Forward OT solvers:} Building on \citep{mokrov2024energy} and Gaussian-mixture parameterizations \citep{korotin2024light, gushchin2024light}, our solver extends forward OT methods to general cost functions (Eq.~\eqref{eq:cost}) and incorporates paired data through likelihood-based cost learning. Full details and additional discussion of \textbf{metric-learning} \cite{cuturi2014ground} provided in the Appendix~\ref{appendix:related-works}.

\section{Experimental Illustrations}\label{sec-experiments}

We evaluate the proposed solver on several tasks, including synthetic datasets (\wasyparagraph\ref{subsec:swiss_roll} and Appendix~\ref{sec:gaussian-to-swiss-roll-mapping}), real-world data (\wasyparagraph\ref{sec:weather-prediction}), image translation (\wasyparagraph\ref{subsec:latent-generation}), and semi-supervised classification (Appendix~\ref{subsec:semi-supervised-classification}). Our PyTorch implementation is publicly available at the following repository\footnote{\faGithub\enspace\href{https://github.com/MuXauJl11110/EBiEOT}{https://github.com/MuXauJl11110/EBiEOT}}. Additional experimental details are provided in Appendix~\ref{sec:experimental_details}.

\begin{figure*}[!t]
    \centering
    \begin{subfigure}{0.19\textwidth}
        \centering\includegraphics[width=\linewidth]{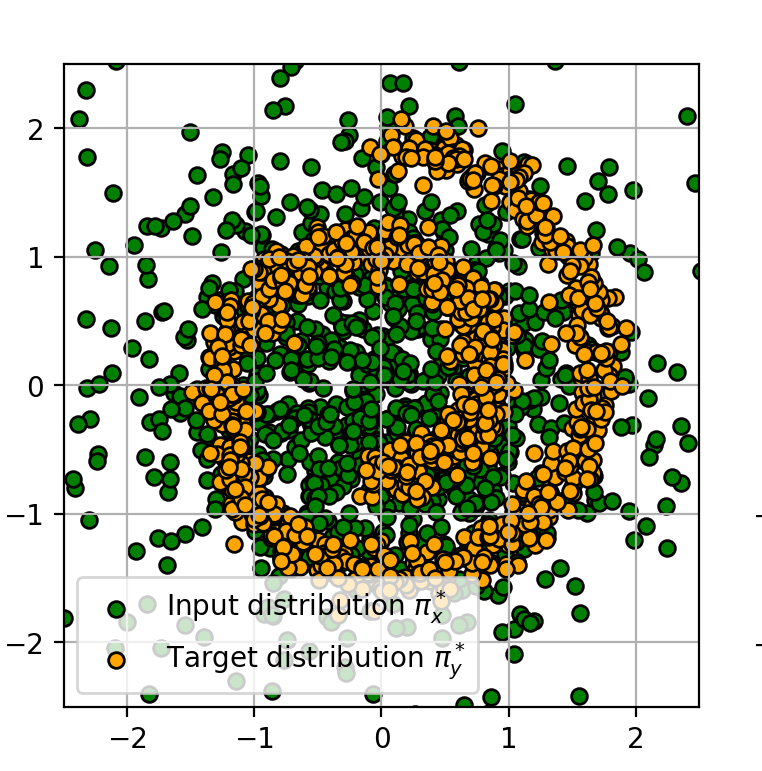}
        \caption{Unpaired data. \label{fig:unpaired-data}}
    \end{subfigure}
    \begin{subfigure}{0.19\textwidth}
        \centering
        \includegraphics[width=\linewidth]{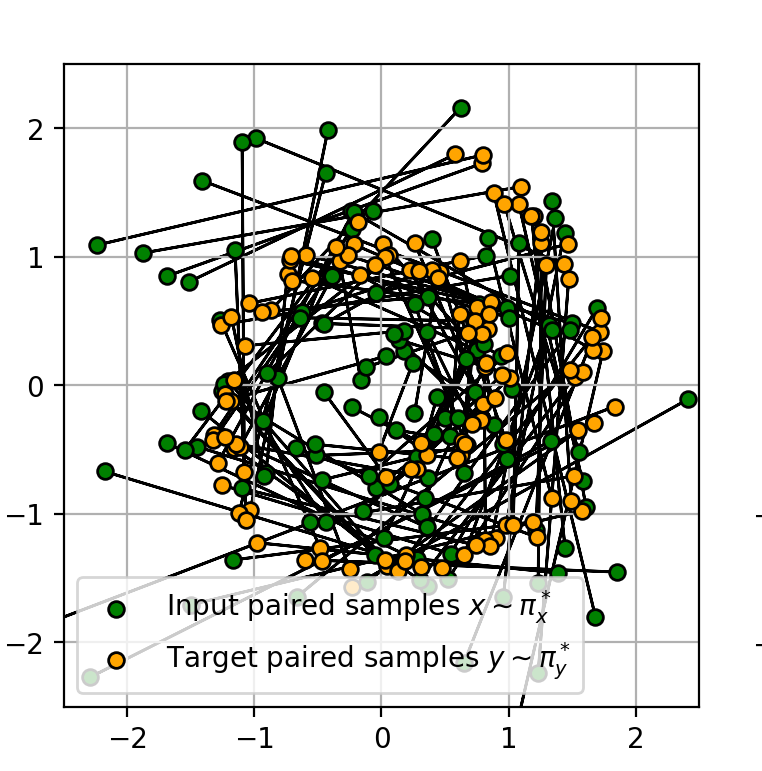}
        \caption{Paired data. \label{fig:paired-data}}
    \end{subfigure}
    \begin{subfigure}{0.19\textwidth}
        \centering
        \includegraphics[width=\linewidth]{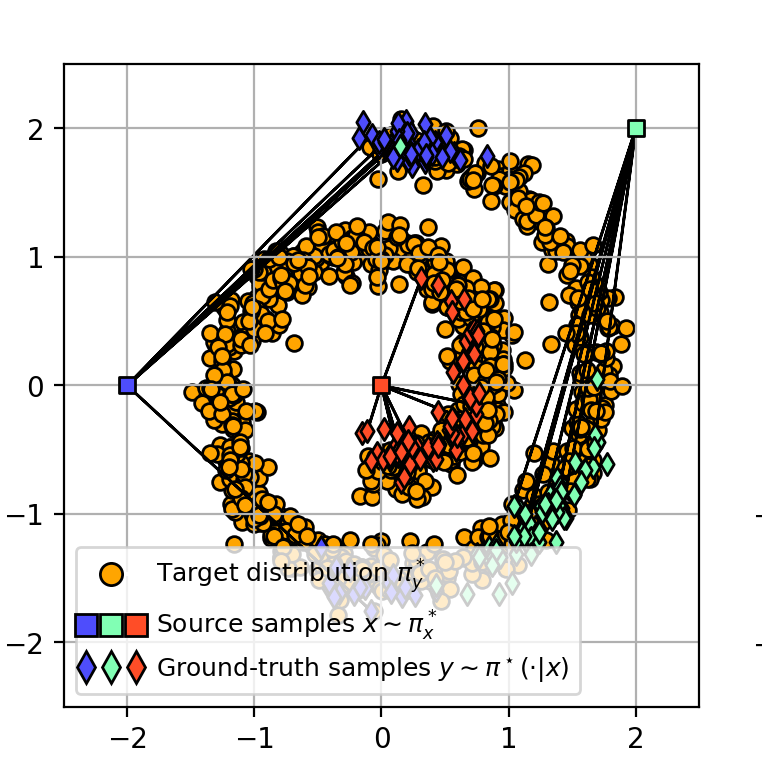}
        \caption{Ground truth. \label{fig:gt-mapping}}
    \end{subfigure}
    \begin{subfigure}{0.19\textwidth}
        \centering
        \includegraphics[width=\linewidth]{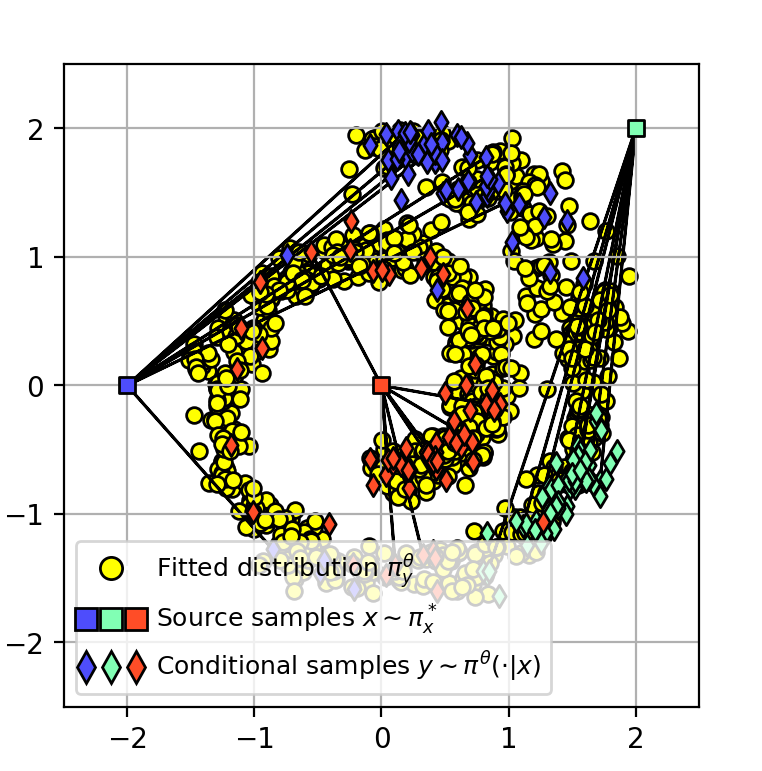}
        \caption{\textbf{EBiEOT-GMM}.} \label{fig:ebieot-gmm}
    \end{subfigure}
    \begin{subfigure}{0.19\textwidth}
        \centering
        \includegraphics[width=\linewidth]{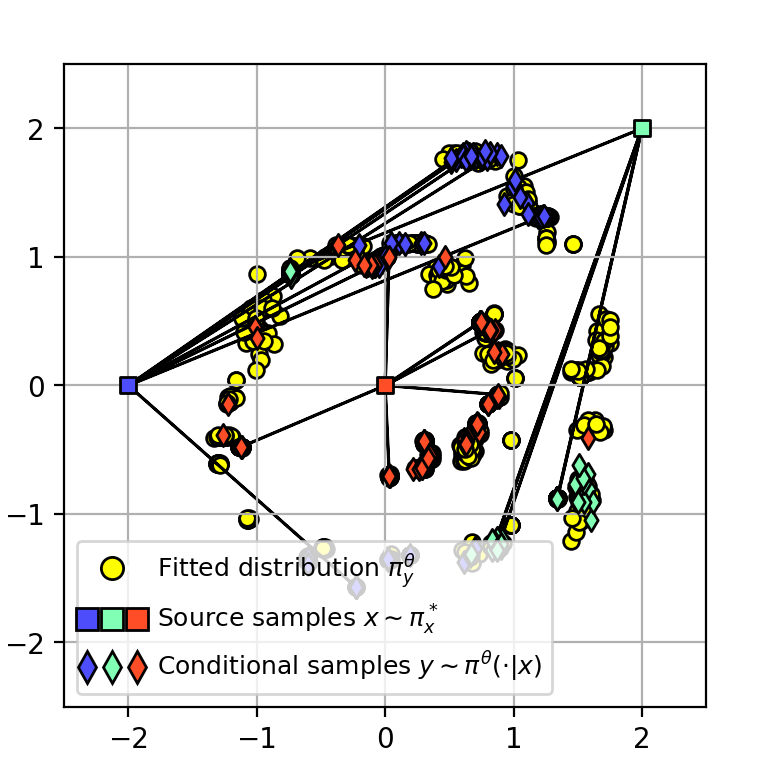}
        \caption{CGMM (SS). \label{fig:semi-cgmm}}
    \end{subfigure}

    \vspace{-0.5mm}
    \begin{subfigure}{0.19\textwidth}
        \centering
        \includegraphics[width=\linewidth]{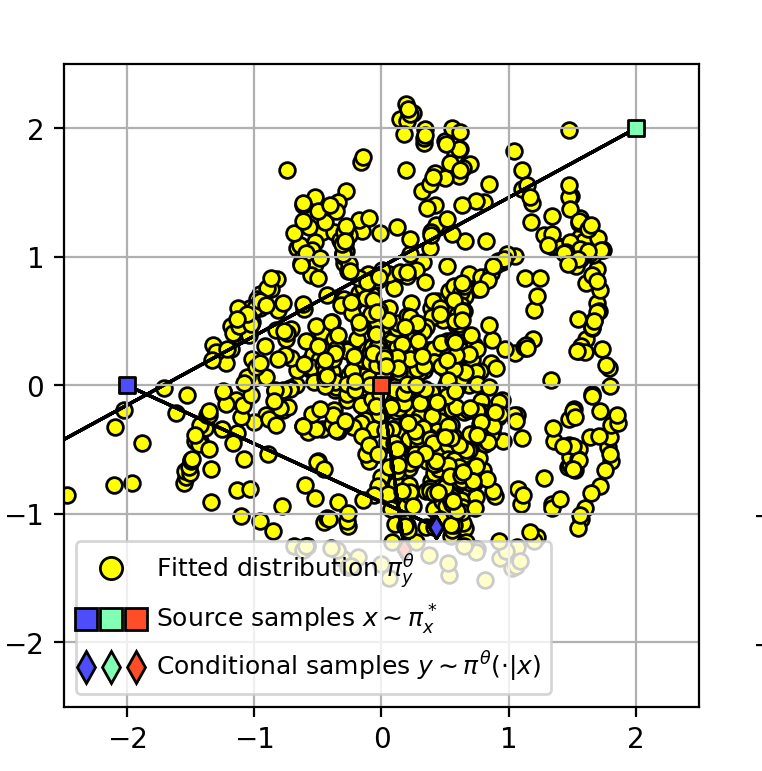}
        \caption{Regression. \label{fig:regression}}
    \end{subfigure}
    \begin{subfigure}{0.19\textwidth}
        \centering
        \includegraphics[width=\linewidth]{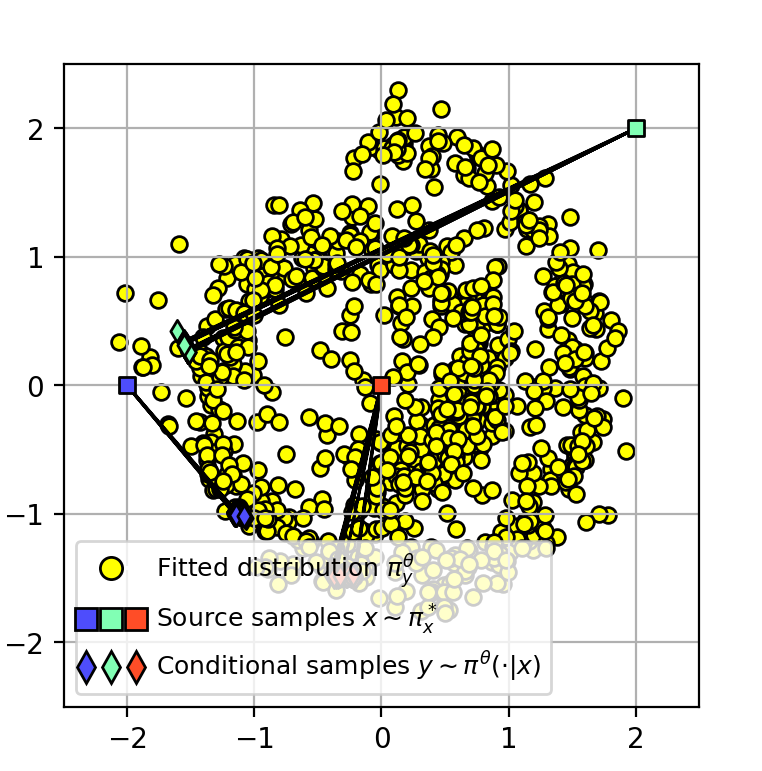}
        \caption{UGAN+$\ell^{2}$. \label{fig:ugan}}
    \end{subfigure}
    \begin{subfigure}{0.19\textwidth}
        \centering
        \includegraphics[width=\linewidth]{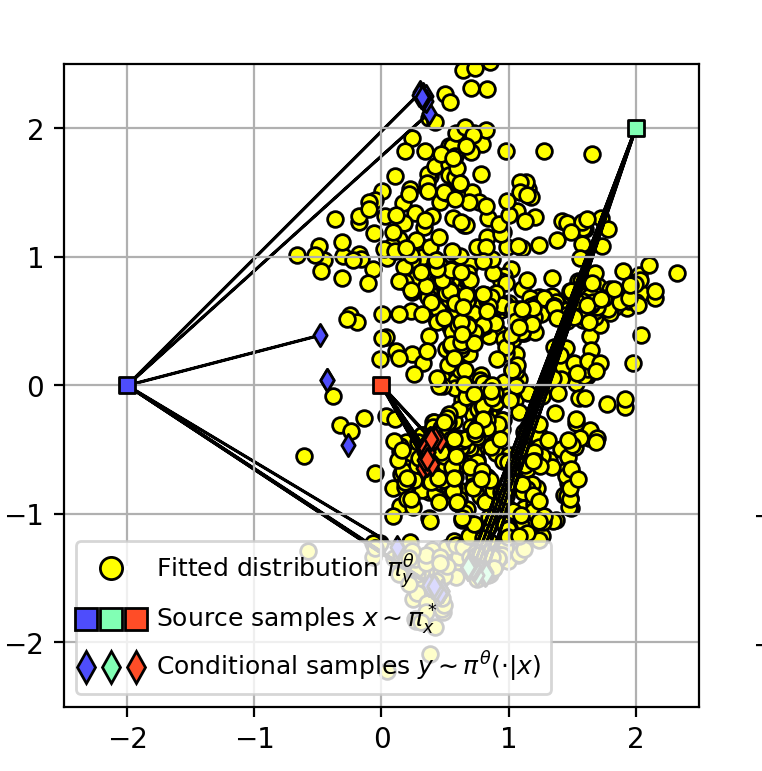}
        \caption{CGAN. \label{fig:cgan}}
    \end{subfigure}
    \begin{subfigure}{0.19\textwidth}
        \centering
        \includegraphics[width=\linewidth]{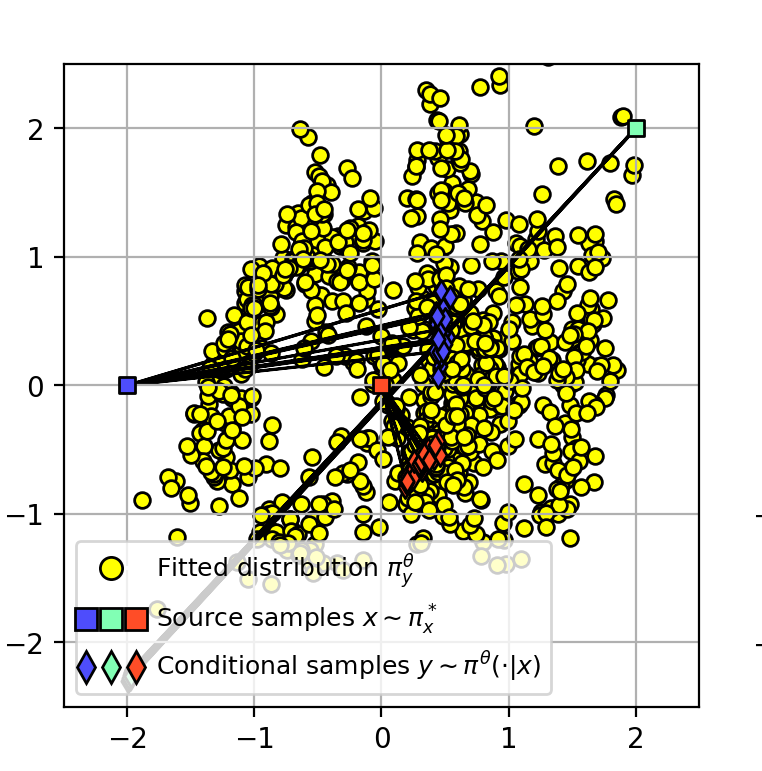}
        \caption{CondNF. \label{fig:cnf}}
    \end{subfigure}
    \begin{subfigure}{0.19\textwidth}
        \centering
        \includegraphics[width=\linewidth]{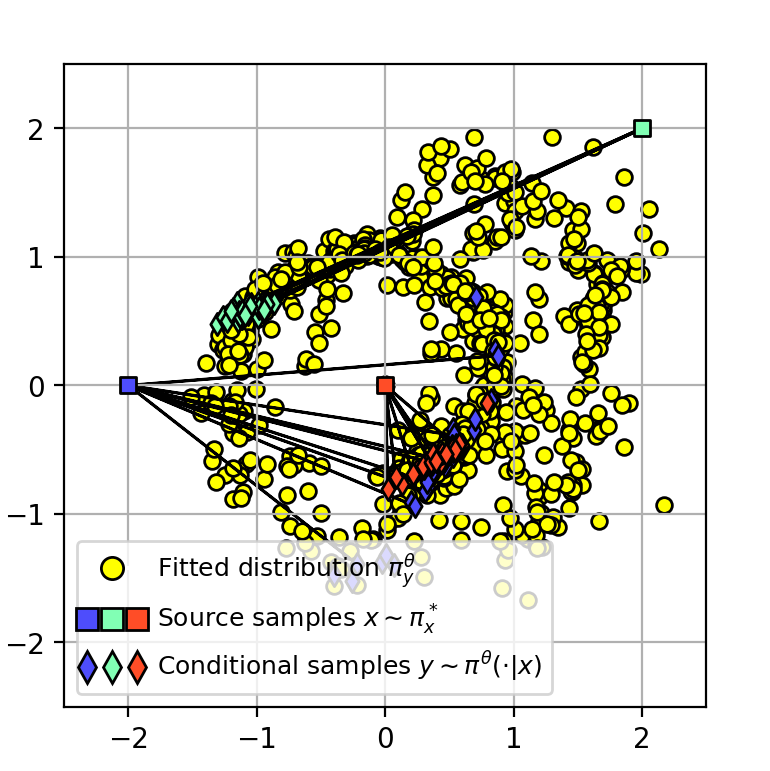}
        \caption{CondNF (SS). \label{fig:semi-cnf}}
    \end{subfigure}

    \vspace{-0.5mm}
    \begin{subfigure}{0.19\textwidth}
        \centering
        \includegraphics[width=\linewidth]{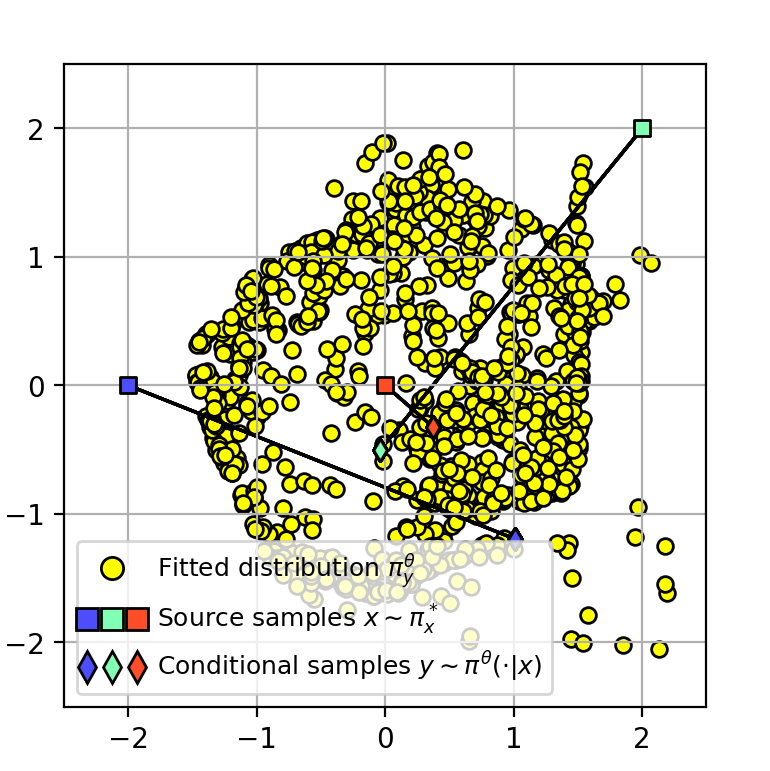}
        \caption{GNOT. \label{fig:gnot_128}}
    \end{subfigure}
    \begin{subfigure}{0.19\textwidth}
        \centering
        \includegraphics[width=\linewidth]{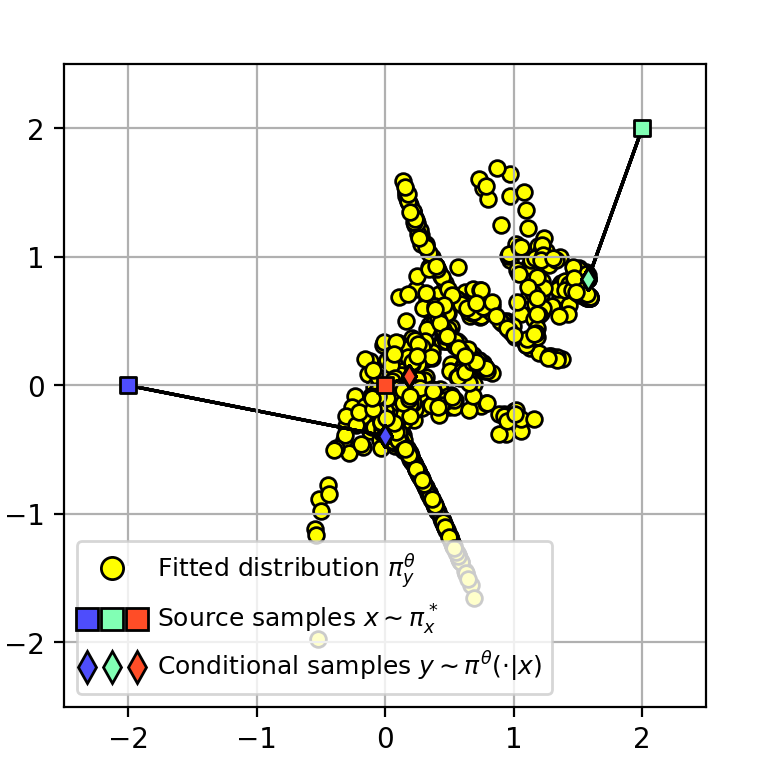}
        \caption{DCPEME. \label{fig:dcpeme}}
    \end{subfigure}
    \begin{subfigure}{0.19\textwidth}
        \centering
        \includegraphics[width=\linewidth]{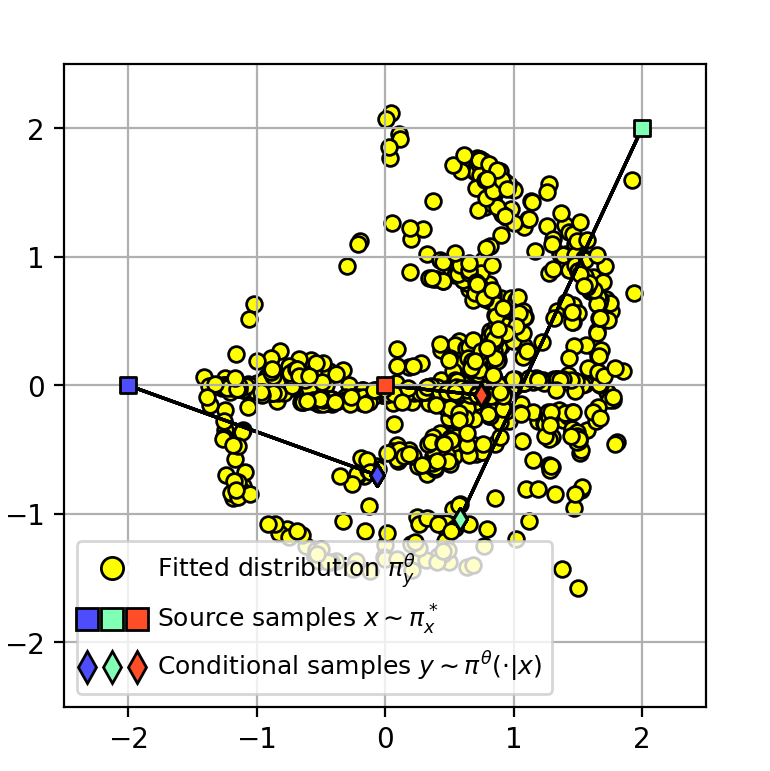}
        \caption{parOT. \label{fig:parot}}
    \end{subfigure}
    \begin{subfigure}{0.19\textwidth}
        \centering
        \includegraphics[width=\linewidth]{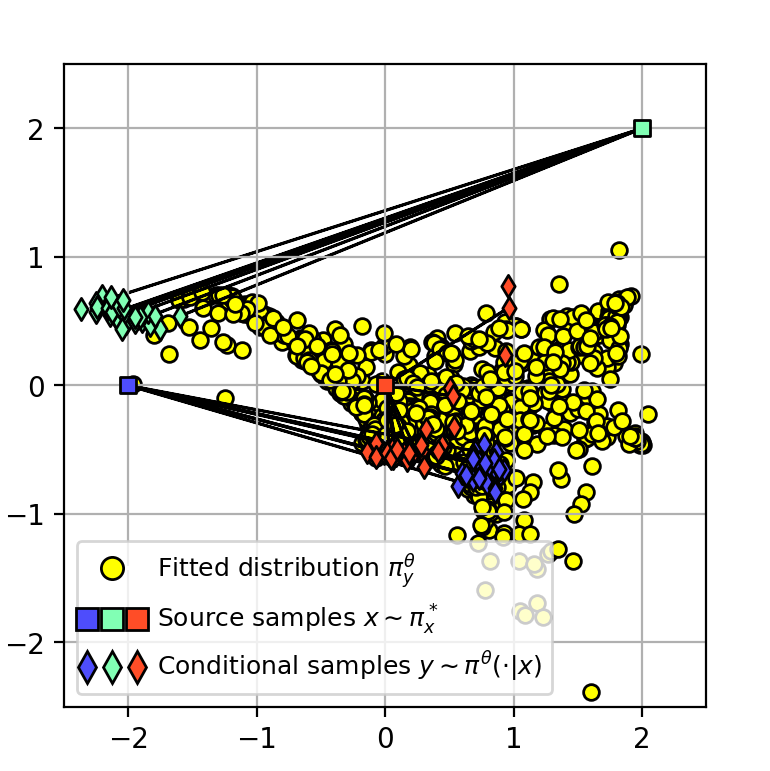}
        \caption{OTCS. \label{fig:otcs}}
    \end{subfigure}
    \begin{subfigure}{0.19\textwidth}
        \centering\includegraphics[width=\linewidth]{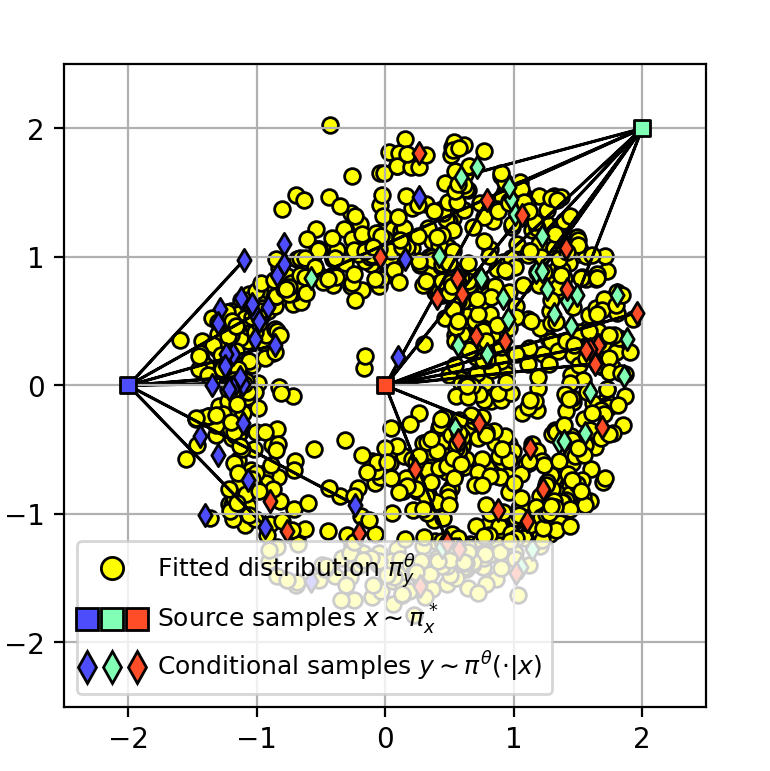}
        \caption{FSBM. \label{fig:fsbm}}
    \end{subfigure}
    \caption{Learned mapping on the $\textit{Gaussian}\to\textit{Swiss Roll}$ task for $P=128$ and $Q=R=1024$.\vspace{-4mm}}
    \label{fig:gauss_to_swiss_roll}
\end{figure*}

\subsection{Gaussian to Swiss Roll Mapping}\label{subsec:swiss_roll}

\textbf{Setup.} For illustration, we adapt the experimental setup from \citep{korotin2024light} to our purposes. We consider the task of learning conditional distributions from a Gaussian distribution $\margX$ to a Swiss Roll distribution $\margY$ (Figure \ref{fig:unpaired-data}), guided by paired samples (Figure \ref{fig:paired-data}) drawn from the ground-truth plan $\gtPlan$. The ground-truth plan $\pi^{*}$ is obtained from a mini-batch OT plan after solving the \textit{forward} OT problem with a specially designed cost that induces bi-modal conditionals $\gtPlan(\cdot\vert x)$. Specifically, the cost matrix is defined as $C = \min(C^{+\varphi}, C^{-\varphi})$, where $C^{\pm\varphi}$ contains pairwise $\ell_2$ distances between $x$ and $-y^{\pm\varphi}$, with $-y^{\pm\varphi}$ denoting the vector $-y$ rotated by an angle of $\varphi = \pm 90^\circ$. In other words, each $x \sim \margX$ is mapped to a point $y$ on the opposite side of the Swiss Roll, rotated either by $+\varphi$ or $-\varphi$ (Figure \ref{fig:gt-mapping}). Additional details on the generation of paired data are provided in Appendix~\ref{app:gauss-to-swiss_details}. We also consider an alternative analytically defined ground-truth plan of the form $y = T(x) + \xi$, where $\xi$ is an noise term, described in Appendix~\ref{subsec:additional-analytical-conditional-distribution}. We evaluate each method by its ability to recover these multi-modal conditional distributions. Unless stated otherwise, training uses $P=128$ paired samples together with $Q=R=1024$ unpaired samples. In Appendix~\ref{sec:ablation-study}, we study the effect of varying the amounts of paired and unpaired data, while Appendix~\ref{subsec:sensitivity-analysis} analyzes the sensitivity of the method to the choice of the parameters $N$ and $M$.

\textbf{Baselines.} We evaluate our method against several baselines (see Appendix \ref{sec:loss-function-details} for details):
\begin{enumerate}[leftmargin=*, topsep=0mm, itemsep=0.5mm, parsep=0.5mm]
    \item \textbf{Semi-supervised log-likelihood} methods \citep{atanov2019semi,izmailov2020semi}: CNF (SS) and CGMM (SS).
    \item \textit{Semi-supervised methods}:  Neural OT with pair-guided cost \citep[GNOT, Appendix E]{asadulaev2024neural}, Differentiable Cost-Parameterized Entropic Mapping Estimator \citep[DCPEME]{howard2024differentiable}, \citep[parOT]{panda2023semi}, \citep[OTCS]{gu2023optimal}, Feedback Schrödinger Bridge Matching \citep[FSBM]{theodoropoulos2024feedback}.
    \item \textit{Standard generative \& predictive models}: MLP regression with $\ell^{2}$ loss, Unconditional GAN with $\ell^{2}$ loss supplement \citep[UGAN+$\ell^{2}$]{goodfellow2014generative}, Conditional GAN \citep[CGAN]{mirza2014conditional}, Conditional Normalizing Flow \citep[CondNF]{winkler2019learning}.
\end{enumerate}

Note that some baselines can fully utilize both paired and unpaired data during training, while others rely solely on paired data. Refer to Table \ref{table-data-usage} for specifics on data usage.

\textbf{Metrics.} We evaluate the generated target distributions $\margY$ and conditional distributions $\gtPlan(\cdot\vert x)$ using the Maximum Mean Discrepancy (MMD) \citep{gretton2012kernel} and Sinkhorn divergence \citep{feydy2019interpolating} metrics approximating $\gW_2$. Definitions of these metrics together with implementation details are provided in Appendix~\ref{subsec:metric-computation-details}. The corresponding results are reported in Table~\ref{tab:swiss_roll_metrics} for the setting considered in this section ($P=128$, $Q=R=1024$), as well as for the almost infinite amount of samples setting with $P=Q=R=16\mathrm{k}$ discussed in Appendix~\ref{sec:16k-paired-data}.

\textbf{Discussion.} The results of the aforementioned methods are depicted in Figure \ref{fig:gauss_to_swiss_roll}. Clearly, the Regression model simply predicts the conditional mean $\E_{y \sim \gtPlan(\cdot|x)} y$, failing to capture the full distribution. The CGAN is unable to accurately learn the target distribution $ \margY $, while the UGAN+$\ell^{2}$ fails to capture the underlying conditional distribution, resulting in suboptimal performance. The CondNF model suffers from overfitting, likely due to the limited availability of paired data $ \XYtrain $. Methods GNOT, DCPEME, parOT learn deterministic mapping and therefore are unable to capture the conditional distribution. Similar to parOT, both OTCS and FSBM build on the idea of key-points but are designed for stochastic setup. However, these methods fail to capture bi-modal conditional mappings, presumably due to a biased objective introduced by the artificial cost function that enforces alignment with key-points. The CondNF (SS) does not provide improvement compared to CondNF, and CGMM (SS) model learns a degenerate solution, which is presumably due to the overfitting. As a sanity check, we evaluate all baselines using a large amount of paired data. Details are given in Appendix \ref{sec:16k-paired-data}. In fact, even in this case, almost all the methods fail to learn true $\pi^{*}(\cdot|x)$.

\subsection{Weather Prediction}\label{sec:weather-prediction}

Here we aim to evaluate our proposed approach on real-world data. We consider the \textit{weather prediction} dataset~\citep{malinin2021shifts, rubachev2024tabred}. 
The data is collected from weather stations across the world and weather forecast physical models. It consists of $94$ meteorological features, e.g., pressure, wind, humidity, etc., which are measured over a period of one year at different spatial locations.

\textbf{Setup.} Initially, the problem was formulated as the prediction and uncertainty estimation of the air temperature at a specific time and location. We expand this task to the probabilistic prediction of all meteorological features, thereby reducing reliance on measurement equipment in remote and difficult-to-access locations, e.g. the Polar regions.

\textbf{Metrics and baselines}. We evaluate the performance of our approach by calculating the \textit{log-likelihood} (LL) on the test target features. A natural baseline for this task is a probabilistic model that maximizes the likelihood of the target data. Thus, we implement an MLP that learns to predict the parameters of a mixture of Gaussians and is trained on the paired data only via the LL optimization \eqref{eq:loss_MLE}. We also compare with semi-supervised LL methods CGMM (SS) and CondNF (SS). For completeness, we also add standard generative models. These models are trained using the available paired and unpaired data. Note that GAN models do not provide the density estimation and log-likelihood can not be computed for them. Therefore, we report Conditional Fréchet Distance (CFD): for each test $x$, we compute the FID \citep[Eq. 6]{heusel2017gans} between predicted and true features $y$, then average over $x$.

\begin{table}[ht]
    \centering
    \resizebox{\linewidth}{!}{%
    \begin{tabular}{|c|c|c|c|c|c|c|c|}
    \hline
    & \textbf{Baseline} & \multicolumn{6}{c|}{\textbf{Ours}} \\ 
    \hline
    \backslashbox{\# Paired}{\# Unpaired} & $0$ & $5$ & $10$ & $50$ & $100$ & $250$ & $500$ \\
    \hline
    $5$ & \makecell{diverged} & 
    \makecell{$9.4$ \\ $\pm{.1}$} &
    \makecell{$14.2$ \\ $\pm{1.7}$} &
    \makecell{$15.47$ \\ $\pm{.02}$} & 
    \makecell{$16.6$ \\ $\pm{.0}$} &
    \makecell{$17.91$ \\ $\pm{.07}$} &
    \makecell{$9.40$ \\ $\pm{.03}$} \\
    \hline
    $10$ & \makecell{$0.4$ \\ $\pm{.2}$} & 
    \makecell{$9.48$ \\ $\pm{.02}$} &
    \makecell{$17.9$ \\ $\pm{.3}$} & 
    \makecell{$18.5$ \\ $\pm{.4}$} & 
    \makecell{$18.4$ \\ $\pm{.2}$} & 
    \makecell{$18.8$ \\ $\pm{.2}$} & 
    \makecell{$19.2$ \\ $\pm{.3}$} \\
    \hline
    $25$ & \makecell{$3.5$ \\ $\pm{.09}$} & 
    \makecell{$9.40$ \\ $\pm{.03}$} &
    \makecell{$18.3$ \\ $\pm{.06}$} & 
    \makecell{$18.7$ \\ $\pm{.2}$} & 
    \makecell{$18.8$ \\ $\pm{.07}$} & 
    \makecell{$19.5$ \\ $\pm{.1}$} & 
    \makecell{$19.8$ \\ $\pm{.1}$} \\
    \hline
    $50$ & \makecell{$6.4$ \\ $\pm{.05}$} & 
    \makecell{$9.47$ \\ $\pm{.01}$} &
    \makecell{$18.7$ \\ $\pm{.2}$} & 
    \makecell{$18.9$ \\ $\pm{.04}$} & 
    \makecell{$19.2$ \\ $\pm{.2}$} & 
    \makecell{$19.8$ \\ $\pm{.03}$} & 
    \makecell{$20.3$ \\ $\pm{.4}$} \\
    \hline
    $90$ & \makecell{$6.5$ \\ $\pm{.1}$} &
    \makecell{$9.30$ \\ $\pm{.05}$} &
    \makecell{$19$ \\ $\pm{.01}$} & 
    \makecell{$19.4$ \\ $\pm{.05}$} & 
    \makecell{$19.4$ \\ $\pm{.2}$} & 
    \makecell{$20.3$ \\ $\pm{.05}$} & 
    \makecell{$\textbf{20.5}$ \\ $\pm{.09}$} \\
    \hline
    \end{tabular}
    }
    \caption{The values of the test \textit{log-likelihood} $\uparrow$ on the \textit{weather prediction} dataset obtained for a different number of paired and unpaired training samples.\vspace{-4mm}}
    \label{exps:weather_prediction}
\end{table}
\begin{table}[ht]
    \centering
    \resizebox{\linewidth}{!}{%
        \begin{tabular}{|l|c|c|c|c|c|c|c|}
        \hline
        & \textbf{Ours} & \textbf{CGAN} & \textbf{UGAN+$\ell^{2}$} & \textbf{CondNF} & \textbf{Regression} & \textbf{CGMM (SS)} & \textbf{CondNF (SS)} \\ 
        \hline
        LL$\uparrow$ & \makecell{$\textbf{20.5}$ \\ $\pm{.09}$} & 
        N/A & 
        N/A & 
        \makecell{$1.29$ \\ $\pm{.03}$} & 
        N/A & 
        \makecell{$0.32$ \\ $\pm{.03}$} & 
        \makecell{$0.52$ \\ $\pm{.02}$} \\ 
        \hline
        CFD$\downarrow$ & \makecell{$\textbf{7.21}$ \\ $\pm{.04}$} & 
        \makecell{$15.79$ \\ $\pm{1.11}$} & 
        \makecell{$15.44$ \\ $\pm{1.89}$} & 
        \makecell{$18.72$ \\ $\pm{.09}$} & 
        \makecell{$8.29$ \\ $\pm{.04}$} & 
        \makecell{$\textbf{7.17}$ \\ $\pm{.07}$} & 
        \makecell{$28.5$ \\ $\pm{.5}$} \\ 
        \hline
        \end{tabular}
    }
    \caption{The values of the test \textit{Log-Likelihood} (LL) and \textit{Conditional Fréchet distance} (CFD) on the \textit{weather prediction} dataset of our approach and baselines (500 unpaired and 90 paired samples).\vspace{-4mm}}
    \label{exps:weather_prediction_fid}
\end{table}

\textbf{Discussion.} Tables~\ref{exps:weather_prediction} and~\ref{exps:weather_prediction_fid} summarize our findings. From Table~\ref{exps:weather_prediction}, the main observation is that even a small amount of unpaired data leads to substantial performance gains, underscoring the effectiveness of our semi-supervised formulation. Furthermore, Table~\ref{exps:weather_prediction_fid} shows that our method obtains the best LL and is statistically competitive on CFD; CGMM (SS) attains a slightly lower CFD within overlapping uncertainty. For more \uline{detailed discussion} regarding low-data regimes, see Appendix~\ref{app:weather_pred_details}.

\subsection{Image Translation via ALAE}\label{subsec:latent-generation}

\textbf{Setup.} In this section, following the setup from \citep{theodoropoulos2024feedback}, we demonstrate our method capabilities for image translation in the $512$-dimensional latent space of the ALAE encoder \citep{pidhorskyi2020adversarial}, pretrained on the 1024$\times$1024 FFHQ dataset \citep{karras2019style}. We consider two translation tasks: \textbf{(i)} Woman-to-Man translation and \textbf{(ii)} Old-to-Young translation. Similarly, we generate 2K paired samples using \citep{korotin2024light}.

\textbf{Baselines.} We used the publicly available FSBM \citep{theodoropoulos2024feedback} implementation from GitHub\footnotemark\label{fn:fsbm}\footnotetext{\faGithub\enspace\href{https://github.com/panostheo98/FSBM}{https://github.com/panostheo98/FSBM}}. However, due to reproducibility issues in the repository, we generated 2K paired samples ourselves via the procedure described in Appendix C.3 of the original paper.

\textbf{Metrics.} Metrics were computed using \texttt{TorchMetrics} \citep{torchmetrics2020} with a batch size of 128 and averaged over three trainings with different seeds (LPIPS \citep{zhang2018unreasonable}, FID \citep{heusel2017gans}, SSIM \citep{wang2004image}). All metrics measure similarity between the generated and target distributions and are averaged across three independent runs with different seeds. Results are reported rounded to the first significant digit.

\textbf{Discussion.} Qualitative results are presented in Figure~\ref{fig:woman-to-man}. Quantitative results, averaged across three independent runs, are reported in Table~\ref{tab:woman_to_man} and Table~\ref{tab:old_to_young}. Additional visual examples are provided in Figure~\ref{fig:woman-to-man-extended} and Figure~\ref{fig:old-to-young-extended}. Notably, our method achieves comparable performance to FSBM while being significantly more efficient: it requires only 3 minutes of training on an A100 GPU, whereas FSBM requires 5 hours on the same hardware. Additional implementation and training details are provided in Appendix~\ref{appendix:latent-generation_details}.

\begin{table}[ht]
    \centering
    \resizebox{\linewidth}{!}{
    \setlength{\tabcolsep}{8pt}
    \begin{tabular}{|l|c|c|c|}
        \hline
        Method & FID $\downarrow$ & SSIM $\uparrow$ & LPIPS $\downarrow$ \\
        \hline
        FSBM       & $10.2 \pm 0.6$ & $0.5237 \pm 0.0005$ & $0.5625 \pm 0.0003$ \\
        \hline
        Ours & $\mathbf{9.3 \pm 0.1}$ & $\mathbf{0.5315 \pm 0.0002}$ & $\mathbf{0.5531 \pm 0.0006}$ \\
        \hline
    \end{tabular}
    }
    \caption{Metrics for Woman-to-Man translation. \vspace{-4mm}}
    \label{tab:woman_to_man}
\end{table}

\begin{table}[ht]
    \centering
    \resizebox{\linewidth}{!}{
    \setlength{\tabcolsep}{8pt}
    \begin{tabular}{|l|c|c|c|}
        \hline
        Method & FID $\downarrow$ & SSIM $\uparrow$ & LPIPS $\downarrow$ \\
        \hline
        FSBM & $11.5 \pm 0.6$ & $0.5285 \pm 0.0008$ & $0.5628 \pm 0.0004$ \\
        \hline
        Ours & $\mathbf{9.4 \pm 0.2}$ & $\mathbf{0.5361 \pm 0.0004}$ & $\mathbf{0.5560 \pm 0.0005}$ \\
        \hline
    \end{tabular}}
    \caption{Metrics for Old-to-Young translation. \vspace{-4mm}}
    \label{tab:old_to_young}
\end{table}

\section{Discussion}\label{sec:discussion}

\textbf{Contributions \& Potential impact.} Our framework offers a simple, non-minimax objective that naturally integrates both paired and unpaired data. We expect that these advantages, together with the connection to entropic optimal transport, will encourage adoption in more advanced semi-supervised methods, including approaches based on diffusion Schrödinger bridges \citep{vargas2021solving, de2021diffusion, shi2024diffusion} and flow matching \citep{chen2025gaussian, balcerak2025energy}. 

The primary focus of this paper is semi-supervised learning for domain translation with continuous targets $y \in \mathbb{R}^{D_y}$. For completeness, Appendix~\ref{appendix:discrete-target} discusses an extension of the proposed loss to discrete targets $y \in \gY_K^{D_y}$, where $\gY_K = \{y_1,\dots,y_K\}$ denotes a finite set of categories. We view this discrete setting as an important direction for future research. In addition, Appendix~\ref{subsec:semi-supervised-classification} demonstrates the applicability of our method to semi-supervised classification tasks.

\textbf{Limitations \& Future Work.} A limitation of our method is its reliance on Gaussian Mixture parameterization (\wasyparagraph\ref{sec-light}), which may affect scalability. To address this, we provide a proof of concept for fully neural parameterizations of the cost and potential functions below, with a \uline{detailed discussion} in Appendix~\ref{sec-neural-parameterization}. These parameterizations can be integrated into our loss via energy-based modeling \citep{song2021train} and could, in principle, scale to large image domains \citep{schroder2023energy, yu2023learning, zhu2024learning}. A full investigation of such large-scale applications, however, lies beyond the scope of our methodological work.

\section*{Acknowledgments}
The work was supported by the grant for research centers in the field of AI provided by the Ministry of Economic Development of the Russian Federation in accordance with the agreement 000000C313925P4F0002 and the agreement №139-10-2025-033.
\vspace{-2mm}
\section*{Impact Statement}
This paper presents work whose goal is to advance the field of Machine
Learning. There are many potential societal consequences of our work, none
which we feel must be specifically highlighted here.

\bibliography{icml2026}

@article{sason2015reverse,
  title={On reverse Pinsker inequalities},
  author={Sason, Igal},
  journal={arXiv preprint arXiv:1503.07118},
  year={2015}
}

@article{acciaio2024designing,
  title={Designing universal causal deep learning models: The geometric (hyper) transformer},
  author={Acciaio, Beatrice and Kratsios, Anastasis and Pammer, Gudmund},
  journal={Mathematical Finance},
  volume={34},
  number={2},
  pages={671--735},
  year={2024},
  publisher={Wiley Online Library}
}

@inproceedings{geng2024improving,
  title={Improving Adversarial Energy-Based Model via Diffusion Process},
  author={Geng, Cong and Han, Tian and Jiang, Peng-Tao and Zhang, Hao and Chen, Jinwei and Hauberg, S{\o}ren and Li, Bo},
  booktitle={Forty-first International Conference on Machine Learning},
  year={2024}
}

@inproceedings{optuna2019,
	title={Optuna: A Next-generation Hyperparameter Optimization Framework},
	author={Akiba, Takuya and Sano, Shotaro and Yanase, Toshihiko and Ohta, Takeru and Koyama, Masanori},
	booktitle={Proceedings of the 25th {ACM} {SIGKDD} International Conference on Knowledge Discovery and Data Mining},
	year={2019}
}

@inproceedings{dvurechensky2018computational,
  title={Computational optimal transport: Complexity by accelerated gradient descent is better than by Sinkhorn’s algorithm},
  author={Dvurechensky, Pavel and Gasnikov, Alexander and Kroshnin, Alexey},
  booktitle={International conference on machine learning},
  pages={1367--1376},
  year={2018},
  organization={PMLR}
}

@article{altschuler2017near,
  title={Near-linear time approximation algorithms for optimal transport via Sinkhorn iteration},
  author={Altschuler, Jason and Niles-Weed, Jonathan and Rigollet, Philippe},
  journal={Advances in neural information processing systems},
  volume={30},
  year={2017}
}

@article{groppe2024lower,
  title={Lower complexity adaptation for empirical entropic optimal transport},
  author={Groppe, Michel and Hundrieser, Shayan},
  journal={Journal of Machine Learning Research},
  volume={25},
  number={344},
  pages={1--55},
  year={2024}
}

@article{weed2019sharp,
  title={Sharp asymptotic and finite-sample rates of convergence of empirical measures in Wasserstein distance},
  author={Weed, Jonathan and Bach, Francis},
  journal={Bernoulli},
  volume={25},
  number={4A},
  pages={2620--2648},
  year={2019},
  publisher={JSTOR}
}

@article{niles2022estimation,
  title={Estimation of Wasserstein distances in the spiked transport model},
  author={Niles-Weed, Jonathan and Rigollet, Philippe},
  journal={Bernoulli},
  volume={28},
  number={4},
  pages={2663--2688},
  year={2022},
  publisher={Bernoulli Society for Mathematical Statistics and Probability}
}

@inproceedings{feydy2019interpolating,
    title={Interpolating between Optimal Transport and MMD using Sinkhorn Divergences},
    author={Feydy, Jean and S{\'e}journ{\'e}, Thibault and Vialard, Fran{\c{c}}ois-Xavier and Amari, Shun-ichi and Trouve, Alain and Peyr{\'e}, Gabriel},
    booktitle={The 22nd International Conference on Artificial Intelligence and Statistics},
    pages={2681--2690},
    year={2019}
}

@article{gretton2012kernel,
  title={A kernel two-sample test},
  author={Gretton, Arthur and Borgwardt, Karsten M and Rasch, Malte J and Sch{\"o}lkopf, Bernhard and Smola, Alexander},
  journal={The journal of machine learning research},
  volume={13},
  number={1},
  pages={723--773},
  year={2012},
  publisher={JMLR. org}
}

@inproceedings{bottou1994comparison,
  title={Comparison of classifier methods: a case study in handwritten digit recognition},
  author={Bottou, L{\'e}on and Cortes, Corinna and Denker, John S and Drucker, Harris and Guyon, Isabelle and Jackel, Larry D and LeCun, Yann and Muller, Urs A and Sackinger, Edward and Simard, Patrice and others},
  booktitle={Proceedings of the 12th IAPR International Conference on Pattern Recognition, Vol. 3-Conference C: Signal Processing (Cat. No. 94CH3440-5)},
  volume={2},
  pages={77--82},
  year={1994},
  organization={IEEE}
}

@software{torchmetrics2020,
  author       = {Falcon, William and Skafte, Nicki and Schock, Justus and others},
  title        = {TorchMetrics: Machine learning metrics for PyTorch},
  year         = {2020},
  url          = {https://github.com/Lightning-AI/metrics},
  note         = {Version: latest},
}

@article{wang2004image,
  title={Image quality assessment: from error visibility to structural similarity},
  author={Wang, Zhou and Bovik, Alan C and Sheikh, Hamid R and Simoncelli, Eero P},
  journal={IEEE transactions on image processing},
  volume={13},
  number={4},
  pages={600--612},
  year={2004},
  publisher={IEEE}
}

@inproceedings{zhang2018unreasonable,
  title={The unreasonable effectiveness of deep features as a perceptual metric},
  author={Zhang, Richard and Isola, Phillip and Efros, Alexei A and Shechtman, Eli and Wang, Oliver},
  booktitle={Proceedings of the IEEE conference on computer vision and pattern recognition},
  pages={586--595},
  year={2018}
}

@inproceedings{karras2019style,
  title={A style-based generator architecture for generative adversarial networks},
  author={Karras, Tero and Laine, Samuli and Aila, Timo},
  booktitle={Proceedings of the IEEE/CVF conference on computer vision and pattern recognition},
  pages={4401--4410},
  year={2019}
}

@inproceedings{pidhorskyi2020adversarial,
  title={Adversarial latent autoencoders},
  author={Pidhorskyi, Stanislav and Adjeroh, Donald A and Doretto, Gianfranco},
  booktitle={Proceedings of the IEEE/CVF conference on computer vision and pattern recognition},
  pages={14104--14113},
  year={2020}
}

@inproceedings{nagumo2024density,
  title={Density ratio estimation with doubly strong robustness},
  author={Nagumo, Ryosuke and Fujisawa, Hironori},
  booktitle={Forty-first International Conference on Machine Learning},
  year={2024}
}

@inproceedings{
    wang2025projection,
    title={Projection Pursuit Density Ratio Estimation},
    author={Meilin Wang and Wei Huang and Mingming Gong and Zheng Zhang},
    booktitle={Forty-second International Conference on Machine Learning},
    year={2025},
    url={https://openreview.net/forum?id=MgNeJO0PcF}
}

@book{sugiyama2012density,
  title={Density ratio estimation in machine learning},
  author={Sugiyama, Masashi and Suzuki, Taiji and Kanamori, Takafumi},
  year={2012},
  publisher={Cambridge University Press}
}

@article{gretton2009covariate,
  title={Covariate shift by kernel mean matching},
  author={Gretton, Arthur and Smola, Alex and Huang, Jiayuan and Schmittfull, Marcel and Borgwardt, Karsten and Sch{\"o}lkopf, Bernhard and others},
  journal={Dataset shift in machine learning},
  volume={3},
  number={4},
  pages={5},
  year={2009}
}

@article{auffenberg2025unsupervised,
  title={Unsupervised Ground Metric Learning},
  author={Auffenberg, Janis and Bresch, Jonas and Melnyk, Oleh and Steidl, Gabriele},
  journal={arXiv preprint arXiv:2507.13094},
  year={2025}
}

@inproceedings{huizing2022unsupervised,
  title={Unsupervised ground metric learning using wasserstein singular vectors},
  author={Huizing, Geert-Jan and Cantini, Laura and Peyr{\'e}, Gabriel},
  booktitle={International Conference on Machine Learning},
  pages={9429--9443},
  year={2022},
  organization={PMLR}
}

@inproceedings{kerdoncuff2020metric,
  title={Metric learning in optimal transport for domain adaptation},
  author={Kerdoncuff, Tanguy and Emonet, R{\'e}mi and Sebban, Marc},
  booktitle={International joint conference on artificial intelligence},
  pages={2162--2168},
  year={2020},
  organization={IJCAI}
}

@inproceedings{jawanpuria2025riemannian,
  title={A Riemannian Approach to Ground Metric Learning for Optimal Transport},
  author={Jawanpuria, Pratik and Shi, Dai and Mishra, Bamdev and Gao, Junbin},
  booktitle={ICASSP 2025-2025 IEEE International Conference on Acoustics, Speech and Signal Processing (ICASSP)},
  pages={1--5},
  year={2025},
  organization={IEEE}
}

@article{heitz2021ground,
  title={Ground metric learning on graphs},
  author={Heitz, Matthieu and Bonneel, Nicolas and Coeurjolly, David and Cuturi, Marco and Peyr{\'e}, Gabriel},
  journal={Journal of Mathematical Imaging and Vision},
  volume={63},
  number={1},
  pages={89--107},
  year={2021},
  publisher={Springer}
}

@inproceedings{scarvelis2023riemannian,
  title={Riemannian metric learning via optimal transport},
  author={Scarvelis, Christopher and Solomon, Justin},
  booktitle={International Conference on Learning Representations},
  year={2023},
  organization={OpenReview}
}

@inproceedings{
    hagemann2024posterior,
    title={Posterior Sampling Based on Gradient Flows of the {MMD} with Negative Distance Kernel},
    author={Paul Hagemann and Johannes Hertrich and Fabian Altekr{\"u}ger and Robert Beinert and Jannis Chemseddine and Gabriele Steidl},
    booktitle={The Twelfth International Conference on Learning Representations},
    year={2024},
    url={https://openreview.net/forum?id=YrXHEb2qMb}
}

@article{ardizzone2019guided,
  title={Guided image generation with conditional invertible neural networks},
  author={Ardizzone, Lynton and L{\"u}th, Carsten and Kruse, Jakob and Rother, Carsten and K{\"o}the, Ullrich},
  journal={arXiv preprint arXiv:1907.02392},
  year={2019}
}

@article{chizat2018scaling,
  title={Scaling algorithms for unbalanced optimal transport problems},
  author={Chizat, Lenaic and Peyr{\'e}, Gabriel and Schmitzer, Bernhard and Vialard, Fran{\c{c}}ois-Xavier},
  journal={Mathematics of computation},
  volume={87},
  number={314},
  pages={2563--2609},
  year={2018}
}

@article{andrade2025learning,
  title={Learning from Samples: Inverse Problems over measures via Sharpened Fenchel-Young Losses},
  author={Andrade, Francisco and Peyr{\'e}, Gabriel and Poon, Clarice},
  journal={arXiv preprint arXiv:2505.07124},
  year={2025}
}

@inproceedings{genevay2019sample,
  title={Sample complexity of sinkhorn divergences},
  author={Genevay, Aude and Chizat, L{\'e}naic and Bach, Francis and Cuturi, Marco and Peyr{\'e}, Gabriel},
  booktitle={The 22nd international conference on artificial intelligence and statistics},
  pages={1574--1583},
  year={2019},
  organization={PMLR}
}

@article{marino2020optimal,
  title={An optimal transport approach for the Schr{\"o}dinger bridge problem and convergence of Sinkhorn algorithm},
  author={Marino, Simone Di and Gerolin, Augusto},
  journal={Journal of Scientific Computing},
  volume={85},
  number={2},
  pages={27},
  year={2020},
  publisher={Springer}
}

@inproceedings{ren2023ugc,
  title={Ugc: Unified gan compression for efficient image-to-image translation},
  author={Ren, Yuxi and Wu, Jie and Zhang, Peng and Zhang, Manlin and Xiao, Xuefeng and He, Qian and Wang, Rui and Zheng, Min and Pan, Xin},
  booktitle={Proceedings of the IEEE/CVF International Conference on Computer Vision},
  pages={17281--17291},
  year={2023}
}

@article{chen2023semi,
  title={Semi-supervised Cycle-GAN for face photo-sketch translation in the wild},
  author={Chen, Chaofeng and Liu, Wei and Tan, Xiao and Wong, Kwan-Yee K},
  journal={Computer Vision and Image Understanding},
  volume={235},
  pages={103775},
  year={2023},
  publisher={Elsevier}
}

@inproceedings{
    ksenofontov2025categorical,
    title={Categorical Schr\"odinger Bridge Matching},
    author={Grigoriy Ksenofontov and Alexander Korotin},
    booktitle={Forty-second International Conference on Machine Learning},
    year={2025},
    url={https://openreview.net/forum?id=RBly0nOr2h}
}

@article{austin2021structured,
  title={Structured denoising diffusion models in discrete state-spaces},
  author={Austin, Jacob and Johnson, Daniel D and Ho, Jonathan and Tarlow, Daniel and Van Den Berg, Rianne},
  journal={Advances in neural information processing systems},
  volume={34},
  pages={17981--17993},
  year={2021}
}

@article{campbell2022continuous,
  title={A continuous time framework for discrete denoising models},
  author={Campbell, Andrew and Benton, Joe and De Bortoli, Valentin and Rainforth, Thomas and Deligiannidis, George and Doucet, Arnaud},
  journal={Advances in Neural Information Processing Systems},
  volume={35},
  pages={28266--28279},
  year={2022}
}

@article{gat2024discrete,
  title={Discrete flow matching},
  author={Gat, Itai and Remez, Tal and Shaul, Neta and Kreuk, Felix and Chen, Ricky TQ and Synnaeve, Gabriel and Adi, Yossi and Lipman, Yaron},
  journal={Advances in Neural Information Processing Systems},
  volume={37},
  pages={133345--133385},
  year={2024}
}

@inproceedings{
    holderrieth2025leaps,
    title={{LEAPS}: A discrete neural sampler via locally equivariant networks},
    author={Peter Holderrieth and Michael Samuel Albergo and Tommi Jaakkola},
    booktitle={Forty-second International Conference on Machine Learning},
    year={2025},
    url={https://openreview.net/forum?id=Hq2RniQAET}
}

@article{skibbe2021semi,
  title={Semi-supervised Image-to-Image translation for robust image registration},
  author={Skibbe, Henrik and Watakabe, Akiya and Rachmadi, F and Gutierrez, Carlos Enrique and Nakae, Ken and Yamamori, T},
  journal={Medical Imaging with Deep Learning (MIDL)},
  year={2021},
  url={https://openreview.net/forum?id=GOhAojdaLg}
}

@article{mo2025review,
  title={Review of AI image enhancement techniques for in-vehicle vision systems under adverse weather conditions},
  author={Mo, Tiande and Zheng, Siqian and Chan, Wai-Yat and Yang, Renhua},
  journal={World Electric Vehicle Journal},
  volume={16},
  number={2},
  pages={72},
  year={2025},
  publisher={MDPI}
}

@article{hou2025semi,
  title={Semi-supervised dehazing method based on image enhancement and multi-negative contrastive auxiliary learning},
  author={Hou, Qianwen and Wang, Shilong and Liu, Jianlei},
  journal={International Journal of Machine Learning and Cybernetics},
  pages={1--14},
  year={2025},
  publisher={Springer}
}

@article{li2025semi,
  title={Semi-Supervised Image-Dehazing Network Based on a Trusted Library},
  author={Li, Wan and Chang, Chenyang},
  journal={Electronics},
  volume={14},
  number={15},
  pages={2956},
  year={2025},
  publisher={MDPI}
}

@article{liu2024semi,
  title={Semi-supervised single image dehazing based on dual-teacher-student network with knowledge transfer},
  author={Liu, Jianlei and Hou, Qianwen and Wang, Shilong and Zhang, Xueqing},
  journal={Signal, Image and Video Processing},
  volume={18},
  number={6},
  pages={5073--5087},
  year={2024},
  publisher={Springer}
}

@article{cui2024scl,
  title={SCL-Dehaze: Toward Real-World Image Dehazing via Semi-Supervised Codebook Learning},
  author={Cui, Tong and Dai, Qingyue and Zhang, Meng and Li, Kairu and Ji, Xiaofei},
  journal={Electronics},
  volume={13},
  number={19},
  pages={3826},
  year={2024},
  publisher={MDPI}
}

@article{li2019semi,
  title={Semi-supervised image dehazing},
  author={Li, Lerenhan and Dong, Yunlong and Ren, Wenqi and Pan, Jinshan and Gao, Changxin and Sang, Nong and Yang, Ming-Hsuan},
  journal={IEEE Transactions on Image Processing},
  volume={29},
  pages={2766--2779},
  year={2019},
  publisher={IEEE}
}

@inproceedings{jiang2023scenimefy,
  title={Scenimefy: Learning to craft anime scene via semi-supervised image-to-image translation},
  author={Jiang, Yuxin and Jiang, Liming and Yang, Shuai and Loy, Chen Change},
  booktitle={Proceedings of the IEEE/CVF international conference on computer vision},
  pages={7357--7367},
  year={2023}
}

@inproceedings{wang2023semi,
  title={Semi-supervised parametric real-world image harmonization},
  author={Wang, Ke and Gharbi, Micha{\"e}l and Zhang, He and Xia, Zhihao and Shechtman, Eli},
  booktitle={Proceedings of the IEEE/CVF Conference on Computer Vision and Pattern Recognition},
  pages={5927--5936},
  year={2023}
}

@article{leonard2014survey,
  title={A survey of the Schr{\"o}dinger problem and some of its connections with optimal transport},
  author={L{\'e}onard, Christian},
  journal={Discrete \& Continuous Dynamical Systems-A},
  volume={34},
  number={4},
  pages={1533--1574},
  year={2014},
  publisher={American Institute of Mathematical Sciences (AIMS)}
}

@article{monge1781memoire,
  title={M{\'e}moire sur la th{\'e}orie des d{\'e}blais et des remblais},
  author={Monge, Gaspard},
  journal={Mem. Math. Phys. Acad. Royale Sci.},
  pages={666--704},
  year={1781}
}

@article{balcerak2025energy,
  title={Energy Matching: Unifying Flow Matching and Energy-Based Models for Generative Modeling},
  author={Balcerak, Michal and Amiranashvili, Tamaz and Shit, Suprosanna and Terpin, Antonio and Kaltenbach, Sebastian and Koumoutsakos, Petros and Menze, Bjoern},
  journal={arXiv preprint arXiv:2504.10612},
  year={2025}
}

@article{chen2025gaussian,
  title={Gaussian Mixture Flow Matching Models},
  author={Chen, Hansheng and Zhang, Kai and Tan, Hao and Xu, Zexiang and Luan, Fujun and Guibas, Leonidas and Wetzstein, Gordon and Bi, Sai},
  journal={arXiv preprint arXiv:2504.05304},
  year={2025}
}

@inproceedings{
    zhu2024learning,
    title={Learning Energy-Based Models by Cooperative Diffusion Recovery Likelihood},
    author={Yaxuan Zhu and Jianwen Xie and Ying Nian Wu and Ruiqi Gao},
    booktitle={The Twelfth International Conference on Learning Representations},
    year={2024},
    url={https://openreview.net/forum?id=AyzkDpuqcl}
}

@article{schroder2023energy,
  title={Energy discrepancies: a score-independent loss for energy-based models},
  author={Schr{\"o}der, Tobias and Ou, Zijing and Lim, Jen and Li, Yingzhen and Vollmer, Sebastian and Duncan, Andrew},
  journal={Advances in Neural Information Processing Systems},
  volume={36},
  pages={45300--45338},
  year={2023}
}

@article{yu2023learning,
  title={Learning energy-based prior model with diffusion-amortized mcmc},
  author={Yu, Peiyu and Zhu, Yaxuan and Xie, Sirui and Ma, Xiaojian Shawn and Gao, Ruiqi and Zhu, Song-Chun and Wu, Ying Nian},
  journal={Advances in Neural Information Processing Systems},
  volume={36},
  pages={42717--42747},
  year={2023}
}

@article{vargas2021solving,
  title={Solving schr{\"o}dinger bridges via maximum likelihood},
  author={Vargas, Francisco and Thodoroff, Pierre and Lamacraft, Austen and Lawrence, Neil},
  journal={Entropy},
  volume={23},
  number={9},
  pages={1134},
  year={2021},
  publisher={MDPI}
}

@article{gushchin2023entropic,
  title={Entropic neural optimal transport via diffusion processes},
  author={Gushchin, Nikita and Kolesov, Alexander and Korotin, Alexander and Vetrov, Dmitry P and Burnaev, Evgeny},
  journal={Advances in Neural Information Processing Systems},
  volume={36},
  pages={75517--75544},
  year={2023}
}

@article{gushchin2024adversarial,
  title={Adversarial schr{\"o}dinger bridge matching},
  author={Gushchin, Nikita and Selikhanovych, Daniil and Kholkin, Sergei and Burnaev, Evgeny and Korotin, Alexander},
  journal={Advances in Neural Information Processing Systems},
  volume={37},
  pages={89612--89651},
  year={2024}
}

@inproceedings{shi2023understanding,
  title={Understanding and generalizing contrastive learning from the inverse optimal transport perspective},
  author={Shi, Liangliang and Zhang, Gu and Zhen, Haoyu and Fan, Jintao and Yan, Junchi},
  booktitle={International conference on machine learning},
  pages={31408--31421},
  year={2023},
  organization={PMLR}
}

@article{de2021diffusion,
  title={Diffusion schr{\"o}dinger bridge with applications to score-based generative modeling},
  author={De Bortoli, Valentin and Thornton, James and Heng, Jeremy and Doucet, Arnaud},
  journal={Advances in Neural Information Processing Systems},
  volume={34},
  pages={17695--17709},
  year={2021}
}

@article{chan2025inverse,
  title={Inverse optimization: Theory and applications},
  author={Chan, Timothy CY and Mahmood, Rafid and Zhu, Ian Yihang},
  journal={Operations Research},
  volume={73},
  number={2},
  pages={1046--1074},
  year={2025},
  publisher={INFORMS}
}

@article{carbone2023efficient,
  title={Efficient training of energy-based models using jarzynski equality},
  author={Carbone, Davide and Hua, Mengjian and Coste, Simon and Vanden-Eijnden, Eric},
  journal={Advances in Neural Information Processing Systems},
  volume={36},
  pages={52583--52614},
  year={2023}
}

@inproceedings{ronneberger2015u,
  title={U-net: Convolutional networks for biomedical image segmentation},
  author={Ronneberger, Olaf and Fischer, Philipp and Brox, Thomas},
  booktitle={Medical image computing and computer-assisted intervention--MICCAI 2015: 18th international conference, Munich, Germany, October 5-9, 2015, proceedings, part III 18},
  pages={234--241},
  year={2015},
  organization={Springer}
}

@article{arjovsky2019invariant,
  title={Invariant risk minimization},
  author={Arjovsky, Martin and Bottou, L{\'e}on and Gulrajani, Ishaan and Lopez-Paz, David},
  journal={arXiv preprint arXiv:1907.02893},
  year={2019}
}

@inproceedings{joblove1978color,
  title={Color spaces for computer graphics},
  author={Joblove, George H and Greenberg, Donald},
  booktitle={Proceedings of the 5th annual conference on Computer graphics and interactive techniques},
  pages={20--25},
  year={1978}
}

@inproceedings{oza2019semi,
  title={Semi-supervised image-to-image translation},
  author={Oza, Manan and Vaghela, Himanshu and Bagul, Sudhir},
  booktitle={2019 International Conference of Artificial Intelligence and Information Technology (ICAIIT)},
  pages={16--20},
  year={2019},
  organization={IEEE}
}

@article{theodoropoulos2024feedback,
  title={Feedback Schr\"odinger Bridge Matching},
  author={Theodoropoulos, Panagiotis and Komianos, Nikolaos and Pacelli, Vincent and Liu, Guan-Horng and Theodorou, Evangelos A},
  journal={arXiv preprint arXiv:2410.14055},
  year={2024}
}

@article{carbone2024hitchhiker,
  title={Hitchhiker's guide on Energy-Based Models: a comprehensive review on the relation with other generative models, sampling and statistical physics},
  author={Carbone, Davide},
  journal={arXiv preprint arXiv:2406.13661},
  year={2024}
}

@article{song2021train,
  title={How to train your energy-based models},
  author={Song, Yang and Kingma, Diederik P},
  journal={arXiv preprint arXiv:2101.03288},
  year={2021}
}

@article{heusel2017gans,
  title={Gans trained by a two time-scale update rule converge to a local nash equilibrium},
  author={Heusel, Martin and Ramsauer, Hubert and Unterthiner, Thomas and Nessler, Bernhard and Hochreiter, Sepp},
  journal={Advances in neural information processing systems},
  volume={30},
  year={2017}
}

@inproceedings{gao2021learning,
  title={Learning Energy-Based Models by Diffusion Recovery Likelihood},
  author={Gao, Ruiqi and Song, Yang and Poole, Ben and Wu, Ying Nian and Kingma, Diederik P},
  booktitle={International Conference on Learning Representations},
  year={2021}
}

@inproceedings{du2021improved,
  title={Improved Contrastive Divergence Training of Energy Based Models},
  author={Du, Yilun and Li, Shuang and Tenenbaum, B. Joshua and Mordatch, Igor},
  booktitle={Proceedings of the 38th International Conference on Machine
Learning (ICML-21)},
  year={2021}
}

@article{du2019implicit,
  title={Implicit generation and modeling with energy based models},
  author={Du, Yilun and Mordatch, Igor},
  journal={Advances in Neural Information Processing Systems},
  volume={32},
  year={2019}
}

@article{roberts1996exponential,
  title={Exponential Convergence of Langevin Distributions and Their Discrete Approximations},
  author={Roberts, Gareth O and Tweedie, Richard L},
  journal={Bernoulli},
  pages={341--363},
  year={1996},
  publisher={JSTOR}
}

@article{andrieu2003introduction,
  title={An introduction to MCMC for machine learning},
  author={Andrieu, Christophe and De Freitas, Nando and Doucet, Arnaud and Jordan, Michael I},
  journal={Machine learning},
  volume={50},
  pages={5--43},
  year={2003},
  publisher={Springer}
}

@inproceedings{asadulaev2024neural,
  title={Neural Optimal Transport with General Cost Functionals},
  author={Asadulaev, Arip and Korotin, Alexander and Egiazarian, Vage and Mokrov, Petr and Burnaev, Evgeny},
  booktitle={The Twelfth International Conference on Learning Representations},
  year={2024}
}

@inproceedings{
tang2024residualconditioned,
title={Residual-Conditioned Optimal Transport: Towards Structure-Preserving Unpaired and Paired Image Restoration},
author={Xiaole Tang and Xin Hu and Xiang Gu and Jian Sun},
booktitle={Forty-first International Conference on Machine Learning},
year={2024},
url={https://openreview.net/forum?id=irBHPlknxP}
}

@inproceedings{cuturi2023monge,
  title={Monge, Bregman and Occam: Interpretable Optimal Transport in High-Dimensions with Feature-Sparse Maps},
  author={Cuturi, Marco and Klein, Michal and Ablin, Pierre},
  booktitle={International Conference on Machine Learning},
  pages={6671--6682},
  year={2023},
  organization={PMLR}
}

@article{gu2022keypoint,
  title={Keypoint-guided optimal transport with applications in heterogeneous domain adaptation},
  author={Gu, Xiang and Yang, Yucheng and Zeng, Wei and Sun, Jian and Xu, Zongben},
  journal={Advances in Neural Information Processing Systems},
  volume={35},
  pages={14972--14985},
  year={2022}
}

@article{gu2023optimal,
  title={Optimal transport-guided conditional score-based diffusion model},
  author={Gu, Xiang and Yang, Liwei and Sun, Jian and Xu, Zongben},
  journal={Advances in Neural Information Processing Systems},
  volume={36},
  pages={36540--36552},
  year={2023}
}

@inproceedings{paavilainen2021bridging,
  title={Bridging the gap between paired and unpaired medical image translation},
  author={Paavilainen, Pauliina and Akram, Saad Ullah and Kannala, Juho},
  booktitle={MICCAI Workshop on Deep Generative Models},
  pages={35--44},
  year={2021},
  organization={Springer}
}

@article{goodfellow2014generative,
  title={Generative adversarial nets},
  author={Goodfellow, Ian and Pouget-Abadie, Jean and Mirza, Mehdi and Xu, Bing and Warde-Farley, David and Ozair, Sherjil and Courville, Aaron and Bengio, Yoshua},
  journal={Advances in neural information processing systems},
  volume={27},
  year={2014}
}

@book{murphy2012machine,
  title={Machine learning: a probabilistic perspective},
  author={Murphy, Kevin P},
  year={2012},
  publisher={MIT press}
}

@article{lecun2006tutorial,
  title={A tutorial on energy-based learning},
  author={LeCun, Yann and Chopra, Sumit and Hadsell, Raia and Ranzato, M and Huang, Fujie and others},
  journal={Predicting structured data},
  volume={1},
  number={0},
  year={2006}
}

@article{pinkus1999approximation,
  title={Approximation theory of the MLP model in neural networks},
  author={Pinkus, Allan},
  journal={Acta numerica},
  volume={8},
  pages={143--195},
  year={1999},
  publisher={Cambridge University Press}
}

@article{arabpour2024low,
  title={Low-dimensional approximations of the conditional law of Volterra processes: a non-positive curvature approach},
  author={Arabpour, Reza and Armstrong, John and Galimberti, Luca and Kratsios, Anastasis and Livieri, Giulia},
  journal={arXiv preprint arXiv:2405.20094},
  year={2024}
}

@article{papamakarios2021normalizing,
  title={Normalizing flows for probabilistic modeling and inference},
  author={Papamakarios, George and Nalisnick, Eric and Rezende, Danilo Jimenez and Mohamed, Shakir and Lakshminarayanan, Balaji},
  journal={Journal of Machine Learning Research},
  volume={22},
  number={57},
  pages={1--64},
  year={2021}
}

@inproceedings{vasluianu2021shadow,
  title={Shadow removal with paired and unpaired learning},
  author={Vasluianu, Florin-Alexandru and Romero, Andr{\'e}s and Van Gool, Luc and Timofte, Radu},
  booktitle={Proceedings of the IEEE/CVF Conference on Computer Vision and Pattern Recognition},
  pages={826--835},
  year={2021}
}

@inproceedings{izmailov2020semi,
  title={Semi-supervised learning with normalizing flows},
  author={Izmailov, Pavel and Kirichenko, Polina and Finzi, Marc and Wilson, Andrew Gordon},
  booktitle={International conference on machine learning},
  pages={4615--4630},
  year={2020},
  organization={PMLR}
}

@article{atanov2019semi,
  title={Semi-Conditional Normalizing Flows for Semi-Supervised Learning},
  author={Atanov, Andrei and Volokhova, Alexandra and Ashukha, Arsenii and Sosnovik, Ivan and Vetrov, Dmitry},
  journal={arXiv preprint arXiv:1905.00505},
  year={2019}
}

@inproceedings{yang2020learning,
  title={Learning from paired and unpaired data: Alternately trained CycleGAN for near infrared image colorization},
  author={Yang, Zaifeng and Chen, Zhenghua},
  booktitle={2020 IEEE International Conference on Visual Communications and Image Processing (VCIP)},
  pages={467--470},
  year={2020},
  organization={IEEE}
}

@article{jin2019deep,
  title={Deep CT to MR synthesis using paired and unpaired data},
  author={Jin, Cheng-Bin and Kim, Hakil and Liu, Mingjie and Jung, Wonmo and Joo, Seongsu and Park, Eunsik and Ahn, Young Saem and Han, In Ho and Lee, Jae Il and Cui, Xuenan},
  journal={Sensors},
  volume={19},
  number={10},
  pages={2361},
  year={2019},
  publisher={MDPI}
}

@article{howard2024differentiable,
  title={Differentiable Cost-Parameterized Monge Map Estimators},
  author={Howard, Samuel and Deligiannidis, George and Rebeschini, Patrick and Thornton, James},
  journal={arXiv preprint arXiv:2406.08399},
  year={2024}
}

@article{panda2023semi,
  title={Semi-supervised Learning of Pushforwards For Domain Translation \& Adaptation},
  author={Panda, Nishant and Klein, Natalie and Yang, Dominic and Gasda, Patrick and Oyen, Diane},
  journal={arXiv preprint arXiv:2304.08673},
  year={2023}
}

@article{galichon2022cupid,
  title={Cupid’s invisible hand: Social surplus and identification in matching models},
  author={Galichon, Alfred and Salani{\'e}, Bernard},
  journal={The Review of Economic Studies},
  volume={89},
  number={5},
  pages={2600--2629},
  year={2022},
  publisher={Oxford University Press}
}

@article{ma2020learning,
  title={Learning cost functions for optimal transport},
  author={Ma, Shaojun and Sun, Haodong and Ye, Xiaojing and Zha, Hongyuan and Zhou, Haomin},
  journal={arXiv preprint arXiv:2002.09650},
  year={2020}
}

@article{stuart2020inverse,
  title={Inverse optimal transport},
  author={Stuart, Andrew M and Wolfram, Marie-Therese},
  journal={SIAM Journal on Applied Mathematics},
  volume={80},
  number={1},
  pages={599--619},
  year={2020},
  publisher={SIAM}
}

@article{li2019learning,
  title={Learning to match via inverse optimal transport},
  author={Li, Ruilin and Ye, Xiaojing and Zhou, Haomin and Zha, Hongyuan},
  journal={Journal of machine learning research},
  volume={20},
  number={80},
  pages={1--37},
  year={2019}
}

@article{andrade2023sparsistency,
  title={Sparsistency for inverse optimal transport},
  author={Andrade, Francisco and Peyr{\'e}, Gabriel and Poon, Clarice},
  journal={arXiv preprint arXiv:2310.05461},
  year={2023}
}

@article{cuturi2014ground,
  title={Ground metric learning},
  author={Cuturi, Marco and Avis, David},
  journal={The Journal of Machine Learning Research},
  volume={15},
  number={1},
  pages={533--564},
  year={2014},
  publisher={JMLR. org}
}

@article{dupuy2019estimating,
  title={Estimating matching affinity matrices under low-rank constraints},
  author={Dupuy, Arnaud and Galichon, Alfred and Sun, Yifei},
  journal={Information and Inference: A Journal of the IMA},
  volume={8},
  number={4},
  pages={677--689},
  year={2019},
  publisher={Oxford University Press}
}

@article{shi2024diffusion,
  title={Diffusion Schr{\"o}dinger bridge matching},
  author={Shi, Yuyang and De Bortoli, Valentin and Campbell, Andrew and Doucet, Arnaud},
  journal={Advances in Neural Information Processing Systems},
  volume={36},
  year={2024}
}

@inproceedings{gushchin2024light,
  title={Light and Optimal Schr{\"o}dinger Bridge Matching},
  author={Gushchin, Nikita and Kholkin, Sergei and Burnaev, Evgeny and Korotin, Alexander},
  booktitle={Forty-first International Conference on Machine Learning},
  year={2024}
}

@book{barber2012bayesian,
  title={Bayesian reasoning and machine learning},
  author={Barber, David},
  year={2012},
  publisher={Cambridge University Press}
}

@book{bishop2023deep,
  title={Deep learning: Foundations and concepts},
  author={Bishop, Christopher M and Bishop, Hugh},
  year={2023},
  publisher={Springer Nature}
}

@inproceedings{alemi2018fixing,
  title={Fixing a broken ELBO},
  author={Alemi, Alexander and Poole, Ben and Fischer, Ian and Dillon, Joshua and Saurous, Rif A and Murphy, Kevin},
  booktitle={International conference on machine learning},
  pages={159--168},
  year={2018},
  organization={PMLR}
}

@article{flamary2021pot,
  title={Pot: Python optimal transport},
  author={Flamary, R{\'e}mi and Courty, Nicolas and Gramfort, Alexandre and Alaya, Mokhtar Z and Boisbunon, Aur{\'e}lie and Chambon, Stanislas and Chapel, Laetitia and Corenflos, Adrien and Fatras, Kilian and Fournier, Nemo and others},
  journal={Journal of Machine Learning Research},
  volume={22},
  number={78},
  pages={1--8},
  year={2021}
}

@article{mirza2014conditional,
  author       = {Mehdi Mirza and
                  Simon Osindero},
  title        = {Conditional Generative Adversarial Nets},
  journal      = {CoRR},
  volume       = {abs/1411.1784},
  year         = {2014},
  url          = {http://arxiv.org/abs/1411.1784},
  eprinttype    = {arXiv},
  eprint       = {1411.1784},
  timestamp    = {Mon, 13 Aug 2018 16:48:15 +0200},
  biburl       = {https://dblp.org/rec/journals/corr/MirzaO14.bib},
  bibsource    = {dblp computer science bibliography, https://dblp.org}
}

@article{agarap2018deep,
  title={Deep learning using rectified linear units (relu)},
  author={Agarap, Abien Fred},
  journal={arXiv preprint arXiv:1803.08375},
  year={2018}
}

@article{kingma2014adam,
  title={Adam: A method for stochastic optimization},
  author={Kingma, Diederik P},
  journal={arXiv preprint arXiv:1412.6980},
  year={2014}
}

@inproceedings{
dinh2017density,
title={Density estimation using Real {NVP}},
author={Laurent Dinh and Jascha Sohl-Dickstein and Samy Bengio},
booktitle={International Conference on Learning Representations},
year={2017},
url={https://openreview.net/forum?id=HkpbnH9lx}
}

@inproceedings{ansel2024pytorch2,
  author = {Jason Ansel and Edward Yang and Horace He and Natalia Gimelshein and Animesh Jain and Michael Voznesensky and Bin Bao and Peter Bell and David Berard and Evgeni Burovski and Geeta Chauhan and Anjali Chourdia and Will Constable and Alban Desmaison and Zachary DeVito and Elias Ellison and Will Feng and Jiong Gong and Michael Gschwind and Brian Hirsh and Sherlock Huang and Kshiteej Kalambarkar and Laurent Kirsch and Michael Lazos and Mario Lezcano and Yanbo Liang and Jason Liang and Yinghai Lu and CK Luk and Bert Maher and Yunjie Pan and Christian Puhrsch and Matthias Reso and Mark Saroufim and Marcos Yukio Siraichi and Helen Suk and Michael Suo and Phil Tillet and Eikan Wang and Xiaodong Wang and William Wen and Shunting Zhang and Xu Zhao and Keren Zhou and Richard Zou and Ajit Mathews and Gregory Chanan and Peng Wu and Soumith Chintala},
  title = {PyTorch 2: Faster Machine Learning Through Dynamic Python Bytecode Transformation and Graph Compilation},
  booktitle = {29th ACM International Conference on Architectural Support for Programming Languages and Operating Systems, Volume 2 (ASPLOS '24)},
  year = {2024},
  month = {April},
  publisher = {ACM},
  doi = {10.1145/3620665.3640366},
  url = {https://pytorch.org/assets/pytorch2-2.pdf}
}

@inproceedings{yuan2018unsupervised,
  title={Unsupervised image super-resolution using cycle-in-cycle generative adversarial networks},
  author={Yuan, Yuan and Liu, Siyuan and Zhang, Jiawei and Zhang, Yongbing and Dong, Chao and Lin, Liang},
  booktitle={Proceedings of the IEEE conference on computer vision and pattern recognition workshops},
  pages={701--710},
  year={2018}
}

@inproceedings{korotin2024light,
  title={Light Schr{\"o}dinger Bridge},
  author={Korotin, Alexander and Gushchin, Nikita and Burnaev, Evgeny},
  booktitle={The Twelfth International Conference on Learning Representations},
  year={2024}
}

@inproceedings{moriakov2020kernel,
title={Kernel of CycleGAN as a principal homogeneous space},
author={Nikita Moriakov and Jonas Adler and Jonas Teuwen},
booktitle={International Conference on Learning Representations},
year={2020},
url={https://openreview.net/forum?id=B1eWOJHKvB}
}

@article{winkler2019learning,
  author       = {Christina Winkler and
                  Daniel E. Worrall and
                  Emiel Hoogeboom and
                  Max Welling},
  title        = {Learning Likelihoods with Conditional Normalizing Flows},
  journal      = {CoRR},
  volume       = {abs/1912.00042},
  year         = {2019},
  url          = {http://arxiv.org/abs/1912.00042},
  eprinttype    = {arXiv},
  eprint       = {1912.00042},
  timestamp    = {Thu, 02 Jan 2020 18:08:18 +0100},
  biburl       = {https://dblp.org/rec/journals/corr/abs-1912-00042.bib},
  bibsource    = {dblp computer science bibliography, https://dblp.org}
}

@inproceedings{zhu2017unpaired,
  title={Unpaired image-to-image translation using cycle-consistent adversarial networks},
  author={Zhu, Jun-Yan and Park, Taesung and Isola, Phillip and Efros, Alexei A},
  booktitle={Proceedings of the IEEE international conference on computer vision},
  pages={2223--2232},
  year={2017}
}

@article{gushchin2023building,
  title={Building the bridge of schr{\"o}dinger: A continuous entropic optimal transport benchmark},
  author={Gushchin, Nikita and Kolesov, Alexander and Mokrov, Petr and Karpikova, Polina and Spiridonov, Andrei and Burnaev, Evgeny and Korotin, Alexander},
  journal={Advances in Neural Information Processing Systems},
  volume={36},
  pages={18932--18963},
  year={2023}
}

@inproceedings{mustafa2020transformation,
  title={Transformation consistency regularization--a semi-supervised paradigm for image-to-image translation},
  author={Mustafa, Aamir and Mantiuk, Rafa{\l} K},
  booktitle={Computer Vision--ECCV 2020: 16th European Conference, Glasgow, UK, August 23--28, 2020, Proceedings, Part XVIII 16},
  pages={599--615},
  year={2020},
  organization={Springer}
}

@inproceedings{Jiang2023ICCV,
    author={Yuxin Jiang and Liming Jiang and Shuai Yang and Chen Change Loy},
    title={Scenimefy: Learning to Craft Anime Scene via Semi-Supervised Image-to-Image Translation},
    booktitle={IEEE International Conference on Computer Vision (ICCV)},
    year={2023}
}

@inproceedings{tripathy2019learning,
  title={Learning image-to-image translation using paired and unpaired training samples},
  author={Tripathy, Soumya and Kannala, Juho and Rahtu, Esa},
  booktitle={Computer Vision--ACCV 2018: 14th Asian Conference on Computer Vision, Perth, Australia, December 2--6, 2018, Revised Selected Papers, Part II 14},
  pages={51--66},
  year={2019},
  organization={Springer}
}

@inproceedings{Jiang2021EMNLP,
    author={Qingnan Jiang and Mingxuan Wang and Jun Cao and Shanbo Cheng and Shujian Huang and Lei Li},
    title={Learning Kernel-Smoothed Machine Translation with Retrieved Examples},
    booktitle={Conference on Empirical Methods in Natural Language Processing (EMNLP)},
    year={2021}
}

@inproceedings{Morishita2022EMNLP,
    author={Makoto Morishita and Jun Suzuki and Masaaki Nagata},
    title={Domain Adaptation of Machine Translation with Crowdworkers},
    booktitle={Conference on Empirical Methods in Natural Language Processing (EMNLP)},
    year={2022}
}

@inproceedings{Peng2023ICCV,
    author={Duo Peng and Ping Hu and Qiuhong Ke and Jun Liu},
    title={Diffusion-based Image Translation with Label Guidance for Domain Adaptive Semantic Segmentation},
    booktitle={IEEE International Conference on Computer Vision (ICCV)},
    year={2023}
}

@inproceedings{Du2022EMNLP,
    author={Yichao Du and Weizhi Wang and Zhirui Zhang and Boxing Chen and Tong Xu and Jun Xie and Enhong Chen},
    title={Non-Parametric Domain Adaptation for End-to-End Speech Translation},
    booktitle={Conference on Empirical Methods in Natural Language Processing (EMNLP)},
    year={2022}
}

@inproceedings{Lin2018CVPR,
    author={Jianxin Lin and Yingce Xia and Tao Qin and Zhibo Chen and Tie-Yan Liu},
    title={Conditional Image-to-Image Translation},
    booktitle={Computer Vision and Pattern Recognition (CVPR)},
    year={2018}
}

@article{backhoff2022applications,
  title={Applications of weak transport theory},
  author={Backhoff-Veraguas, Julio and Pammer, Gudmund},
  journal={Bernoulli},
  volume={28},
  number={1},
  pages={370--394},
  year={2022},
  publisher={Bernoulli Society for Mathematical Statistics and Probability}
}

@article{backhoff2019existence,
  title={Existence, duality, and cyclical monotonicity for weak transport costs},
  author={Backhoff-Veraguas, Julio and Beiglb{\"o}ck, Mathias and Pammer, Gudmun},
  journal={Calculus of Variations and Partial Differential Equations},
  volume={58},
  number={6},
  pages={203},
  year={2019},
  publisher={Springer}
}

@article{gozlan2017kantorovich,
  title={Kantorovich duality for general transport costs and applications},
  author={Gozlan, Nathael and Roberto, Cyril and Samson, Paul-Marie and Tetali, Prasad},
  journal={Journal of Functional Analysis},
  volume={273},
  number={11},
  pages={3327--3405},
  year={2017},
  publisher={Elsevier}
}

@phdthesis{genevay2019entropy,
  title={Entropy-regularized optimal transport for machine learning},
  author={Genevay, Aude},
  year={2019},
  school={Universit{\'e} Paris sciences et lettres}
}

@article{cuturi2013sinkhorn,
  title={Sinkhorn distances: Lightspeed computation of optimal transport},
  author={Cuturi, Marco},
  journal={Advances in neural information processing systems},
  volume={26},
  year={2013}
}

@inproceedings{kantorovich1942translocation,
  title={On the translocation of masses},
  author={Kantorovich, Leonid V},
  booktitle={Dokl. Akad. Nauk. USSR (NS)},
  volume={37},
  pages={199--201},
  year={1942}
}

@book{peyre2019computational,
  title={Computational Optimal Transport: With Applications to Data Science},
  author={Peyr{\'e}, G. and Cuturi, M.},
  isbn={9781680835502},
  series={Foundations and trends in machine learning},
  url={https://books.google.ru/books?id=J0BiwgEACAAJ},
  year={2019},
  publisher={Now Publishers}
}

@article{santambrogio2015optimal,
  title={Optimal transport for applied mathematicians},
  author={Santambrogio, Filippo},
  journal={Birk{\"a}user, NY},
  volume={55},
  number={58-63},
  pages={94},
  year={2015},
  publisher={Springer}
}

@book{villani2009optimal,
  title={Optimal transport: old and new},
  author={Villani, C{\'e}dric and others},
  volume={338},
  year={2009},
  publisher={Springer}
}

@inproceedings{mokrov2024energy,
  title={Energy-guided Entropic Neural Optimal Transport},
  author={Mokrov, Petr and Korotin, Alexander and Kolesov, Alexander and Gushchin, Nikita and Burnaev, Evgeny},
  booktitle={The Twelfth International Conference on Learning Representations},
  year={2024}
}

@inproceedings{
    rubachev2024tabred,
    title={TabReD: Analyzing Pitfalls and Filling the Gaps in Tabular Deep Learning Benchmarks},
    author={Ivan Rubachev and Nikolay Kartashev and Yury Gorishniy and Artem Babenko},
    booktitle={The Thirteenth International Conference on Learning Representations},
    year={2025},
    url={https://openreview.net/forum?id=L14sqcrUC3}
}

@article{malinin2021shifts,
  title={Shifts: A dataset of real distributional shift across multiple large-scale tasks},
  author={Malinin, Andrey and Band, Neil and Chesnokov, German and Gal, Yarin and Gales, Mark JF and Noskov, Alexey and Ploskonosov, Andrey and Prokhorenkova, Liudmila and Provilkov, Ivan and Raina, Vatsal and others},
  journal={arXiv preprint arXiv:2107.07455},
  year={2021}
}
\bibliographystyle{icml2026}

\newpage
\appendix
\onecolumn

\section*{Appendix}

\startcontents[sections]
\printcontents[sections]{l}{1}{\setcounter{tocdepth}{2}}
\vspace{1cm}
\hrule

\section{Neural Parameterization}\label{sec-neural-parameterization}

Throughout the main text, we parameterized the cost $c^{\theta}$ and potential $f^{\theta}$ using log-sum-exp functions and Gaussian mixtures (see \wasyparagraph\ref{sec-light}), resulting in the method we refer to as EBiEOT-GMM. A natural question is whether more general parameterizations can be used within our framework, for example by directly parameterizing both $c^{\theta}$ and $f^{\theta}$ with neural networks. In this section, we answer this question affirmatively and introduce EBiEOT-NN, a neural-network-based version of our method for optimizing the objective $\mathcal{L}(\theta)$ in \eqref{eq:loss}.

\subsection{Algorithm Derivation}

We note that a key advantage of our chosen parameterization (see \wasyparagraph\ref{sec-light}) is that the normalizing constant $Z_{\theta}$ appearing in $\gL(\theta)$ is available in the closed form. Unfortunately, this is not the case with general parameterizations of $c^{\theta}$ and $f^{\theta}$, necessitating the use of more advanced optimization techniques. While the objective $\gL(\theta)$ itself may be intractable, we can derive its gradient, which is essential for optimization. The following proposition is derived in a manner similar to \citep{mokrov2024energy}, who proposed methods for solving forward entropic OT problems with neural nets.

\begin{proposition}[Gradient of our main loss \eqref{eq:loss}]
\label{prop:loss_grad} It holds that
\begin{align}
    \pdv{\theta} \gL(\theta) = \varepsilon^{-1}\Bigg\{\E_{x, y \sim \gtPlan}\left[\pdv{\theta} c^\theta(x, y)\right]
    - &\E_{y \sim \margY} \left[\pdv{\theta} f^\theta(y)\right]\\
    +&\E_{x \sim \margX}\E_{y \sim \recPlan(\cdot|x)}\left[\pdv{\theta} (f^{\theta}(y) - c^{\theta}(x, y))\right]\Bigg\}\nonumber.
\end{align}
\end{proposition}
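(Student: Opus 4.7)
The plan is to differentiate the three summands of $\gL(\theta)$ in \eqref{eq:loss} one at a time and verify that the non-trivial term, the log-partition term, produces exactly the $\pi^{\theta}(y|x)$-expectation appearing in the statement.

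First I would treat the two easy pieces. The summand $\varepsilon^{-1}\E_{x,y\sim\pi^{*}}[c^{\theta}(x,y)]$ depends on $\theta$ only through the integrand, and likewise for $-\varepsilon^{-1}\E_{y\sim\pi_{y}^{*}}f^{\theta}(y)$; since both $\pi^{*}$ and $\pi_{y}^{*}$ are independent of $\theta$, an application of the Leibniz rule (dominated convergence) immediately yields the first two terms of the claimed expression, $\varepsilon^{-1}\E_{x,y\sim\pi^{*}}[\partial_{\theta}c^{\theta}(x,y)]$ and $-\varepsilon^{-1}\E_{y\sim\pi_{y}^{*}}[\partial_{\theta}f^{\theta}(y)]$.

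The heart of the proof is differentiating $\E_{x\sim\pi_{x}^{*}}\log Z^{\theta}(x)$. Using the energy parameterization \eqref{eq:energy_function}, I would first write
\begin{equation*}
Z^{\theta}(x)=\int_{\gY}\exp\!\Bigl(\tfrac{f^{\theta}(y)-c^{\theta}(x,y)}{\varepsilon}\Bigr)\,dy,
\end{equation*}
then apply the chain rule together with a second swap of derivative and integral to obtain
\begin{equation*}
\pdv{\theta}\log Z^{\theta}(x)=\frac{1}{\varepsilon Z^{\theta}(x)}\int_{\gY}\exp\!\Bigl(\tfrac{f^{\theta}(y)-c^{\theta}(x,y)}{\varepsilon}\Bigr)\pdv{\theta}\bigl(f^{\theta}(y)-c^{\theta}(x,y)\bigr)\,dy.
\end{equation*}
The key observation is that the weight $\exp(-E^{\theta}(y|x))/Z^{\theta}(x)$ appearing in the integrand is, by the Gibbs-Boltzmann parameterization \eqref{eq:boltzmann_parametrization}, exactly the conditional density $\pi^{\theta}(y|x)$. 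Rewriting the integral as an expectation under $\pi^{\theta}(\cdot|x)$ gives $\pdv{\theta}\log Z^{\theta}(x)=\varepsilon^{-1}\E_{y\sim\pi^{\theta}(\cdot|x)}[\partial_{\theta}(f^{\theta}(y)-c^{\theta}(x,y))]$; taking one further expectation over $x\sim\pi_{x}^{*}$ and summing the three contributions produces the formula in the proposition.

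The main obstacle is rigorously justifying the two Leibniz-rule swaps in the log-partition calculation. This step requires mild regularity assumptions typical for energy-based models, namely that the maps $\theta\mapsto c^{\theta}(x,y)$ and $\theta\mapsto f^{\theta}(y)$ are continuously differentiable and that $\lVert\partial_{\theta}(f^{\theta}(y)-c^{\theta}(x,y))\rVert$ admits a local (in $\theta$) dominating function that is integrable against both $\exp(-E^{\theta}(y|x))$ on $\gY$ and against $\pi_{x}^{*}$ on $\gX$. Under such conditions (which standard fully-connected architectures with smooth or piecewise-smooth activations satisfy on bounded parameter neighborhoods), dominated convergence legitimizes both derivative-integral exchanges and the proof is complete.
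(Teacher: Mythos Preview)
Your proposal is correct and follows essentially the same route as the paper's proof: differentiate the three summands of \eqref{eq:loss} term by term, and for the log-partition term recognize that the integrand weight $\exp\!\bigl((f^\theta(y)-c^\theta(x,y))/\varepsilon\bigr)/Z^\theta(x)$ is precisely $\pi^\theta(y|x)$, converting the derivative into the stated expectation. If anything, your write-up is slightly more careful than the paper's, which simply performs the computation without discussing the dominated-convergence conditions needed to justify the derivative--integral exchanges.
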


The gradient formula eliminates the need for the intractable normalizing constant $Z_\theta$, but computing it still requires sampling from the current model $y \sim \pi^{\theta}(\cdot|x)$. Unlike the Gaussian mixture case in \wasyparagraph\ref{sec-light}, we now only access the unnormalized density defined by $c^{\theta}$ and $f^{\theta}$, which is not necessarily a Gaussian mixture. To address this, we rely on standard methods for sampling from unnormalized densities, such as Markov Chain Monte Carlo (MCMC) \citep{andrieu2003introduction}. This enables practical gradient estimation and motivates the training procedure in Algorithm \ref{alg:ebieot-nn}, where the conditional distribution is modeled as $\pi^{\theta}(y|x) \propto \exp\left(\frac{f^{\theta}(y) - c^{\theta}(x, y)}{\varepsilon}\right)$, with energy $\varepsilon^{-1}(c^{\theta}(x, y) - f^{\theta}(y))$.

\begin{algorithm}[ht]
{
    \SetAlgorithmName{Algorithm}{empty}{Empty}
    \SetKwInOut{Input}{Input}
    \SetKwInOut{Output}{Output}
    \Input{Paired samples $\XYtrain \sim \gtPlan$;  unpaired samples $\Xtrain \sim \margX$, $\Ytrain \sim \margY$; \\
    potential network $f^{\theta} : \sR^{D_y} \rightarrow \sR$, cost network $c^\theta(x, y): \sR^{D_x}\times \sR^{D_y} \rightarrow \sR$;\\
    number of Langevin steps $K > 0$, Langevin discretization step size $\eta > 0$; \\
    basic noise std $\sigma_0 > 0$; batch sizes $\hat{P}, \hat{Q}, \hat{R} > 0$.
    }
    \Output{trained potential network $f^{\theta^*}$ and cost network $c^{\theta^*}$ recovering $\pi^{\theta^*}(y|x)$ from \eqref{eq:boltzmann_parameterization}.}
    \For{$i = 1, 2, \dots$}{
        Derive batches $\{\hat{x}_p, \hat{y}_p\}_{p = 1}^{\hat{P}} = XY \sim \gtPlan, \{x_n\}_{q = 1}^{\hat{Q}} = X \sim \margX, \{y_r\}_{r = 1}^{\hat{R}} = Y \sim \margY $\; 
        Sample basic noise $Y^{(0)} \sim \gN(0, \sigma_0)$ of size $\hat{Q}$\;
        \For{$k = 1, 2, \dots, K$}{
            Sample $Z^{(k)} = \{z_q^{(k)}\}_{q = 1}^{\hat{Q}}$, where $z_q^{(k)} \sim \gN(0, 1)$\;
            Obtain $Y^{(k)} = \{y_q^{(k)}\}_{q = 1}^{\hat{Q}}$ with Langevin step:\\
            $y_q^{(k)} \leftarrow y_q^{(k - 1)}\ + \frac{\eta}{2 \varepsilon} \cdot \mbox{\texttt{stop\_grad}}\!\left(\pdv{y}\left[f^\theta(y) - c^\theta(x_q, y)\right]\big\vert_{y = y_q^{(k - 1)}}\right) + \sqrt{\eta} z^{(k)}_q$
        }

        $\widehat{\gL} \leftarrow \dfrac{1}{\hat{P}}\bigg[\sum\limits_{x_p, y_p \in XY} c^\theta \left(x_p, y_p\right)\bigg]
        + \dfrac{1}{\hat{Q}}\bigg[ \sum\limits_{x_q \in X, y_q^{(K)} \in Y^{(K)}} \left(f^{\theta}(y_q^{(K)})-c^\theta(x_q, y_q^{(K)})\right) \bigg]
        - \dfrac{1}{\hat{R}}\bigg[\sum\limits_{y_r \in Y} f_\theta \left(y_r\right)\bigg]
         $\;

        Perform a gradient step over $\theta$ by using $\frac{\partial\widehat{\gL}}{\partial \theta}$\;
    }
    \caption{EBiEOT-NN}\label{alg:ebieot-nn}
}
\end{algorithm}

In Algorithm \ref{alg:ebieot-nn}, we use the Unadjusted Langevin Algorithm (ULA) \citep{roberts1996exponential}, a standard MCMC method. For an in-depth discussion on EBM training methods, see the recent surveys \citep{song2021train, carbone2024hitchhiker}.

Our proposed \textit{inverse} OT algorithm is closely related to the \textit{forward} OT framework in \citep[Algorithm 1]{mokrov2024energy}, with key distinctions:
\textbf{(1)} it learns the cost function $c^{\theta}$ during training, and
\textbf{(2)} it leverages both paired and unpaired data.

Below, we demonstrate a proof-of-concept performance of Algorithm~\ref{alg:ebieot-nn} on two setups: a 2D example and colored images. 

\subsection{Illustrative Example}

\begin{figure}[t]
    \centering
    \begin{subfigure}[b]{0.3\textwidth}
        \centering
        \includegraphics[width=\textwidth]{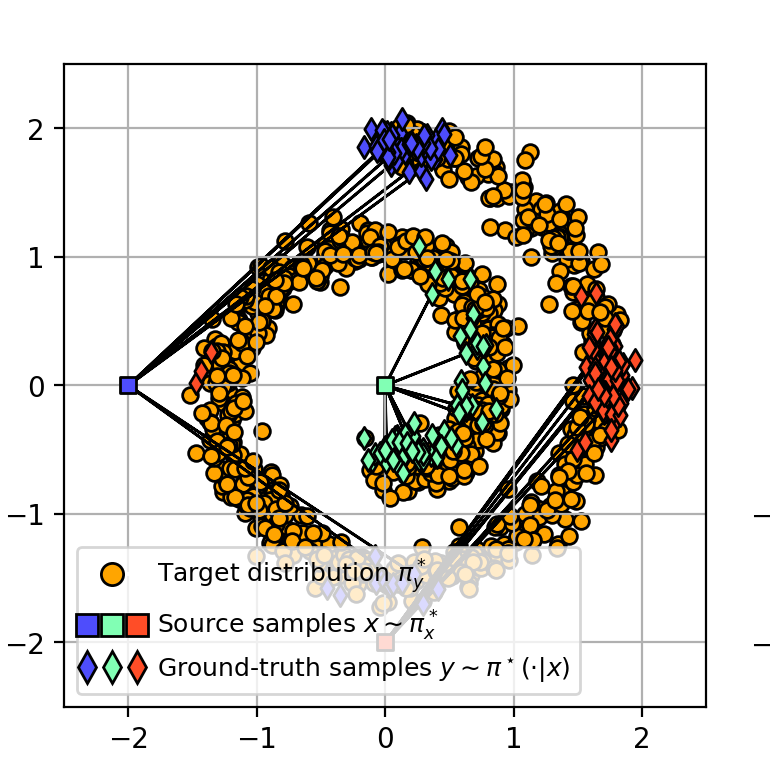}
        \caption{Ground truth.}
        \label{fig:ebm-gt}
    \end{subfigure}
    \hfill
    \begin{subfigure}[b]{0.3\textwidth}
        \centering
        \includegraphics[width=\textwidth]{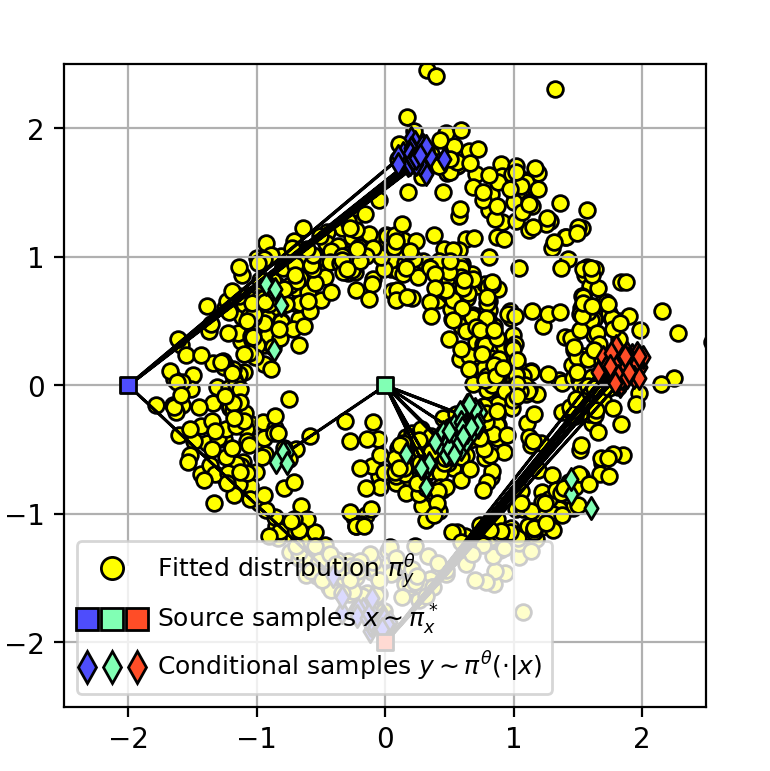}
        \caption{\centering $P = 128$; $Q=R=1024$}
        \label{fig:ebm-128}
    \end{subfigure}
    \hfill
    \begin{subfigure}[b]{0.3\textwidth}
        \centering
        \includegraphics[width=\textwidth]{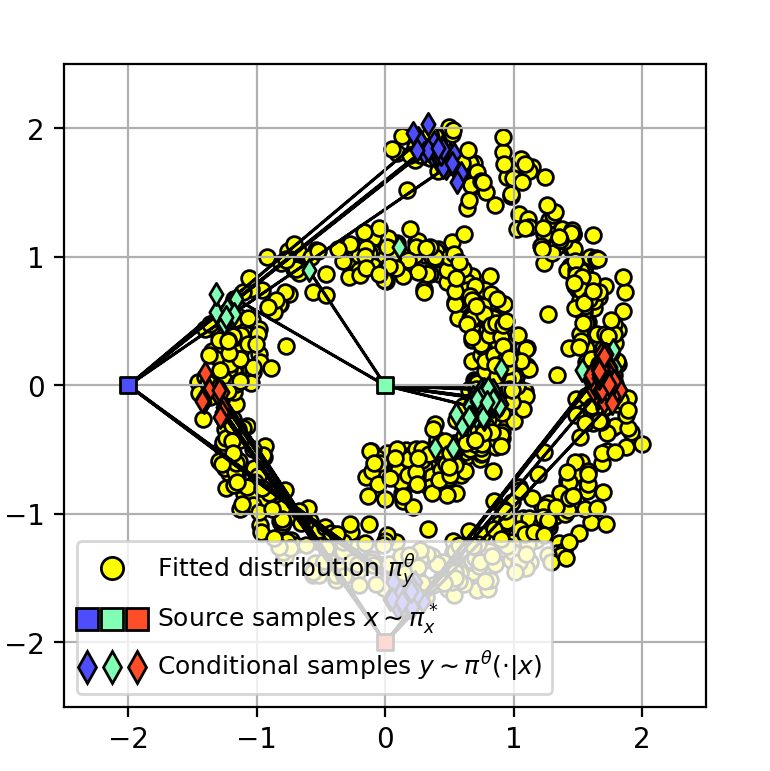}
        \caption{\centering $P = 16k$; $Q=R=16k$}
        \label{fig:ebm-inf}
    \end{subfigure}
    \caption{Performance of the proposed neural parameterization in the $\textit{Gaussian}\to\textit{Swiss Roll}$ mapping task (\wasyparagraph\ref{subsec:swiss_roll}). We use MLPs to parametrize both the potential function $f^\theta$ and the cost function $c^\theta$. \vspace{-5mm}}
    \label{fig:ebm}
\end{figure}

\textbf{Setup.} We begin with a 2D example to showcase the capability of EBiEOT-NN to learn conditional plans using a fully neural network-based parameterization. Specifically, we conduct experiments on the $\textit{Gaussian}\to\textit{Swiss Roll}$ mapping problem (see \wasyparagraph\ref{subsec:swiss_roll}) using two datasets: one containing 128 paired samples (described in \wasyparagraph\ref{subsec:swiss_roll}) and another with 16k paired samples (see Appendix \ref{sec:16k-paired-data}).

\textbf{Discussion.} It is worth noting that the model's ability to fit the target distribution is influenced by the amount of labeled data used during training. When working with partially labeled samples (as shown in Figure \ref{fig:ebm-128}), the model's fit to the target distribution is less accurate compared to using a larger dataset. However, even with limited labeled data, the model still maintains good accuracy in terms of the paired samples. On the other hand, when provided with fully labeled data (see Figure \ref{fig:ebm-inf}), the model generates more consistent results and achieves a better approximation of the target distribution.
A comparison of the results obtained using EBiEOT-NN with neural network parameterization and those achieved using EBiEOT-GMM with Gaussian parameterization (Figure \ref{fig:ebieot-gmm}) reveals that EBiEOT-NN exhibits greater instability. This observation aligns with the findings of \citep[Section 2.2]{mokrov2024energy}, which emphasize the instability and mode collapse issues commonly encountered when working with EBMs.

\textbf{Implementation Details.} We employ MLPs with hidden layer configurations of $[128, 128]$ and $[256, 256, 256]$, using $LeakyReLU(0.2)$ for the parameterization of the potential $f^\theta$ and the cost $c^\theta$, respectively. The learning rates are set to $lr_{\text{paired}} = 5 \times 10^{-4}$ and $lr_{\text{unpaired}} = 2 \times 10^{-4}$. The sampling parameters follow those specified in \citep{mokrov2024energy}. 

\subsection{Colored Images Example}\label{appendix:details-on-colored-images-example}
\textbf{Setup.} We adapted an experiment from \citep{mokrov2024energy} using the colored MNIST dataset \citep{arjovsky2019invariant}. While the original task involved translating digit $2$ into digit $3$ using unpaired images, we modified the setup to demonstrate our method's ability to perform translations according to paired data. Namely, we created pairs by shifting the hue \citep{joblove1978color} of the source images by $120^\circ$. Specifically, for a source image with a hue $h$ in the range $0^\circ \leq h < 360^\circ$, the target image's hue was set to $(h + 120^\circ) \; \text{mod} \; 360^\circ$.

\begin{figure}[!th]
    \centering
    \includegraphics[width=0.84\textwidth]{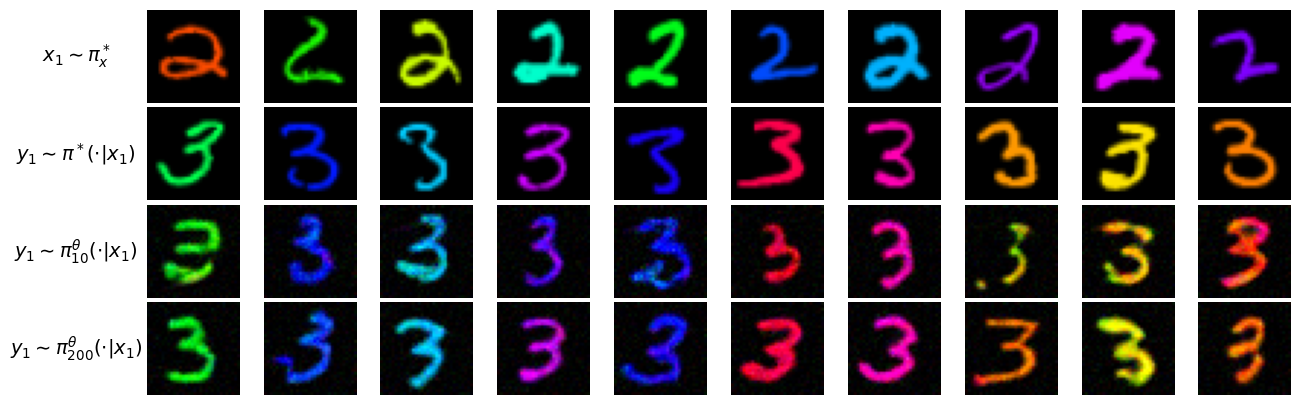}
    \caption{Performance of the proposed EBiEOT-NN method} on the colored MNIST mapping task. Each pair consists of digits $2$ and $3$ with a hue shift of $120^\circ$. The first row shows the source images, the second row displays target images with ground-truth colors, the third row presents the mapping results for $10$ pairs in the train data, and the fourth row shows results for $200$ pairs. \vspace{-4mm}
    \label{fig-ebm-mnist2to3}
\end{figure}

\textbf{Discussion. }  The results of this experiment are shown in Figure \ref{fig-ebm-mnist2to3}. Notably, the model successfully learned the color transformation using only 10 pairs (third row). Increasing the number of pairs to 200 further improved the quality of the translation (fourth row).

\textbf{Implementation Details.} We adopt the same parameters as in \citep{mokrov2024energy}, except for the cost function:
\begin{equation*}
    c^\theta(x, y) = \frac{1}{D_y} \Vert U_{\text{net}}^\theta - y \Vert^2_2.
\end{equation*}
Here, the dimensions of source and target spaces are $D_x = D_y = 3\times 32 \times 32$ and $U_{\text{net}}^\theta : \R^{D_x} \to \R^{D_y}\ $ is a neural net function with U-Net architecture \citep{ronneberger2015u} with 16 layers. The first layer has $64$ filters, and the number of filters doubles in each subsequent layer. The experiment was run for 10,000 iterations on a 2080 Ti GPU, completing in approximately 40 minutes.

\subsection{Conclusion}

It is important to recognize that the field of Energy-Based Models has undergone significant advancements in recent years, with the development of numerous scalable approaches. For examples of such progress, we refer readers to recent works by \citep{geng2024improving, carbone2023efficient, du2021improved,gao2021learning} and other the references therein. Additionally, we recommend the comprehensive tutorial by \citep{song2021train, carbone2024hitchhiker} for an overview of training methods for EBMs. Given these advancements, it is reasonable to expect that by incorporating more sophisticated techniques into our basic Algorithm \ref{alg:ebieot-nn}, it may be possible to scale the method to handle high-dimensional setups, such as image data. However, exploring these scaling techniques is beyond the scope of the current paper, which primarily focuses on the general methodology for semi-supervised domain translation. We leave the development of methods for further scaling our approach as a promising direction for future research.

\section{Additional Discussions}
\subsection{Entropic/Weak/Inverse Optimal Transport}\label{appendix:background-weak-eot}

In this section, we explain our motivation for adopting the \textit{entropic} OT formulation rather than the standard OT formulation \eqref{eq:OT}. Specifically, we focus on the \textit{weak semi-dual} formulation of the entropic OT problem \citep[\wasyparagraph 3.1]{mokrov2024energy}, as opposed to its standard \textit{semi-dual} form \citep[\wasyparagraph 4.3]{genevay2019entropy}, and highlight its connections to the existing inverse entropic optimal transport frameworks in the literature.

\textbf{Classic OT.} Given source and target distributions $\alpha \in \gPac(\gX)$ and $\beta \in \gPac(\gY)$, and a cost function $c^{*}: \gX \times \gY \to \R$, the \textit{primal} optimal transport problem \citep{villani2009optimal} is defined as:
\begin{equation}\label{eq:OT}
    \OT_{c^{*}} \left(\alpha, \beta\right) \defeq \min\limits_{\pi \in \Pi(\alpha, \beta)} \E_{x, y \sim \pi} [c^{*}(x, y)] \tag{OT}.
\end{equation}
This formulation was originally introduced by Kantorovich \citep{kantorovich1942translocation} as a relaxation of Monge’s original problem \citep{monge1781memoire}, which is more restrictive because it does not allow mass to be split, resulting in \textit{deterministic} solutions called optimal transport \textit{maps}. However, as is well-known in optimal transport theory (see \citep[\wasyparagraph 9]{villani2009optimal}), solutions to problem \eqref{eq:OT}, called optimal transport \textit{plans}, can still be deterministic. For example, when the cost is quadratic and the measures are absolutely continuous on the same space $\R^{D_x} = \R^{D_y}$, Brenier’s theorem \citep[Remark 2.24]{peyre2019computational} guarantees that the optimal transport plan is deterministic. Specifically, each $x$ is mapped deterministically to $y = T^*(x)$ for some optimal map $T^*$, meaning that the conditional distribution $\pi^*(y \vert x)$ collapses to a single point mass $\delta_{T^*(x)}$.

Such deterministic plans, however, are unsuitable for our semi-supervised domain translation setup, where a multimodal transport behavior of $\pi^*(y\vert x)$ may be necessary. Our synthetic experiments in \wasyparagraph\ref{subsec:swiss_roll} (Figure~\ref{fig:gauss_to_swiss_roll}) illustrate these cases. To enforce mapping uniqueness while allowing \textit{stochastic} (i.e., non-deterministic) mappings, a common approach is to regularize \eqref{eq:OT} with an entropy term, which makes the objective strictly convex with respect to $\pi$, as discussed below.

\textbf{Entropic OT.} The work of \citep{cuturi2013sinkhorn} proposed regularizing \eqref{eq:OT} with an entropy term, known as entropic OT (EOT), to improve computational tractability of OT \citep{genevay2019entropy}. Moreover, besides the computational advantages, the EOT problem has a connection to the \textit{Static Schrödinger Bridge} (SB) problem \citep{leonard2014survey}:
\begin{equation}\label{eq:static-SB}
    \pi^* = \argmin\limits_{\pi \in \Pi(\alpha, \beta)} \KL{\pi}{\pi^{\text{ref}}}, \tag{SB}
\end{equation}
where the aim of the problem is to find the transport plan $\pi\in\Pi(\alpha, \beta)$ closest to $\pi^{\text{ref}}$ in terms of the Kullback-Leibler (KL) divergence. Observe that EOT and the static SB problem are equivalent: 
\begin{align}
    &\min\limits_{\pi \in \Pi(\alpha, \beta)} \KL{\pi}{\pi^{\text{ref}}} = \min\limits_{\pi \in \Pi(\alpha, \beta)} \E_{x, y \sim \pi} \log\frac{\pi(x, y)}{\pi^{\text{ref}}(x, y)} = \label{eq-sb-1} \\
    &\min\limits_{\pi \in \Pi(\alpha, \beta)}\Big\{ \E_{x, y \sim \pi} \underbrace{\left[-\log\pi^{\text{ref}}(x, y)\right]}_{\defeq c^*(x, y)} - \entropy{\pi}\Big\} = \min\limits_{\pi \in \Pi(\alpha, \beta)}\Big\{ \E_{x, y \sim \pi} [c^*(x, y)] - \entropy{\pi}\Big\} \label{eq-sb-2}.
\end{align}
Using the equivalence of the following formulations (see \citep[Eq. 2–4]{mokrov2024energy} and \citep[Eq. 3–5]{gushchin2023building} for details):
\begin{numcases}{
\begin{cases}
         \EOT_{c^*, \varepsilon}^{(1)}(\alpha, \beta) \\
         \EOT_{c^*, \varepsilon}^{(2)}(\alpha, \beta) \\
         \EOT_{c^*, \varepsilon}(\alpha, \beta)
    \end{cases} = \min\limits_{\pi \in \Pi(\alpha, \beta)} \E_{x, y \sim \pi} [c^*(x, y)] + 
}
    + \varepsilon \KL{\pi}{\alpha \otimes \beta}, \notag \\
    - \varepsilon H(\pi), \notag \\
    - \varepsilon \E_{x \sim \alpha} H(\pi(\cdot \vert x)) \notag,
\end{numcases}
we conclude that equation \eqref{eq-sb-2} is equivalent to \eqref{eq:EOT} for $\varepsilon = 1$.

From the equations \eqref{eq-sb-1}--\eqref{eq-sb-2}, we see that the cost function $c^*(x, y)$ defines a reference measure that determines the mapping we aim to reconstruct in the forward problem \eqref{eq:EOT}. Furthermore, since KL minimization is equivalent to maximum likelihood estimation, EOT is theoretically consistent with standard probabilistic modeling principles.

\textbf{Weak OT.} Following \citep{mokrov2024energy}, we provide more details regarding \textit{weak} OT. For a more rigorous treatment, see \citep{gozlan2017kantorovich, backhoff2019existence}. Given a \textit{weak} transport cost $C^*: \gX \times \gP(\gY) \to \R$, which penalizes the displacement of a point $x \in \gX$ into a distribution $\pi(\cdot\vert x) \in \gP(\gY)$, the weak OT problem is defined as:
\begin{equation}\label{eq:WOT}
    \text{WOT}_{C^*}(\alpha, \beta) \defeq \min_{\pi \in \Pi(\alpha, \beta)} \E_{x \sim \alpha} C^*(x, \pi(\cdot\vert x)) \tag{p-WOT}.
\end{equation}
Just as in the classical OT problem \eqref{eq:OT}, the weak OT formulation \eqref{eq:WOT} also enjoys \textit{strong duality} under mild assumptions (see \citep[Theorem 9.5]{gozlan2017kantorovich}; \citep[Theorem 3.3]{backhoff2022applications}). This means that the weak formulation \eqref{eq:WOT} admits an equivalent \textit{weak semi-dual} representation:
\begin{equation}\label{eq:dual-WOT}
    \text{WOT}_{C^*} \left(\alpha, \beta\right) = \max_{f\in\gC(\gY)} \left\{
    \E_{x \sim \alpha} f^{C^{*}}(x) 
    + \E_{y \sim \beta} f(y) \right\}, \tag{sd-WOT}
\end{equation}
where $\gC(\gY)$ denotes the set of continuous functions over $\gY$ and $f^{C}$ so-called \textit{weak $C$-transform}:
\begin{equation}\label{eq:weak-c-transform}
    f^{C}(x) \eqdef \min_{\mu \in \gP(\gY)} \left\{C(x, \mu)-\E_{y\sim\mu}f(y)\right\}.
\end{equation}
Furthermore, note that the EOT formulation in \eqref{eq:EOT} can be seen as a special case of the weak OT problem \eqref{eq:WOT}, corresponding to the following weak transport cost $C^*_{\text{EOT}}$:
\begin{equation}
    C^*_{\text{EOT}}(x, \pi(\cdot\vert x)) \defeq \E_{y \sim \pi(\cdot\vert x)} [c^*(x, y)] - \varepsilon\entropy{\pi(\cdot\vert x)}.
\end{equation}
Substituting expression above into \eqref{eq:weak-c-transform}, we obtain equation \eqref{eq:weak-entropic-c-transform} for the weak \textit{entropic} $c$-transform:
\begin{equation*}
    f^{c^{*}}(x) = \min\limits_{\mu \in \gP(\gY)} \big\{\E_{y \sim \mu}[c^{*}(x, y)]-\varepsilon\entropy{\mu} - \E_{y \sim \mu} f(y)\big\},
\end{equation*}
which admits a closed-form expression given in \citep[Eq. 14]{mokrov2024energy}, and which we use in our work \eqref{c-transform}:
\begin{equation*}
    f^{c}(x) = -\varepsilon \log \int_\gY \exp\left( \frac{f(y) - c(x, y)}{\varepsilon} \right) \dd y.
\end{equation*}
Furthermore, Appendix A.1 of \citep{mokrov2024energy} provides a detailed discussion of the relationship between the weak entropic $c$-transform and the so-called \textit{$(c, \varepsilon)$-transform} \citep[4.15]{genevay2019sample}, \citep[Theorem 1.2]{marino2020optimal}:
\begin{equation}\label{c-eps-transform}
    v^{c, \varepsilon}(x) = -\varepsilon \log \E_{y \sim \beta} \left[\exp\left( \frac{v(y) - c(x, y)}{\varepsilon}\right)\right],
\end{equation}
which is used in the \textit{semi-dual} formulation of the EOT problem \citep[\wasyparagraph 4.3]{genevay2019entropy}:
\begin{equation}
    \EOT^{\text{semi-dual}}_{c^{*}, \varepsilon} \left(\alpha, \beta\right) = \max_{v \in \gC(\gY)} \left\{
    \E_{x \sim \alpha} v^{c^{*}, \varepsilon}(x) 
    + \E_{y \sim \beta} v(y) \right\}. \tag{sd-EOT}
\end{equation}
As noted in \citep{mokrov2024energy}, the main difference between \eqref{c-transform} and \eqref{c-eps-transform} lies in the integration measure: 
\eqref{c-eps-transform} integrates with respect to $\beta$, while \eqref{c-transform} uses the standard Lebesgue measure.

For completeness, we present below the \textit{dual} formulation of EOT with a slightly different regularization term, $+\varepsilon\KL{\pi}{\alpha \otimes \beta}$. As noted above, this is equivalent to our choice of regularization, but it is the version commonly used in inverse problems and will be discussed later:
\begin{equation} \label{eq:dual-EOT}
\begin{split}
    \EOT^{\text{dual}}_{c^{*}, \varepsilon}(\alpha, \beta)
    = \max_{\substack{u \in\gC(\gX)\\ v\in\gC(\gY)}} \Bigg\{
    \E_{x \sim \alpha} u(x)
    &+ \E_{y \sim \beta} v(y) \\
    &- \E_{x, y  \sim \alpha \otimes \beta} 
    \left[\exp\left( \dfrac{u(x) + v(y) - c(x, y)}{\varepsilon} \right)\right]
    \Bigg\},
\end{split}\tag{d-EOT}
\end{equation}
where the optimization is performed over two Kantorovich potentials $u$ and $v$, in contrast to the single potential used in our formulation \eqref{eq:loss}. With that said, we are ready to discuss the existing formulations of inverse entropic optimal transport. 

\textbf{Inverse OT.} The use of the entropic formulation for inverse optimal transport was first proposed in \citep[Eq. 8]{du2019implicit}. Their setup, identical to our formulation \eqref{eq:IOT}, restricted attention to bilinear cost functions of the form $c_A(x, y) = x^\top A y$, (Eq. 5), with the goal of recovering the matrix $A$ in a discrete setting. A subsequent work \citep[Eq. 21]{ma2020learning} extended this idea to the continuous setting by introducing a loss function for learning cost functions, based on the dual formulation \eqref{eq:dual-EOT} of the EOT problem. As shown in \citep[Appendix A.1]{andrade2023sparsistency}, their formulation and ours are equivalent, and both admit a maximum likelihood interpretation, consistent with our derivation in \wasyparagraph\ref{sec:loss-derivation}.

The most directly related approach is that of \citep[Lemma 1]{andrade2025learning}, which addresses the unbalanced OT framework \citep{chizat2018scaling} while still relying on the dual formulation \eqref{eq:dual-EOT}. They employ a linearly parameterized cost function (Eq. 4), but their focus is on establishing bounds for cost recovery, in contrast to our emphasis on semi-supervised domain translation. For further formulations of inverse OT, we refer readers to the works cited in the introduction of \citep{andrade2023sparsistency}.

\subsection{Discrete Spaces Extension}\label{appendix:discrete-target}

Our theoretical framework is not limited to continuous spaces $\gX, \gY$. For instance, if the target space $\gY$ is discrete and takes values in a finite set $\sY_K = \{y_1, \ldots, y_K\}$, such as a set of categories, our method remains directly applicable. In this case, the dual potential $f^\theta$ \eqref{eq:dual_potential} can be represented as a vector of length $K$, and the cost function $c^\theta(x, y)$ \eqref{eq:cost} can be implemented with a standard neural network. The partition function $Z^\theta(x)$ can then be computed as a finite sum over the $K$ terms:
\begin{equation}
    Z^\theta(x) \defeq \int_{\gY}\exp\left(-E^\theta(y|x)\right)\dd y = \sum_{k=1}^K \exp\left(\frac{f^\theta(y_k) - c^\theta(x, y_k)}{\varepsilon}\right),
\end{equation}
making the implementation straightforward. Note that the input $x$ can be either continuous or discrete - it does not affect the formulation. Challenges arise when $y$ is a more complex discrete object, such as a structured output like a sequence of $T$ tokens drawn from a dictionary of size $K$, i.e., $\sY_K^T$. In such cases, parameterizing $f_\theta$, computing $Z_\theta$, and sampling from the associated energy-based model become significantly more difficult, requiring advanced inference and training techniques, see \citep{holderrieth2025leaps} for details. Discrete domains \citep{austin2021structured, campbell2022continuous, gat2024discrete, ksenofontov2025categorical} have received considerable attention recently, and extending our methodology to such spaces represents a promising direction for future research.

\subsection{Partially Paired Data}\label{appendix:partially-paired-data}

Equation \eqref{eq:loss} assumes that paired training samples accurately reflect the true marginal distributions $\margX$ and $\margY$. One might argue that if these samples are biased or artificially selected, the objective would no longer align with the KL functional derivation in \wasyparagraph\ref{sec:loss-derivation}. However, we show below that the theoretical framework can be corrected under density-ratio assumptions.

Assume that the observed pairs $(x,y)$ come from a joint distribution $\pi^*_{\text{subset}}$ supported on a limited subset of the support of $\gtPlan$, with $x$-marginal $\mu_x$ and conditional density $\pi^\ast(y\vert x)$. In this setting, the induced $y$-marginal is $\nu_y(y)=\E_{x \sim \mu_x}\gtPlan(y\vert x)$ and the \textit{ground-truth joint density} becomes 
\begin{equation*}
    \pi^*_{\text{subset}}(x, y) = \mu_x(x)\pi^\ast(y\vert x).
\end{equation*}
Applying the same derivation as in \wasyparagraph\ref{sec:loss-derivation}, we obtain:
\begin{equation}
   \KL{\pi^*_{\text{subset}}}{\recPlan}=\underbrace{\KL{\mu_x}{\recPlan_x}}_{\text{Marginal}} + \underbrace{\E_{x \sim \mu_x} \KL{\pi^*(\cdot|x)}{\recPlan(\cdot |x)}}_{\text{Conditional}}.
\end{equation}
Analogously to \wasyparagraph\ref{sec:loss-derivation}, we consider only the conditional term:
\begin{equation}
    \!\!\E_{x \sim \mu_x}\E_{y \sim \gtPlan(\cdot|x)} \left[\log \pi^*(y|x) \!-\! \log\recPlan(y|x)\right] \!\!=\!\!
    -\E_{x \sim \mu_x} \entropy{\pi^* (\cdot|x)} \!-\! \E_{x, y \sim \pi^*_{\text{subset}}} \log\recPlan(y|x).\!\!
\end{equation}
Thus, we recover the same conditional log-likelihood structure as in \eqref{eq:loss_MLE}. Substituting the EBM parameterizations \eqref{eq:boltzmann_parameterization} and \eqref{eq:energy_function}, we obtain
\begin{align}
    &\E_{x,y \sim \pi^*_{\text{subset}}} [c(x, y)] -  \E_{y \sim \nu_y} f(y) + \E_{x \sim \mu_x} \log{Z^\theta(x)}=\\
    &\E_{x,y \sim \pi^*_{\text{subset}}} \left[c(x, y)\right] -  \E_{y \sim \margY} \left[\frac{\nu_y(y)}{\margY(y)} f(y)\right] + \E_{x \sim \margX} \left[\frac{\mu_x(x)}{\margX(x)}\log{Z^\theta(x)}\right].
\end{align}
Introducing the \textit{weights}:
\begin{equation}
    w_x(x)=\frac{\mu_x(x)}{\margX(x)}, \qquad
    w_y(y)=\frac{\nu_y(y)}{\margY(y)},
\end{equation}
we obtain the corrected objective:
\begin{equation}\label{eq:loss_biased}
    \gL_q(\theta)=
    \underbrace{\varepsilon^{-1}\E_{x, y \sim \pi^*_{\text{subset}}}[c^\theta(x, y)]}_{\text{Joint, requires pairs $(x,y)\sim \pi^*_{\text{subset}}$}}
    - \underbrace{\varepsilon^{-1}\E_{y \sim \margY} [w_y(y)f^\theta(y)]}_{\text{Marginal, requires $y \sim \margY$}}
    + \underbrace{\E_{x \sim \margX} [w_x(x)\log{Z^\theta(x)}]}_{\text{Marginal, requires $x \sim \margX$}}\to \min\limits_{\theta}.
\end{equation}
A practical way to estimate the required ratios is classifier-based density ratio estimation, widely used in covariate-shift adaptation \citep{gretton2009covariate, sugiyama2012density}. To estimate a marginal ratio such as $w_x(x)=\mu_x(x)/\margX(x)$, we draw samples from the true marginal $\margX$ and the biased marginal $\mu_x$, label them as target (1) and observed (0), and train a probabilistic classifier $s_\varphi(x)=\text{Prob}(\text{target}\vert x)$. With balanced class priors,
$\widehat w_x(x)=\frac{s_\varphi(x)}{1-s_\varphi(x)}$.
The same holds for $w_y(y)$. This method requires no density estimation. For recent advancement in density ratio estimation, please refer to \citep{nagumo2024density, wang2025projection}. Thus, even if the paired data are \textit{artificially biased}, the loss remains correct as long as the true marginals are known and appropriate weights are applied.

\subsection{Examples of Semi-supervised Domain Translation Setups}\label{appendix:examples-of-ssdt}

In this section we outline some real-world scenarios, where \textit{semi-supervised} setups are very natural.

\begin{itemize}[leftmargin=*]
    \item \textbf{Image Harmonization in Photo Editing} \citep{wang2023semi}. Photo compositing often involves placing a foreground object into a new background, but realistic blending (e.g., matching lighting and color tone) is challenging.  While only a small set of artist-labeled (paired) composites may be available, large collections of unlabeled (unpaired) composites can be gathered from the web. 
    \item \textbf{Scene Stylization (e.g., Anime Rendering)} \citep{jiang2023scenimefy}. Transforming real-world photos into anime-style renderings is popular in gaming and animation but is limited by the scarcity of labeled real–anime image pairs.
    \item \textbf{Image Enhancement for Outdoor Vision} \citep{li2019semi, liu2024semi, cui2024scl, li2025semi, hou2025semi}. Adverse weather and low-light conditions can compromise the visual systems of autonomous vehicles, such as self-driving cars and UAVs, leading to challenges in both decision-making and navigation. For a comprehensive overview of these scenarios and existing semi-supervised approaches, see \citep{mo2025review}.
    \item \textbf{Biomedical Image Registration (Microscopy)} \citep{skibbe2021semi}. In neuroscience research, aligning images from different modalities (e.g., tracer vs. Nissl stain) is crucial but difficult due to modality shifts. Only a limited number of images can be manually registered (paired data), while many are unregistered (unpaired).
\end{itemize}

The examples above underscore the importance of developing methods for semi-supervised domain translation, which have applications in rendering, image editing, design, computer graphics, and autonomous driving, while also streamlining existing digital content creation pipelines. 
At the same time, it is important to recognize that the \uline{rapid advancement of generative models} may have unintended consequences, potentially impacting certain jobs within these industries.

\subsection{Related Works}\label{appendix:related-works}

This section provides the detailed discussion of related work and methods that were only briefly summarized in \wasyparagraph\ref{sec:related-works}, and includes additional coverage of metric learning.

\textbf{Semi-supervised models.} As discussed in \wasyparagraph\ref{sec:introduction}, many existing semi-supervised domain translation methods combine paired and unpaired data by introducing multiple loss terms into \textit{ad hoc optimization objectives}. Several works, such as \citep[\wasyparagraph3.3]{jin2019deep}, \citep[\wasyparagraph3.5]{tripathy2019learning}, \citep[\wasyparagraph C]{oza2019semi}, \citep[\wasyparagraph2]{paavilainen2021bridging},  \citep[\wasyparagraph 3.3]{chen2023semi}, \citep[\wasyparagraph 3]{ren2023ugc} and \citep[Eq. 8]{panda2023semi}, employ GAN-based objectives, which incorporate the GAN losses \citep{goodfellow2014generative} augmented with specific regularization terms to utilize paired data. Although most of these methods were initially designed for the image-to-image translation, their dependence on GAN objectives enables their application to broader domain translation tasks. In contrast, the approaches introduced by \citep[\wasyparagraph3.2]{mustafa2020transformation} and \citep[Eq. 8]{tang2024residualconditioned} employ loss functions specifically tailored for the image-to-image translation, making them unsuitable for the general domain translation problem described in \wasyparagraph\ref{sec-background-domain-translation}. 

Another line of research explores methods based on \textit{key-point guided OT} \citep{gu2022keypoint}, which integrates paired data information into the discrete transport plan. Building on this concept, \citep{gu2023optimal} uses such transport plans as heuristics to train a conditional score-based model on unpaired or semi-paired data. Furthermore, recent work \citep{theodoropoulos2024feedback} heuristically incorporates paired data into the cost function $c(x, y)$ in \eqref{eq:EOT} with corresponding dynamical formulation.

Importantly, the paradigms outlined above do not offer any theoretical guarantees for reconstructing the conditional distribution $\gtPlan(y|x)$, as they depend on heuristic loss constructions. We show that such approaches actually fail to recover the true plan even in toy 2-dimensional cases, refer to experiments in \wasyparagraph\ref{sec-experiments} for an illustrative example. We also note that there exist works addressing the question of incorporating unpaired data to the log-likelihood training \eqref{eq:loss_MLE} by adding an extra likelihood terms, see \citep{atanov2019semi, izmailov2020semi}. However, they rely on $x$ being a discrete object (e.g., a class label) and do not easily generalize to the continuous case, see Appendix \ref{sec:loss-function-details} for details.

\textbf{Inverse OT solvers}. As highlighted in \wasyparagraph\ref{sec-background-optimal-transport}, the task of inverse optimal transport (IOT) implies learning the cost function from samples drawn from an optimal coupling $\gtPlan$. Existing IOT solvers \citep{dupuy2019estimating, li2019learning, stuart2020inverse, galichon2022cupid, andrade2025learning} focus on reconstructing cost functions primarily from discrete marginal distributions, in particular, using the log-likelihood maximization techniques \citep{dupuy2019estimating}, see the introduction of \citep{andrade2023sparsistency} for a review. Additionally, the recent work by \citep{shi2023understanding} explores the IOT framework in the context of contrastive learning. In contrast, we develop a log-likelihood based approach aimed at learning conditional distributions $\pi^{\theta}(\cdot|x) \approx \pi^{*}(\cdot|x)$ using both paired and unpaired data but not the cost function itself.

\textbf{Forward OT solvers}. Our solver is based on the framework of \citep{mokrov2024energy}, which proposed a \textit{forward} solver for \textit{unsupervised} domain translation. In contrast, our approach integrates the optimization of the cost function directly into the objective (equation \eqref{eq:emp_loss}), allowing for effective utilization of paired data. Additionally, we extend the Gaussian Mixture parameterization proposed by \citep{korotin2024light, gushchin2024light}, which was originally developed as a forward solver for entropic OT with a quadratic cost function $ c^*(x, y) = \frac{1}{2} \| x - y \|_2^2 $. Our work generalizes this solver to accommodate a wider variety of cost functions, as specified in equation \eqref{eq:cost}. As a result, our approach also functions as a novel forward solver for these generalized cost functions.

Recent work by \citep{howard2024differentiable} proposes a framework for learning cost functions to improve the mapping between the domains. However, it is limited by the use of deterministic mappings, i.e., does not have the ability to model non-degenerate conditional distributions.

Another work by \citep{asadulaev2024neural} introduces a neural network-based OT framework for semi-supervised scenarios, utilizing general cost functionals for OT. However, their method requires \textit{manually} constructing cost functions which can incorporate class labels or predefined pairs. In contrast, our method dynamically adjusts the cost function during training, offering a more flexibility.

\textbf{Metric-learning and OT.} In addition to purely inverse OT approaches, there is a line of work that aims to learn the ground metric used by optimal transport. A seminal work \citep{cuturi2014ground} introduced \textit{ground metric learning} in a supervised setting, where they optimize over metric matrices so that OT distances between labeled histograms better reflect the class structure. Building on this, \citep{huizing2022unsupervised} propose \textit{unsupervised ground metric learning} via what they call Wasserstein singular vectors. They jointly learn a ground metric on features and a distance between samples by finding positive singular vectors of the mapping from metric matrices to OT distance matrices. Their method uses stochastic approximation with entropic regularization and is scalable to high-dimensional data. More recently, the work \citep{auffenberg2025unsupervised} analyze this fixed-point problem more deeply: they prove convergence for a stochastic fixed-point iteration (even in scenarios where classical contraction assumptions do not hold) and show that their framework naturally recovers Mahalanobis-type metrics and graph-Laplacian parameterizations as special cases. 

In another direction, \citep{scarvelis2023riemannian} introduce a \textit{Riemannian metric‑learning} framework: they parametrize a spatially-varying metric tensor as a neural network over a manifold, and optimize it so that OT distances under this learned geometry better explain meaningful interpolations, such as trajectories in scRNA data or bird migration. In graph-structured domains, \citep{heitz2021ground} learn ground metrics constrained to be geodesic distances on a graph, allowing a structured and efficient metric learning aligned with the graph topology.

Moreover, \citep{jawanpuria2025riemannian} propose to learn a symmetric positive definite (SPD) ground metric matrix by optimizing over the Riemannian manifold of SPD matrices, enabling the cost metric to adapt flexibly to data while jointly optimizing the OT distance. Finally, in the context of \textit{domain adaptation}, \citep{kerdoncuff2020metric} present \texttt{MLOT}, which learns a global Mahalanobis metric that improves the alignment of source and target distributions under OT.

While these metric-learning works learn a distance function (or cost metric) via OT, they typically assume particular parametric forms (Mahalanobis, SPD matrices, or constructed on manifolds) and focus on matching distributions or aligning domains. In contrast, our approach learns conditional couplings $\recPlan(\cdot | x)$ (not just a ground cost), and integrates cost learning dynamically into a likelihood-based solver over paired and unpaired data. Moreover, our cost parameterization extends beyond classical metric forms, enabling more flexible and expressive cost functions (see Eq. \eqref{eq:cost}).

\section{Additional Experiments}\label{appendix:additional-experiments}
\subsection{Gaussian to Swiss Roll Mapping}\label{sec:gaussian-to-swiss-roll-mapping}
\subsubsection{Baselines with the Large Amount of Data (16k)}\label{sec:16k-paired-data}

In this section, we show the results of training of the baselines on the large amount of both paired (16k) and unpaired (16k) data (Figure \ref{fig:gauss_to_swiss_roll_16k}). Recall that the ground truth $\pi^{*}$ is depicted in Figure \ref{fig:gt-mapping}.

\textbf{Metrics.} As in \wasyparagraph\ref{subsec:swiss_roll}, we evaluate the agreement between the target and learned conditional distributions using Maximum Mean Discrepancy (MMD) \citep{gretton2012kernel} and $\gW_2$, approximated with the Sinkhorn divergence \citep{feydy2019interpolating}. Definitions of both metrics and implementation details are provided in Appendix~\ref{subsec:metric-computation-details}. The quantitative results are reported in Table~\ref{tab:swiss_roll_metrics}. The trends observed in Table~\ref{tab:swiss_roll_metrics} are consistent with the qualitative results shown in Figure~\ref{fig:gauss_to_swiss_roll} for $P=128$ and Figure~\ref{fig:gauss_to_swiss_roll_16k} for 16k samples. In the large-sample regime, most methods successfully reconstruct the marginal distributions, but still have difficulty accurately learning the conditional distribution.

\textbf{Discussion.} Regression fails to learn anything meaningful due to the averaging effect (Figure \ref{fig:regression-16k}). In contrast, the unconditional GAN$+\ell^{2}$ (Figure \ref{fig:ugan-16k}) nearly succeeds in generating the target data $\pi^{*}_y$, but the learned plan is still incorrect, also due to the averaging effect. Given a sufficient amount of training data, Conditional GAN (Figure \ref{fig:cgan-16k}) nearly succeeds in learning the true conditional distributions $\pi^{*}(\cdot|x)$. The same applies to the conditional normalizing flow (Figure \ref{fig:cnf-16k}), but its results are slightly worse, presumably due to the limited expressiveness of invertible flow architecture.

Experiments using the natural semi-supervised loss function in \eqref{eq:naive_loss} demonstrate that this loss function can reasonably recover the conditional mapping with both CondNF (Figure \ref{fig:cnf-semi-16k}) and CGMM (Figure \ref{fig:semi-cgmm-16k}) parameterizations. However, it requires significantly more training data compared to our proposed loss function \eqref{eq:loss}. This conclusion is supported by the observation that the CGMM model trained with \eqref{eq:naive_loss} tends to overfit, as shown in Figure \ref{fig:semi-cgmm}. In contrast, our method, which uses the objective \eqref{eq:loss}, achieves strong results, as illustrated in Figure \ref{fig:ebieot-gmm}.

Other methods, unfortunately, also struggle to handle this illustrative 2D task effectively, despite their success in large-scale problems. This discrepancy raises questions about the theoretical justification and general applicability of these methods, particularly in scenarios where simpler tasks reveal limitations not evident in more complex settings.

\begin{figure*}[!ht]
    \centering
    \begin{subfigure}{0.19\textwidth}
        \centering
        \includegraphics[width=\linewidth]{images/swiss_roll/source_target.png}
        \caption{Unpaired data. \label{fig:unpaired-data-16k}}
    \end{subfigure}
    \begin{subfigure}{0.19\textwidth}
        \centering
        \includegraphics[width=\linewidth]{images/swiss_roll/paired_data.png}
        \caption{Paired data. \label{fig:paired-data-16k}}
    \end{subfigure}
    \begin{subfigure}{0.19\textwidth}
        \centering
        \includegraphics[width=\linewidth]{images/swiss_roll/gt_mapping.png}
        \caption{Ground truth. \label{fig:gt-mapping-16k}}
    \end{subfigure}
    \begin{subfigure}{0.19\textwidth}
        \centering
        \includegraphics[width=\linewidth]{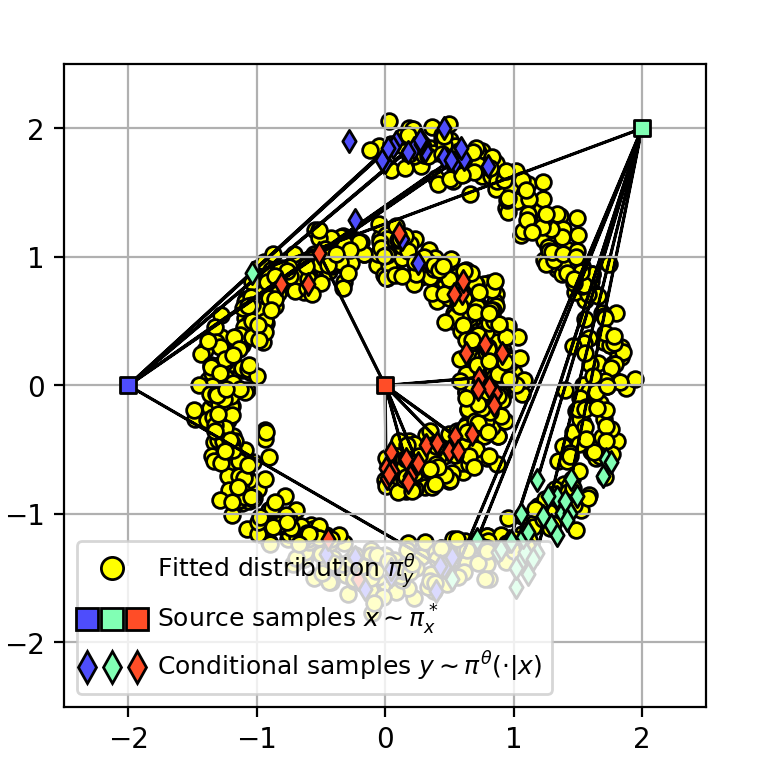}
        \caption{\textbf{EBiEOT-GMM.}} \label{fig:ebieot-gmm-16k}
    \end{subfigure}
    \begin{subfigure}{0.19\textwidth}
        \centering
        \includegraphics[width=\linewidth]{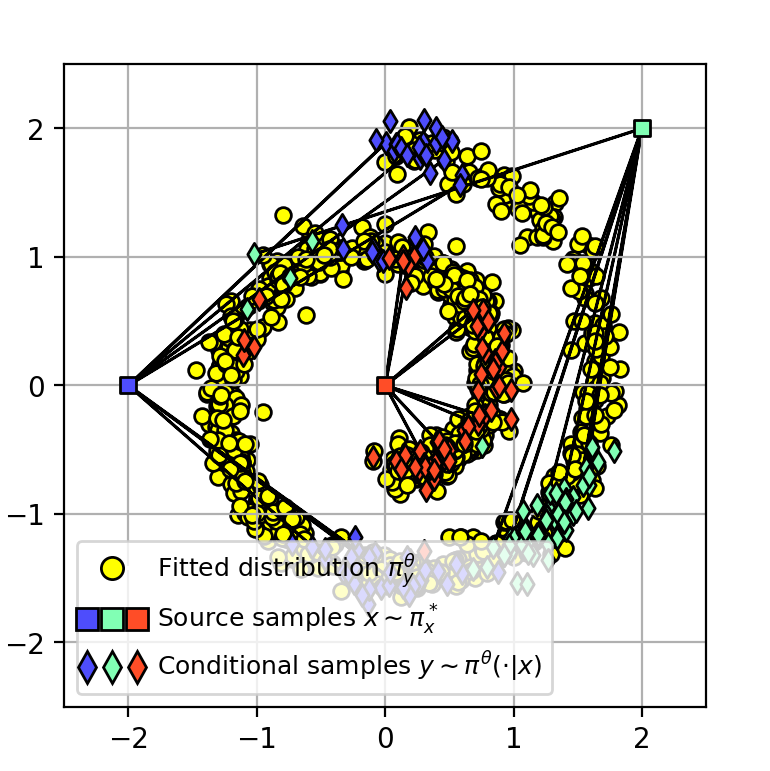}
        \caption{CGMM (SS). \label{fig:semi-cgmm-16k}}
    \end{subfigure}

    \begin{subfigure}[b]{0.19\textwidth}
        \centering
        \includegraphics[width=\linewidth]{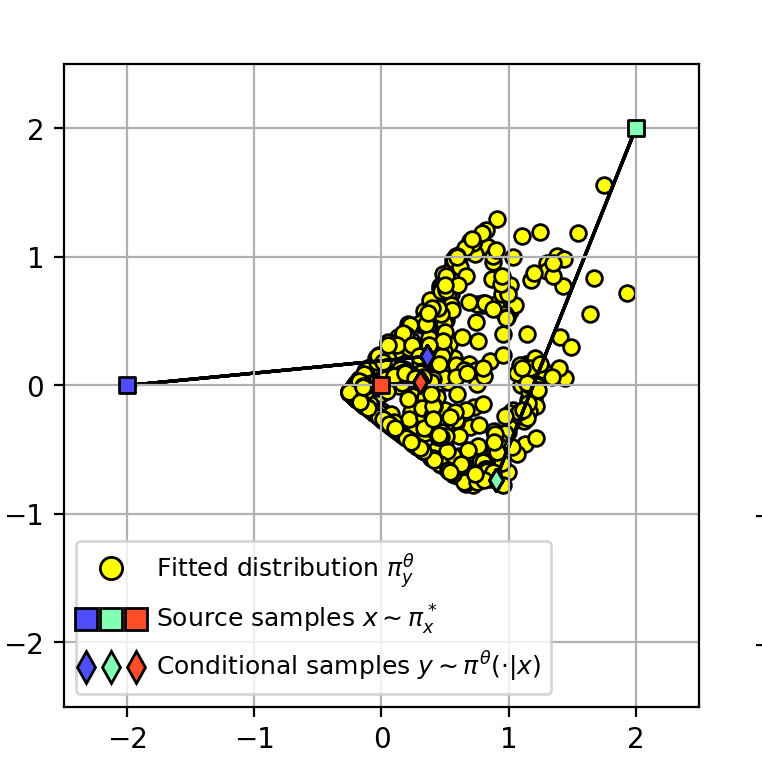}
        \caption{Regression.\label{fig:regression-16k}}
    \end{subfigure}
    \begin{subfigure}{0.19\textwidth}
        \centering
        \includegraphics[width=\linewidth]{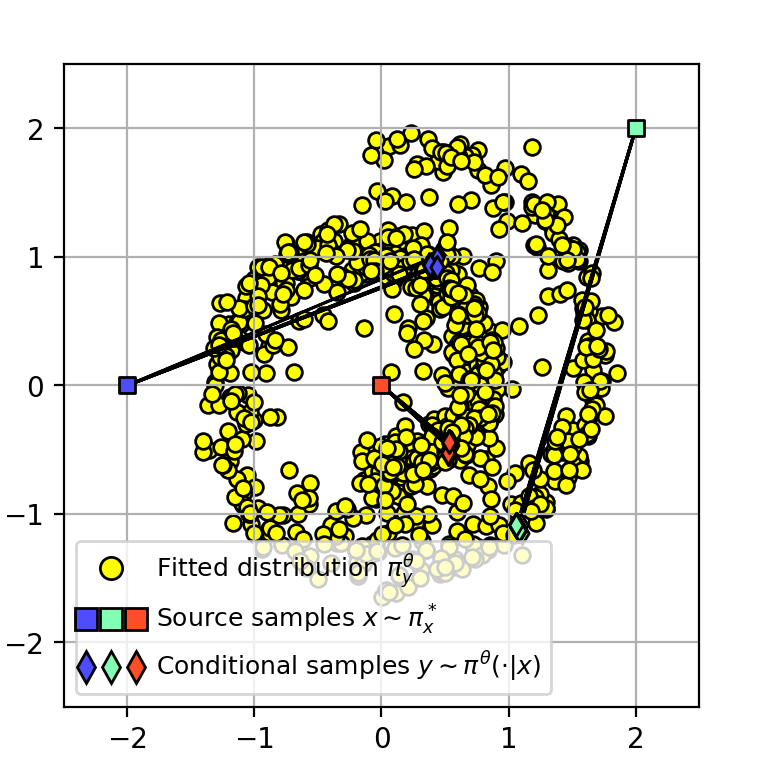}
        \caption{UGAN+$\ell^{2}$. \label{fig:ugan-16k}}
    \end{subfigure}
    \begin{subfigure}{0.19\textwidth}
        \centering
        \includegraphics[width=\linewidth]{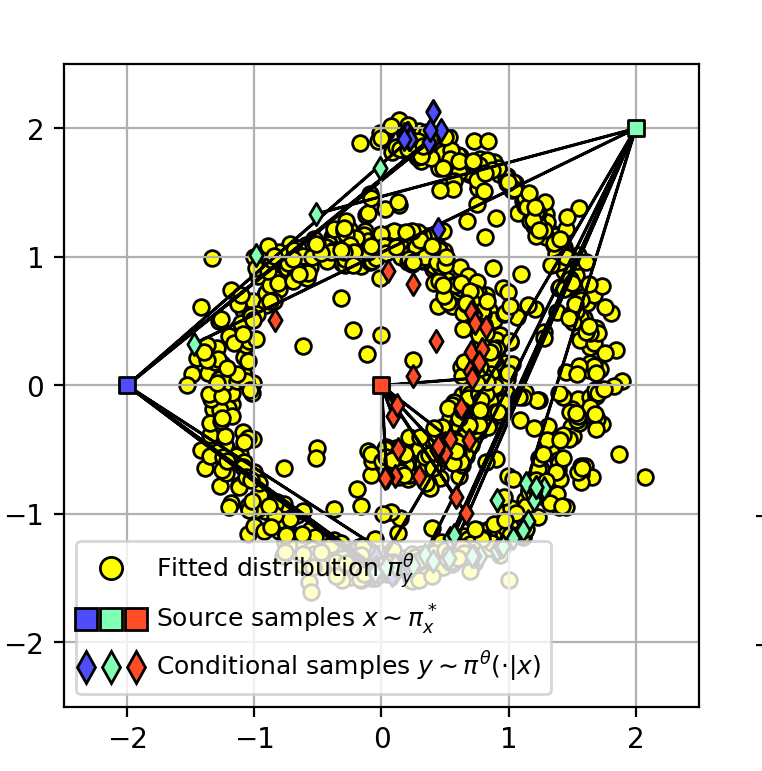}
        \caption{CGAN.\label{fig:cgan-16k}}
    \end{subfigure}
    \begin{subfigure}{0.19\textwidth}
        \centering
        \includegraphics[width=\linewidth]{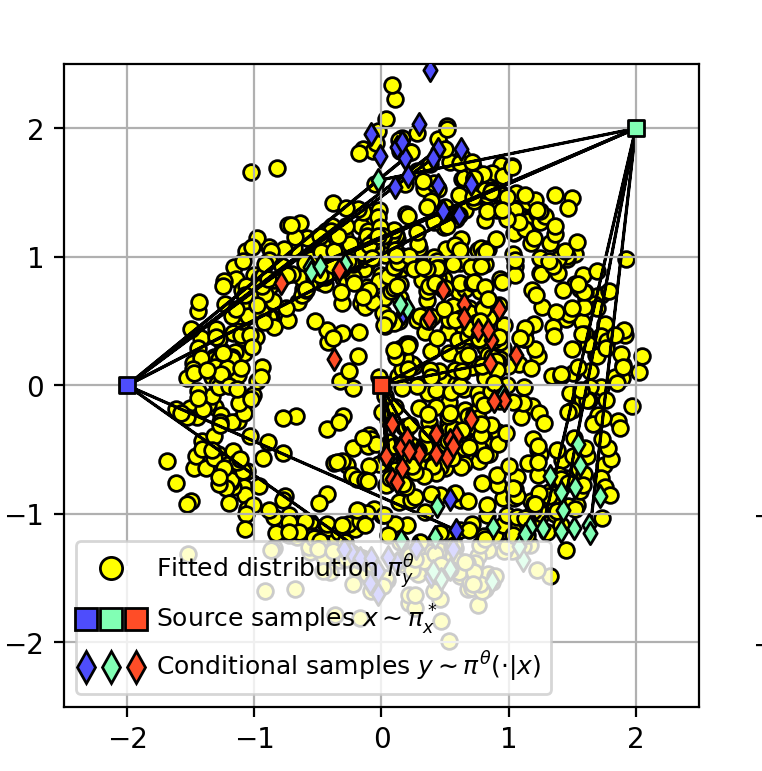}
        \caption{CondNF.\label{fig:cnf-16k}}
    \end{subfigure}
    \begin{subfigure}{0.19\textwidth}
        \includegraphics[width=\linewidth]{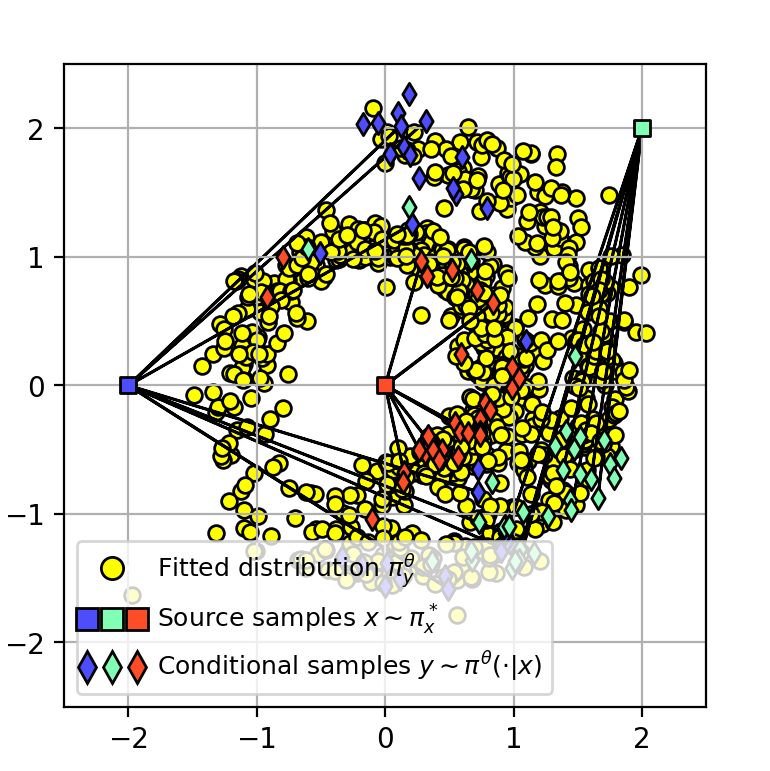}
        \caption{CondNF (SS).}
        \label{fig:cnf-semi-16k}
    \end{subfigure}

    \begin{subfigure}{0.19\textwidth}
        \centering
        \includegraphics[width=\linewidth]{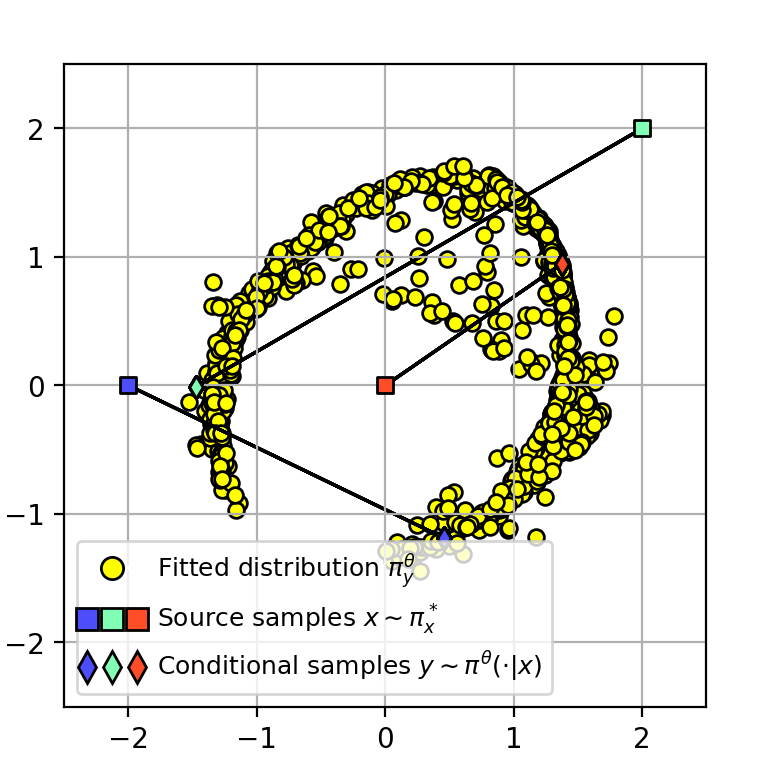}
        \caption{GNOT. \label{fig:gnot-16k}}
    \end{subfigure}
    \begin{subfigure}{0.19\textwidth}
        \centering
        \includegraphics[width=\linewidth]{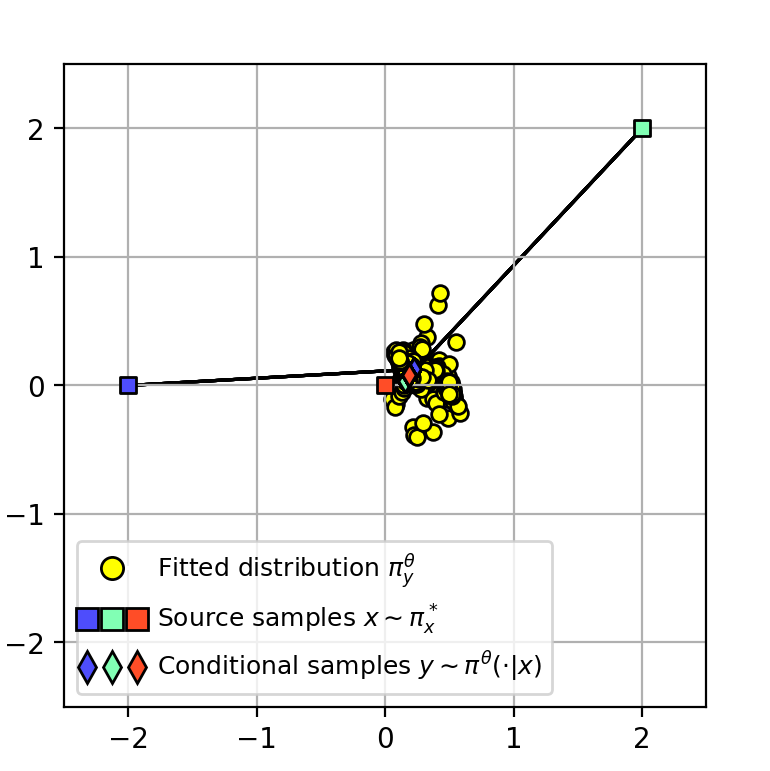}
        \caption{DCPEME. \label{fig:howard-16k}}
    \end{subfigure}
    \begin{subfigure}{0.19\textwidth}
        \centering
        \includegraphics[width=\linewidth]{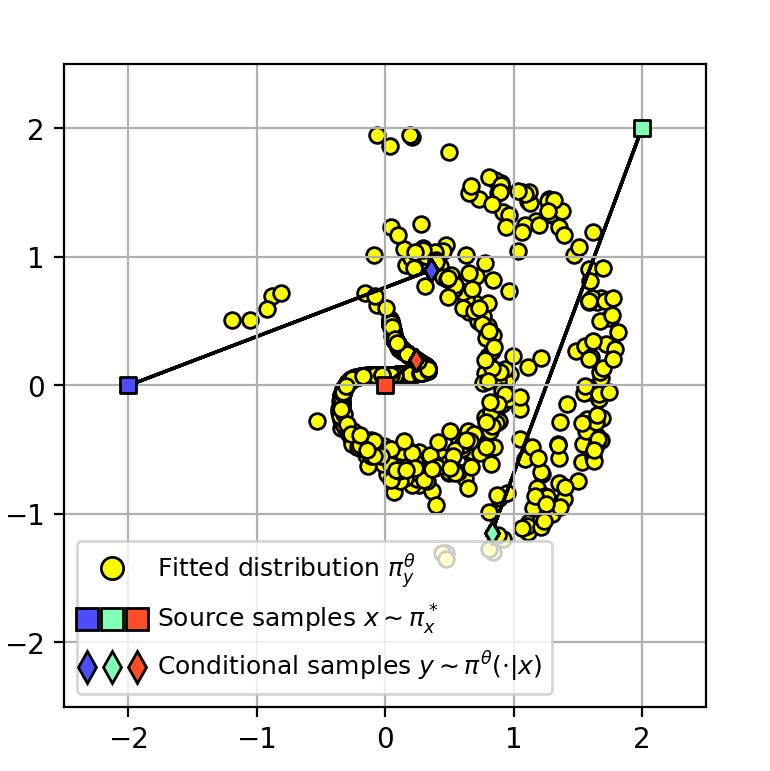}
        \caption{parOT. \label{fig:panda-16k}}
    \end{subfigure}
    \begin{subfigure}{0.19\textwidth}
        \centering
        \includegraphics[width=\linewidth]{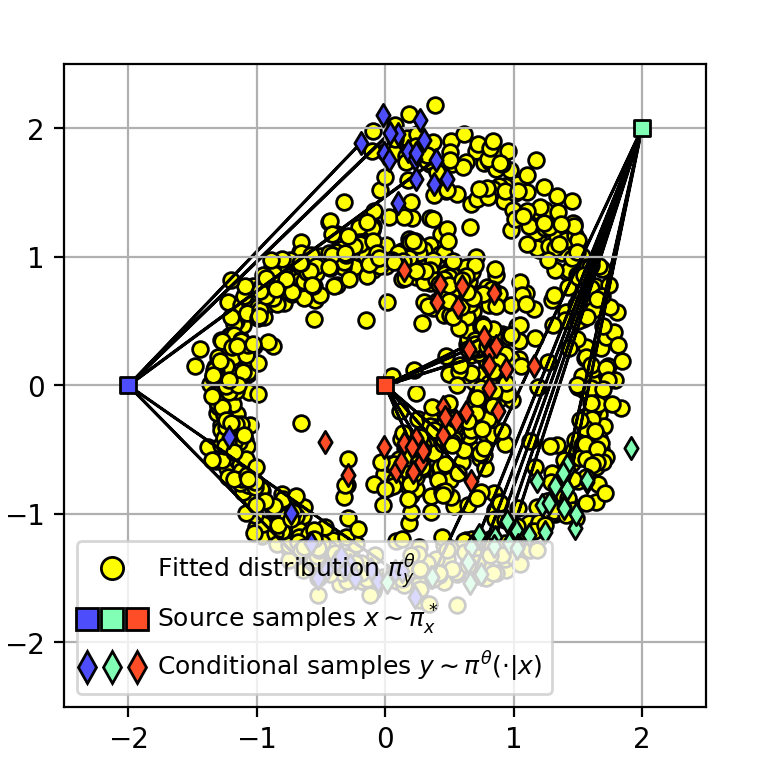}
        \caption{OTCS.\label{fig:keydiff-16k}}
    \end{subfigure}
    \begin{subfigure}{0.19\textwidth}
        \centering
        \includegraphics[width=\linewidth]{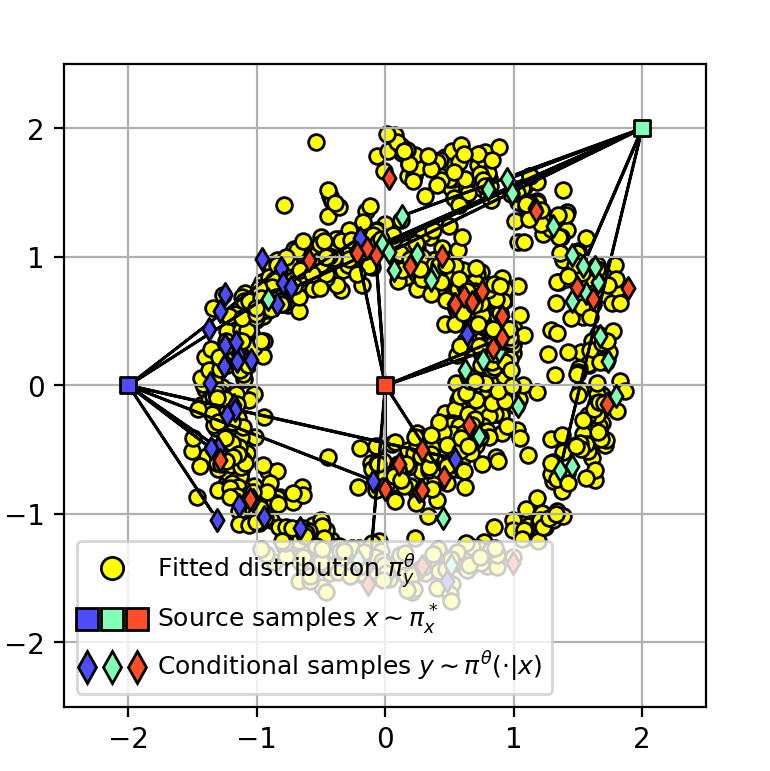}
        \caption{FSBM. \label{fig:fsbm-16k}}
    \end{subfigure}
    
    \caption{Comparison of the mapping learned by baselines on $\textit{Gaussian}\to\textit{Swiss Roll}$ task (\wasyparagraph\ref{subsec:swiss_roll}). We use $P=16k$ paired data, $Q=R=16k$ unpaired data for training.\vspace{-5mm}}
    \label{fig:gauss_to_swiss_roll_16k}
\end{figure*}

\begin{table*}[!ht]
    \centering
    \small
    \resizebox{\linewidth}{!}{
    \begin{tabular}{l|cccc|cccc}
    \toprule
    & \multicolumn{4}{c|}{$P=128, Q=R=1024$} & \multicolumn{4}{c}{$P=16k, Q=R=16k$} \\
    \midrule
    & \multicolumn{2}{c}{Unconditional} & \multicolumn{2}{c|}{Conditional}
    & \multicolumn{2}{c}{Unconditional} & \multicolumn{2}{c}{Conditional} \\
    
    Method
    & MMD $\downarrow$ & $\gW_2$ $\downarrow$
    & MMD $\downarrow$ & $\gW_2$ $\downarrow$
    & MMD $\downarrow$ & $\gW_2$ $\downarrow$
    & MMD $\downarrow$ & $\gW_2$ $\downarrow$ \\

    Order
    & $[10^{-5}]$ & $[10^{-2}]$
    & $[10^{-3}]$ & $[10^{-2}]$
    & $[10^{-5}]$ & $[10^{-2}]$
    & $[10^{-3}]$ & $[10^{-2}]$ \\
    \midrule

    Regression
    & $8.25_{{\pm}5.39}$ & $3.77_{{\pm}0.58}$
    & $37.61_{{\pm}0.00}$ & $89.76_{{\pm}0.00}$
    & $59.90_{{\pm}4.85}$ & $33.40_{{\pm}0.92}$
    & $18.35_{{\pm}0.00}$ & $48.42_{{\pm}0.00}$ \\

    CGAN
    & $666.68_{{\pm}55.69}$ & $21.78_{{\pm}1.31}$
    & $41.72_{{\pm}0.07}$ & $101.09_{{\pm}0.09}$
    & $15.65_{{\pm}17.27}$ & $0.98_{{\pm}0.46}$
    & $22.48_{{\pm}0.39}$ & $96.24_{{\pm}1.03}$ \\

    UGAN+$\ell^{2}$
    & $46.60_{{\pm}18.81}$ & $1.93_{{\pm}0.44}$
    & $40.32_{{\pm}0.00}$ & $94.35_{{\pm}0.04}$
    & $23.89_{{\pm}13.99}$ & $2.70_{{\pm}0.67}$
    & $9.55_{{\pm}0.01}$ & $\mathbf{30.20}_{{\pm}0.01}$ \\

    CondNF
    & $39.76_{{\pm}17.11}$ & $\mathbf{0.02}_{{\pm}3.00}$
    & $\mathbf{26.56}_{{\pm}0.10}$ & $142.93_{{\pm}0.04}$
    & $30.83_{{\pm}18.58}$ & $1.88_{{\pm}0.48}$
    & $20.59_{{\pm}0.29}$ & $84.79_{{\pm}0.61}$ \\

    CondNF (SS)
    & $70.74_{{\pm}8.70}$ & $3.84_{{\pm}0.10}$
    & $26.94_{{\pm}0.22}$ & $\mathbf{71.09}_{{\pm}0.36}$
    & $57.42_{{\pm}2.80}$ & $0.94_{{\pm}1.19}$
    & $14.77_{{\pm}0.16}$ & $69.69_{{\pm}0.28}$ \\

    OTCS
    & $102.52_{{\pm}14.18}$ & $19.61_{{\pm}0.86}$
    & $39.38_{{\pm}0.03}$ & $91.73_{{\pm}0.06}$
    & $\mathbf{6.05}_{{\pm}3.32}$ & $\mathbf{0.69}_{{\pm}0.09}$
    & $\mathbf{9.13}_{{\pm}0.06}$ & $54.11_{{\pm}0.25}$ \\

    FSBM
    & $15.88_{{\pm}7.89}$ & $\mathbf{0.93}_{{\pm}0.19}$
    & $33.75_{{\pm}0.23}$ & $126.06_{{\pm}0.44}$
    & $\mathbf{4.65}_{{\pm}5.49}$ & $\mathbf{0.52}_{{\pm}0.09}$
    & $30.88_{{\pm}0.16}$ & $126.17_{{\pm}0.47}$ \\

    \midrule

    EBiEOT-GMM \textbf{(Ours)}
    & $\mathbf{4.19}_{{\pm}0.84}$ & $\mathbf{0.43}_{{\pm}0.34}$
    & $28.35_{{\pm}0.40}$ & $100.59_{{\pm}0.63}$
    & $\mathbf{6.18}_{{\pm}3.55}$ & $1.00_{{\pm}0.14}$
    & $35.72_{{\pm}0.27}$ & $109.16_{{\pm}0.58}$ \\

    \bottomrule
    \end{tabular}}
    \caption{
    Comparison of methods on the 128 and 16k datasets using MMD and $\gW_2$ metrics in both unconditional and conditional settings. Results are reported as mean $\pm$ standard deviation over multiple runs. The best results among methods with overlapping confidence intervals are highlighted in bold. 
    }
    \label{tab:swiss_roll_metrics}
\end{table*}

\subsubsection{Ablation study}\label{sec:ablation-study}

In this section, we conduct an ablation study to address the question posed in \wasyparagraph\ref{sec:loss-derivation} regarding how the number of source and target samples influences the quality of the learned mapping. The results, shown in Figure \ref{fig:swiss_roll_ablation}, indicate that the quantity of target points $ R $ has a greater impact than the number of source points $ Q $ (compare Figure \ref{fig:ablation-only-target} with Figure \ref{fig:ablation-only-source}). Additionally, it is evident that the inclusion of unpaired data helps mitigate over-fitting, as demonstrated in Figure \ref{fig:ablation-only-paired}.

\begin{figure*}[!h]
    \centering
    \begin{subfigure}{0.23\textwidth}
        \centering
        \includegraphics[width=\linewidth]{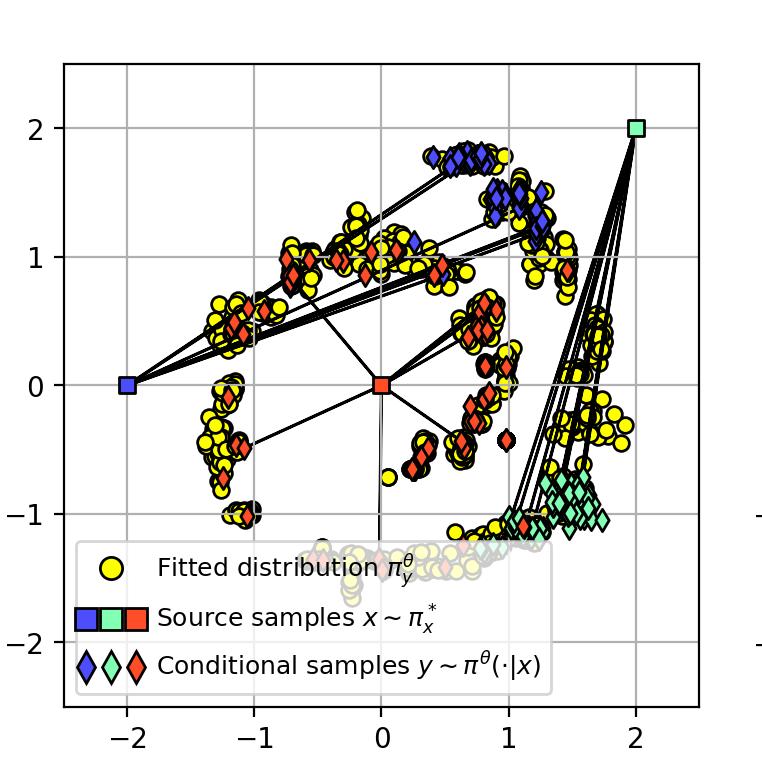}
        \caption{$Q=0, R = 0$\label{fig:ablation-only-paired}}
    \end{subfigure}
    \begin{subfigure}{0.23\textwidth}
        \centering
        \includegraphics[width=\linewidth]{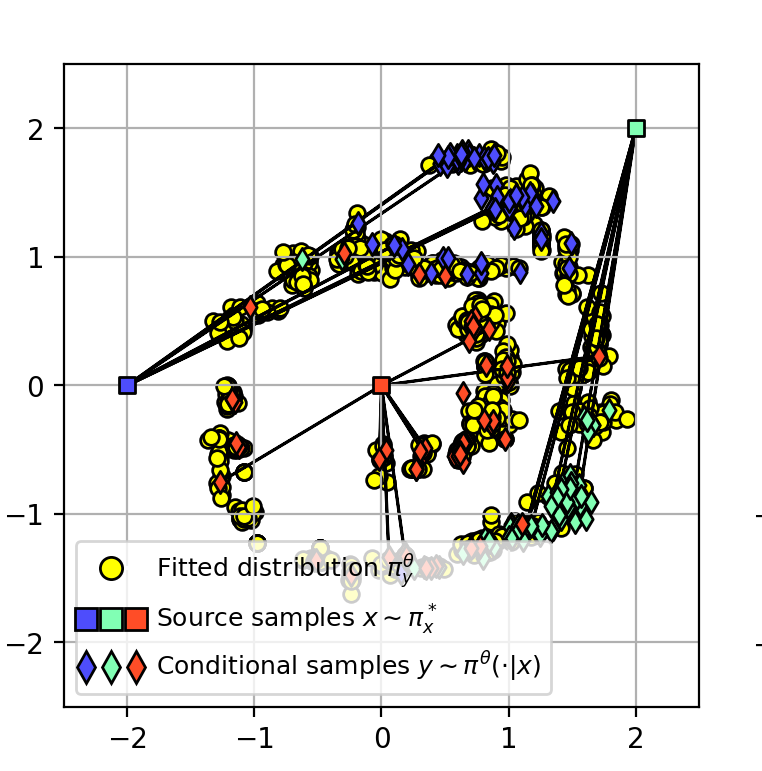}
        \caption{$Q=1024, R = 0$\label{fig:ablation-only-source}}
    \end{subfigure}
    \begin{subfigure}{0.23\textwidth}
        \centering
        \includegraphics[width=\linewidth]{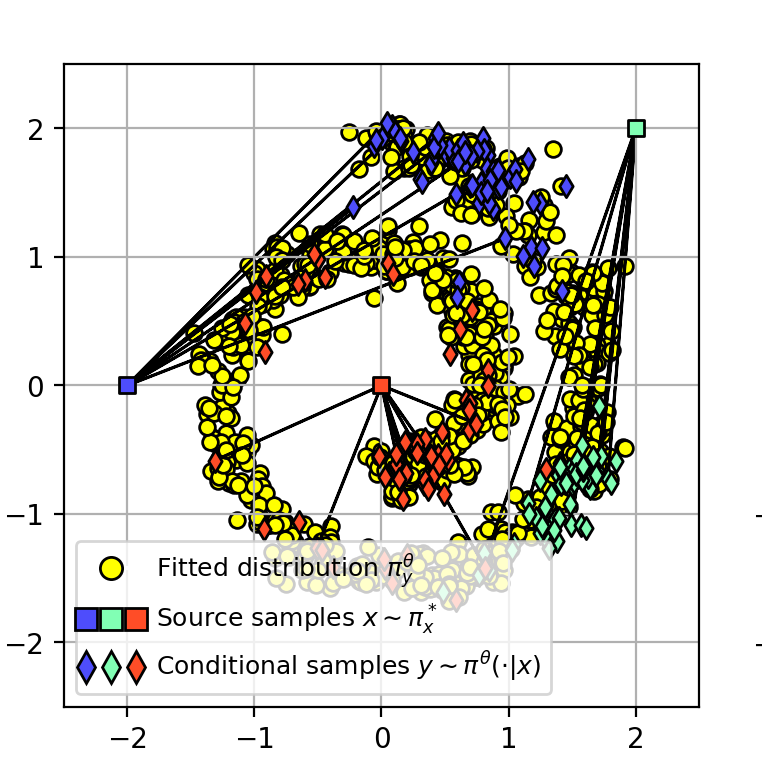}
        \caption{$Q=0, R = 1024$\label{fig:ablation-only-target}}
    \end{subfigure}
    \begin{subfigure}{0.23\textwidth}
        \centering
        \includegraphics[width=\linewidth]{images/swiss_roll/ablation/EBiEOT_GMM_P=128_Q=1024_R=1024.png}
        \caption{$Q\!=\!1024, R \!=\! 1024$\label{fig:ablation-both}}
    \end{subfigure}

    \caption{\centering Ablation study analyzing the impact of varying source and target data point quantities on the learned mapping for the $\textit{Gaussian} \to \textit{Swiss Roll}$ task (using $P=128$ paired samples).\vspace{-5mm}}
\label{fig:swiss_roll_ablation}
\end{figure*}

\subsubsection{Additional Analytical Conditional Distribution}\label{subsec:additional-analytical-conditional-distribution}

\textbf{Setup.} To demonstrate the behavior of our method in scenarios where the ground-truth conditional plan is not induced by optimal transport \citep{flamary2021pot}, we consider an experiment in which the conditional distribution $\gtPlan(\cdot\vert x)$ is analytically known. We construct the target distribution by applying a deterministic mapping from the source distribution onto a Swiss-roll manifold, followed by point dependent noise. For each source point $x$, we first compute $r(x) = \tanh(\|x\|)$. This value is then mapped to the Swiss-roll parameter range: $t(x) = t_{\min} + (t_{\max} - t_{\min})\, r(x)$, where we set $t_{\min} = \tfrac{3}{2}\pi$ and $t_{\max} = \tfrac{9}{2}\pi$. The final target samples are generated using the Swiss-roll transformation $y = (t\cos t,\; t\sin t) + z_{SR}$,
where $z_{SR} \sim \gN(0, \sigma_{SR}^2 I)$ is isotropic Gaussian noise added in the target space. We use $\sigma_{SR} = 0.8$ in all experiments. This construction preserves the characteristic geometry of the Swiss-roll distribution while providing an analytically known conditional ground-truth transport plan $\pi^\star(\cdot \vert x)$. We use the same data setup as in the main text (\wasyparagraph\ref{subsec:swiss_roll}), with $P=128$ paired samples and $Q=R=1024$ unpaired samples.

\textbf{Metrics.}For evaluation, we computed MMD \citep{gretton2012kernel} and Sinkhorn divergence \citep{feydy2019interpolating} with regularization parameter $\varepsilon = 0.001$, which serves as an approximation of $\gW_2$; see Appendix~\ref{subsec:metric-computation-details}.

\textbf{Discussion.} The results for this conditional transport plan are reported in Table~\ref{tab:swiss_roll_additional_128}. All methods struggle to accurately reconstruct the true conditional mapping, although they achieve partial success in recovering the target marginal distribution.

We believe that the main difficulty arises from the structure of the mapping itself: all points with the same radius $\|x\|$ are collapsed onto a single point on the Swiss-roll segment. This many-to-one transformation makes reconstruction of the conditional relationship from paired samples particularly challenging.

\begin{table*}[b]
    \centering
    \begin{tabular}{l|cc|cc}
    \toprule
    & \multicolumn{2}{c}{Unconditional} & \multicolumn{2}{c}{Conditional} \\
    Method & MMD $\downarrow$ & $\gW_2$ $\downarrow$ & MMD $\downarrow$ & $\gW_2$ $\downarrow$ \\
    Order & $[10^{-3}]$ & $[10^{-2}]$ & $[10^{-3}]$ & $[10^{-1}]$ \\
    \midrule
    Regression &
    $18.35_{\pm 0.00}$ & $48.42_{\pm0.00}$ & $88.91_{\pm5.80}$ & $24.94_{\pm0.86}$ \\
    
    CGAN
    & $22.48_{\pm0.39}$ & $96.24_{\pm1.03}$ & $91.31_{\pm8.30}$ & $25.55_{\pm1.56}$ \\
    
    UGAN+$\ell^{2}$
    & $9.55_{\pm 0.01}$ & $30.20_{\pm0.01}$ & $91.96_{\pm8.25}$ & $25.76_{\pm1.58}$ \\
    
    CondNF
    & $20.59_{\pm0.29}$ & $84.79_{\pm0.61}$ & $85.34_{\pm5.65}$ & $23.99_{\pm1.05}$ \\
    
    CondNF (SS)
    & $14.77_{\pm 0.16}$ & $69.69_{\pm0.28}$ & $84.94_{\pm5.22}$ & $23.87_{\pm0.88}$ \\
    
    OTCS
    & $9.13_{\pm0.06}$ & $54.11_{\pm0.25}$ & $\mathbf{41.70}_{\pm1.52}$ & $19.57_{\pm 0.82}$ \\
    
    FSBM
    & $30.88_{\pm0.16}$ & $126.17_{\pm0.47}$ & $43.63_{\pm 2.91}$ & $\mathbf{17.68}_{\pm 0.71}$ \\

    \midrule
    
    EBiEOT-GMM \textbf{(Ours)}
    & $\mathbf{0.03}_{\pm0.04}$ & $\mathbf{0.36}_{\pm0.16}$ & $68.84_{\pm1.59}$ & $22.08_{\pm 1.01}$ \\
    \bottomrule
    \end{tabular}
    \caption{Comparison of methods on the new 128 dataset using MMD and $\gW_2$ metrics for unconditional and conditional settings for setup described in Appendix~\ref{subsec:additional-analytical-conditional-distribution}.}
    \label{tab:swiss_roll_additional_128}
\end{table*}

\subsubsection{Sensitivity Analysis}\label{subsec:sensitivity-analysis}

\textbf{Setup.} We performed an ablation study to investigate the sensitivity of EBiEOT-GMM to the number of Gaussian components $M$ and $N$. Specifically, we evaluated all combinations on the grid
$M, N \in \{8,16,32,64\}$, including the symmetric setting $M=N$. The ground-truth conditional mapping was constructed using the same procedure as in \wasyparagraph\ref{subsec:swiss_roll}, while varying the numbers of source and target Gaussian components according to the selected values of $M$ and $N$.

\textbf{Metrics.} For evaluation, we computed MMD \citep{gretton2012kernel} and Sinkhorn divergence \citep{feydy2019interpolating} with regularization parameter $\varepsilon = 0.001$, which serves as an approximation of $\gW_2$; see Appendix~\ref{subsec:metric-computation-details}.

\textbf{Discussion.} The results are presented in Figure~\ref{fig:sensitivity-analysis}. Increasing the number of Gaussian components improves the unconditional metrics while degrading the conditional ones. This behavior is expected: a larger number of components increases the flexibility of the model and may lead to overfitting of the conditional distribution. At the same time, the EBiEOT-GMM parameterization remains computationally lightweight, making hyperparameter search relatively inexpensive. In practice, the number of components can be efficiently selected using automated optimization frameworks such as Optuna \citep{optuna2019}.

\begin{figure*}[t]
    \centering
    \begin{subfigure}{0.23\textwidth}
        \centering
        \includegraphics[width=\linewidth]{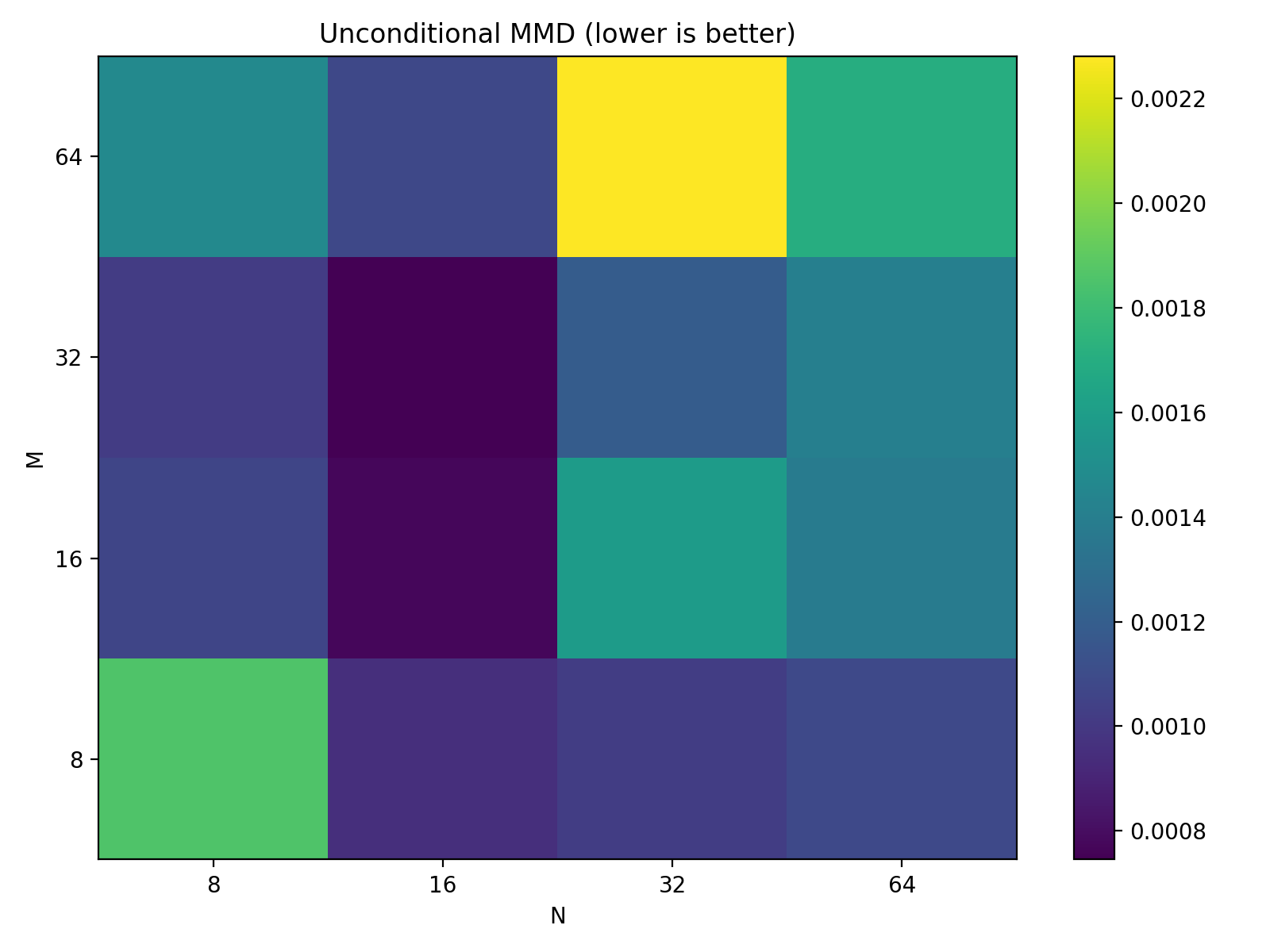}
    \end{subfigure}
    \begin{subfigure}{0.23\textwidth}
        \centering
        \includegraphics[width=\linewidth]{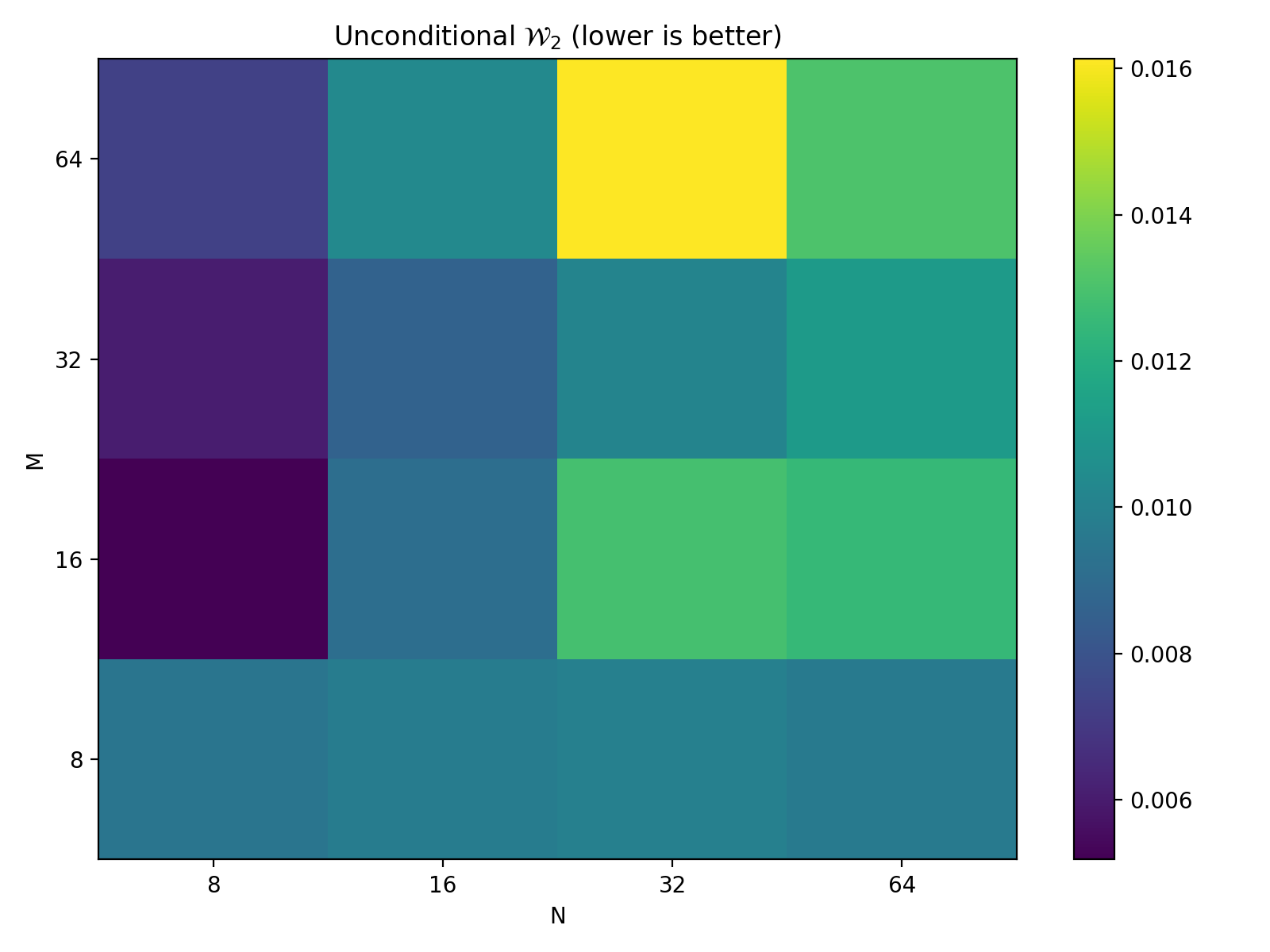}
    \end{subfigure}
    \begin{subfigure}{0.23\textwidth}
        \centering
        \includegraphics[width=\linewidth]{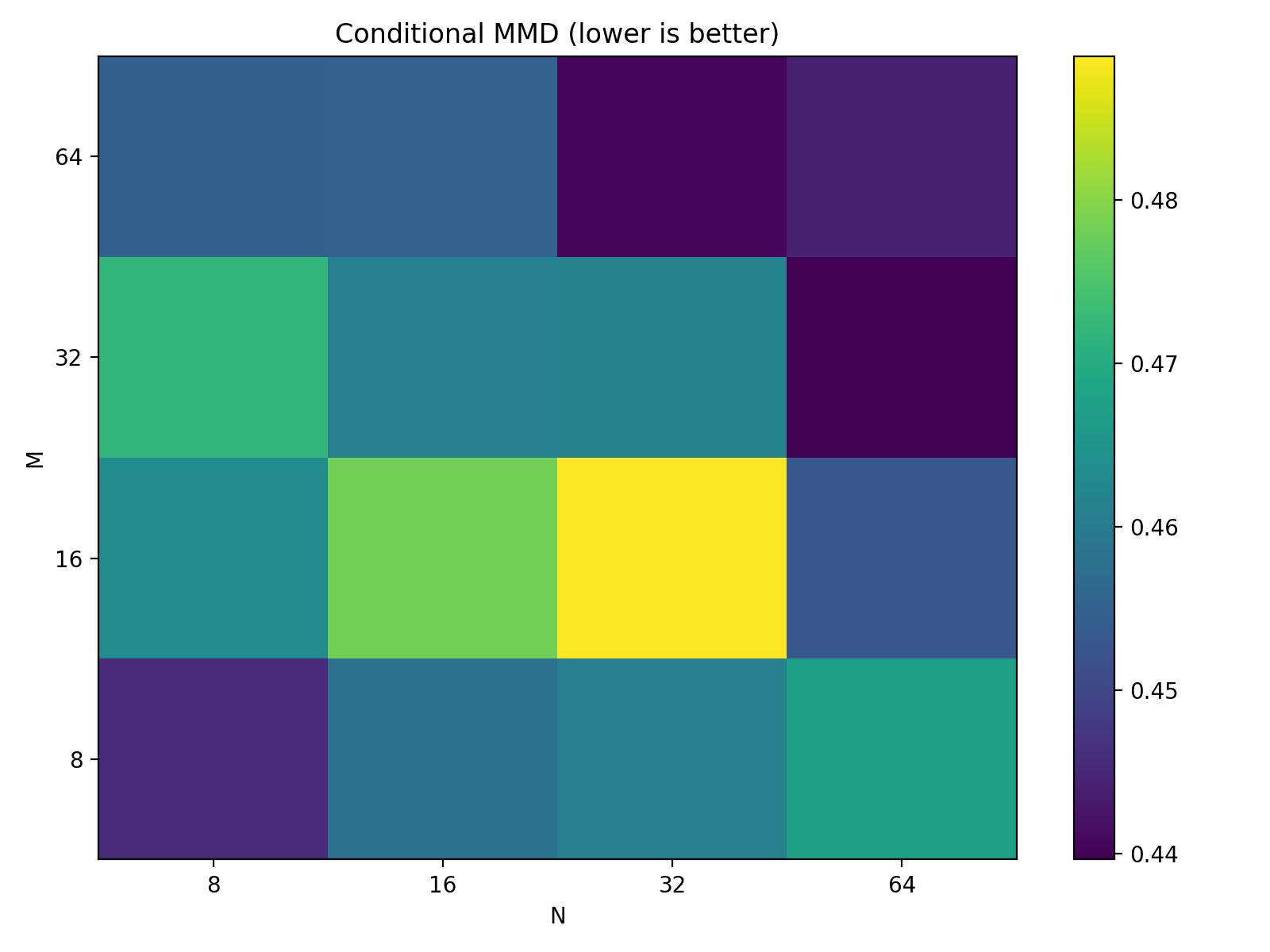}
    \end{subfigure}
    \begin{subfigure}{0.23\textwidth}
        \centering
        \includegraphics[width=\linewidth]{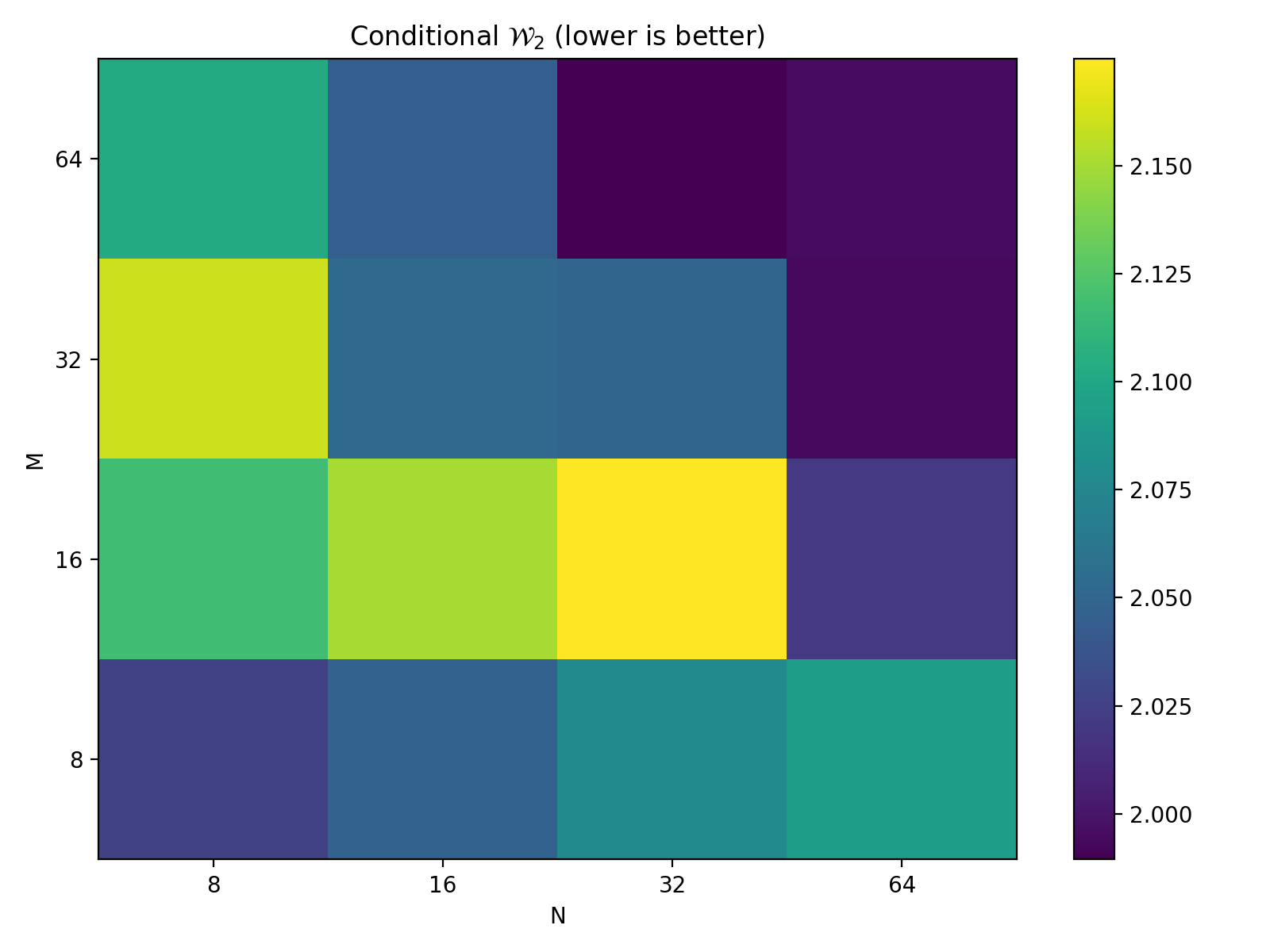}
    \end{subfigure}

    \caption{Results for sensitivity analysis described in Appendix~\ref{subsec:sensitivity-analysis} for experiment \wasyparagraph\ref{subsec:swiss_roll}.\vspace{-5mm}} \label{fig:sensitivity-analysis}
\end{figure*}

\subsection{Semi-supervised Classification}\label{subsec:semi-supervised-classification}

To demonstrate that our framework extends beyond domain translation, we apply the proposed objective \eqref{eq:loss} to a semi-supervised classification problem, following the discrete formulation sketch outlined in Appendix~\ref{appendix:discrete-target}. The goal is to learn the conditional distribution $\gtPlan(y \vert x)$ from a small set of labeled examples together with additional unlabeled inputs.

In the classification setting, the target variable $y$ takes values in a finite set of classes $\{y_1,\dots,y_K\}$. Using the Gibbs-Boltzmann parameterization \eqref{eq:boltzmann_parameterization} together with the energy decomposition \eqref{eq:energy_function}, we obtain the discrete conditional model
\begin{equation}
    \pi^{\theta}(y_k \vert x) = \frac{1}{Z^\theta(x)} \exp\left(\frac{f^\theta_k - c^{\theta}(x,y_k)}{\varepsilon}\right), \qquad \log Z^{\theta}(x) = \log\sum_{k=1}^K\exp\left(\frac{f^\theta_k - c^{\theta}(x,y_k)}{\varepsilon}\right), 
\end{equation}

Here, $f^\theta \in \R^K$ is a learnable vector of class potentials, and $c^\theta(x,y_k)$ is a neural-network-based cost function. Substituting this parameterization into the general objective \eqref{eq:loss}, the continuous integral over $\gY$ becomes a finite sum over classes:
\begin{equation}
    \mathcal{L}(\theta)
    =
    \frac{1}{\varepsilon}
    \E_{(x,y)\sim\gtPlan}
    \left[
        c^\theta(x,y)
    \right]
    -
    \frac{1}{\varepsilon}
    \E_{y\sim\margY}
    \left[
        f^\theta(y)
    \right]
    +
    \E_{x\sim\margX}
    \left[
        \log Z^\theta(x)
    \right].
\end{equation}
When the class prior is uniform, i.e. $\margY(y_k)=1/K$, the second term simplifies to $-\frac{1}{K\varepsilon}\sum_{k=1}^{K} f^\theta_k$.

\textbf{Setup.} We conducted experiments on the MNIST dataset \citep{bottou1994comparison}, which contains ten classes corresponding to digits $0,\dots,9$. We performed an ablation study by varying the number of labeled samples $P$ and unlabeled samples $Q$. To obtain a balanced target marginal distribution $\margY$, we used the same number of labeled samples from each class.

\textbf{Parameterization.} We parameterize the joint cost $c(x,y)$ using embedding-based MLP and CNN architectures that evaluate all classes in a single forward pass. Each label $y$ is represented by a learned embedding, which is combined with features of $x$ to predict $c(x,y)$. The MLP uses flattened images and two 256-unit hidden layers, while the CNN uses two convolutional blocks followed by a 128-dimensional projection. In both cases, energies for all classes are computed in parallel. Training uses the joint energies, a learnable class potential $f(y)$, and a log-sum-exp marginal over labels; predictions follow $p(y\mid x)\propto \exp((f(y)-c(x,y))/\varepsilon)$.

\textbf{Metrics.} We report classification accuracy. For each image $x$, we compute the Gibbs posterior over all classes, $p(y\mid x)\propto \exp\bigl((f(y)-c(x,y))/\varepsilon\bigr)$, and predict the label with the highest probability: $\hat y=\arg\max_y p(y\mid x)$. A prediction is counted as correct if $\hat y$ matches the ground-truth label. Accuracy is then computed as the fraction of correctly classified test images.

\textbf{Discussion.} The results are presented in Figure~\ref{fig:ss-classification}. The heatmap on test set show that adding unlabeled data generally improves accuracy, especially in the low-label regime ($P < 100$). This trend is consistent with the findings already reported in Table~\ref{exps:weather_prediction}. At the same time, very large $Q$ can hurt final performance, suggesting overfitting or reduced stability in that regime. We also observe that $P = 200$ paired samples is already sufficient to learn a strong classifier, while the test heatmaps indicate that some reduction in paired data can be compensated by increasing the amount of unlabeled data.

\begin{figure}[htb]
    \centering

    \includegraphics[width=0.85\textwidth]{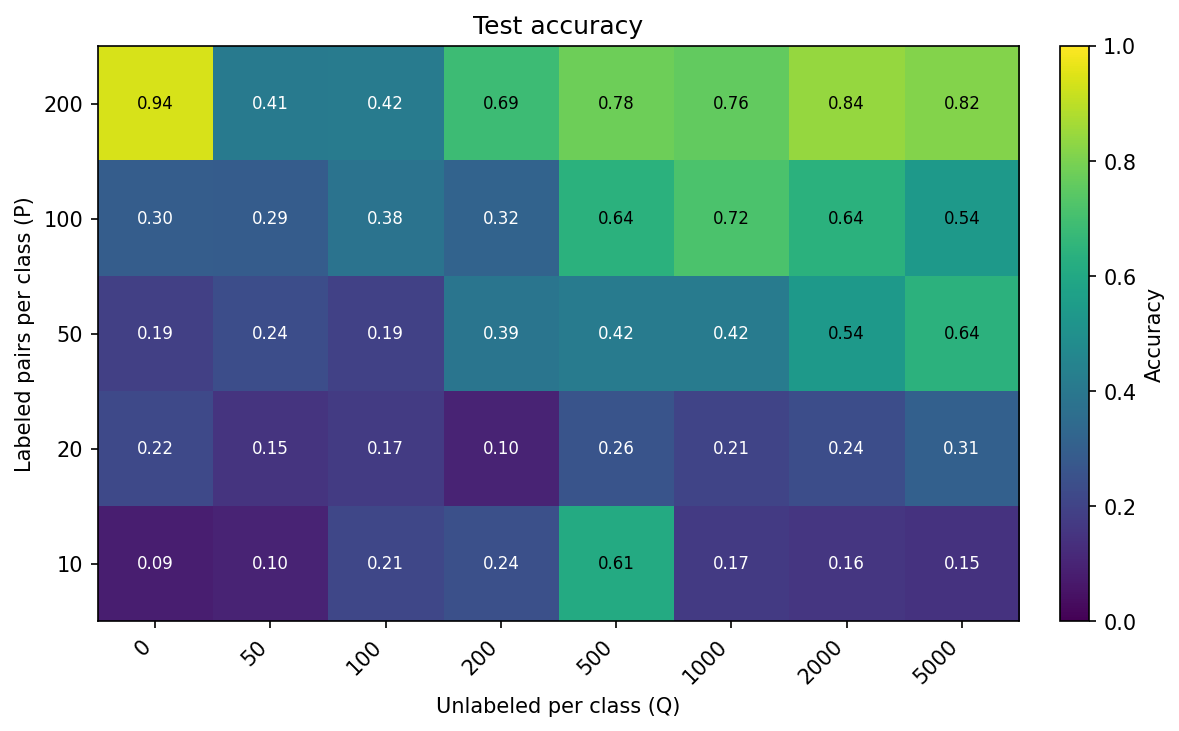}
    \caption{Heatmap for ablation of EBiEOT method for classification task described in Appendix\ref{subsec:semi-supervised-classification} for MNIST dataset.}\label{fig:ss-classification}
\end{figure}

\section{Experimental Details}\label{sec:experimental_details}
\subsection{Baseline Details}\label{sec:loss-function-details}

This section details the loss functions employed by the baseline models, providing context and explanation for the data usage summarized in Table \ref{table-data-usage}.  Furthermore, it explains a straightforward adaptation of the log-likelihood loss function presented in \eqref{eq:loss_MLE} to accommodate unpaired data, offering a natural comparative approach to the method proposed in our work. Finally, it includes details about our reproduction of other methods and their discussion.\vspace{-1mm}
\begin{enumerate}[leftmargin=*]
\item \textbf{Standard generative \& predictive models:}
    \begin{itemize}[leftmargin=*]
        \item \textbf{Regression Model} (MLP) uses the following simple $\ell^{2}$ loss
        $$\min_{\theta} \E_{(x, y) \sim \pi^{*}}||y - G_{\theta}(x)||^2,$$
        where $G_{\theta}:\mathcal{X}\rightarrow\mathcal{Y}$ is a generator MLP with trainable parameters $\theta$. Clearly, such a model can use only paired data. Furthermore, it is known that the optimal regressor $G^{*}$ coincides with $\mathbb{E}_{y\sim\pi^{*}(\cdot|x)}y$, i.e., predicts the conditional expectation. Therefore, such a model will never learn the true data distribution unless all $\pi^{*}(\cdot|x)$ are degenerate.
        \item \textbf{Conditional GAN} uses the following $\min\max$ loss:
        $$\min_{\theta}\max_{\phi}\bigg[\underbrace{\E_{x,y \sim \pi^{*}} \log{(D_{\phi}(y | x))}}_{\text{Joint, requires pairs $(x,y)\sim\pi^{*}$}} + \underbrace{\E_{x \sim \pi^{*}_{x}}\E_{z \sim p_{z}(z)} \log{(1 - D_{\phi}(G_{\theta}(z|x) | x)}}_{\text{Marginal, requires $x \sim \margX$}}\bigg],$$
        where $G_{\theta}:\mathcal{Z}\times \mathcal{X}\rightarrow\mathcal{Y}$ is the conditional generator with parameters $\theta$, $p_{z}$ is a distribution on latent space $\mathcal{Z}$, and $D_{\phi}:\mathcal{Y}\times\mathcal{X}\rightarrow (0,1)$ is the conditional discriminator with parameters $\phi$. It is clear that the model can use not only paired data during the training, but also samples from $\pi^{*}_x$. The minimum of this loss is achieved when $G_{\theta}(\cdot|x)$ generates $\pi^{*}(\cdot|x)$ from $p_z$.
    
        \item \textbf{Unconditional GAN + $\ell^{2}$ loss} optimizes the following $\min\max$ objective:
        $$\footnotesize\min_{\theta}\max_{\phi}\bigg[ \lambda\underbrace{\E_{(x, y) \sim \pi^{*}}\E_{z\sim p_z}||y - G_{\theta}(x, z)||^2}_{\text{Joint, requires pairs $(x,y)\sim\pi^{*}$}} + \!\!\underbrace{\E_{y \sim \pi^{*}_y} \log{(D_{\phi}(y))}}_{\text{Marginal, requires $y \sim \margY$}}\!\!\!\! + \underbrace{\E_{x \sim \pi_x^{*}}\E_{z\sim p_z} \log{(1 - D_{\phi}(G_{\theta}(x, z))}}_{\text{Marginal, requires $x \sim \margX$}}\bigg], $$
        where $\lambda>0$ is a hyperparameter and $G_{\theta}:\mathcal{X}\times \mathcal{Z}\rightarrow\mathcal{Y}$ is the stochastic generator. Compared to the unconditional case, the main idea here is to use the unconditional discriminator $D_{\phi}:\mathcal{Y}\rightarrow (0,1)$. This allows using unpaired samples from $\pi^{*}_y$. However, using only GAN loss would ignore the paired information in any form, this is why the supervised $\ell^{2}$ loss is added ($\lambda=1$).
        
        We note that this model has a trade-off between the target matching loss (GAN loss) and regression loss (which suffers from averaging). Hence, the model is unlikely to learn the true paired data distribution and can be considered as a heuristic loss for using both paired and unpaired data. Overall, we believe this baseline is representative of many existing GAN-based solutions \citep[\wasyparagraph 3.5]{tripathy2019learning}, \citep[\wasyparagraph 3.3]{jin2019deep}, \citep[\wasyparagraph C]{yang2020learning}, \citep[\wasyparagraph 3]{vasluianu2021shadow}, which use objectives that are \textit{ideologically} similar to this one for paired and unpaired data.
    
        \item \textbf{Conditional Normalizing Flow} \citep{winkler2019learning} learns an explicit density model
        $$\pi^{\theta}(y|x)=p_{z}(G_{\theta}^{-1}(y|x))\left|\dfrac{\partial G_{\theta}^{-1}(y|x)}{\partial y}\right|$$
        via optimizing log-likelihood \eqref{eq:loss_MLE} of the paired data. Here $G_{\theta}:\mathcal{Z}\times\mathcal{X}\rightarrow\mathcal{Y}$ is the conditional generator function. It is assumed that $\mathcal{Z}=\mathcal{Y}$ and $G_{\theta}(\cdot|x)$ is invertible and differentiable. In the implementation, we use the well-celebrated RealNVP neural architecture \citep{dinh2017density}. The optimal values are attained when the generator $G_{\theta}(\cdot|x)$ indeed generates $\pi^{\theta}(\cdot|x)=\pi^{*}(\cdot|x)$.
        
        The conditional flow is expected to accurately capture the true conditional distributions, provided that the neural architecture is sufficiently expressive and there is an adequate amount of paired data available. However, as mentioned in \wasyparagraph\ref{sec:loss-derivation}, a significant challenge arises in integrating unpaired data into the learning process. For instance, approaches such as those proposed by \citep{atanov2019semi, izmailov2020semi} aim to extend normalizing flows to a semi-supervised context. However, these methods primarily assume that the input conditions $x$ are discrete, making it difficult to directly apply their frameworks to our continuous case. For completeness, below we discuss a variant of the log-likelihood loss \citep[Eq. 1]{atanov2019semi} when both $x,y$ are continuous.
    \end{itemize}
\item\textbf{Semi-supervised log-likelihood} methods \citep{atanov2019semi, izmailov2020semi}:
    \begin{itemize}[leftmargin=*]
        \item \textbf{Semi-supervised Conditional Normalizing Flows}. As noted by the authors, a natural strategy for log-likelihood semi-supervised training that leverages both paired and unpaired data is to optimize the following loss:
        \begin{equation}\label{eq:naive_loss}
            \max_{\theta} \bigg[ \underbrace{\E_{(x, y) \sim \pi^{*}} \log\ \pi^{\theta}(y|x)}_{\text{Joint, requires pairs $(x,y)\sim\pi^{*}$}} + \underbrace{\E_{y \sim \pi^{*}_y} \log\  \pi^{\theta}(y)}_{\text{Marginal, requires $y \sim\pi^{*}_y$}} \bigg].
        \end{equation}
        This straightforward approach involves adding the unpaired data component, $\E_{y \sim \margY} \log \recPlan(y)$ to the loss function alongside the standard paired data component \eqref{eq:loss_MLE}. While loss \eqref{eq:naive_loss} looks natural, its optimization is \textit{highly non-trivial} since the marginal log-likelihood $\log \pi^{\theta}(y)$ is not directly available. In fact, \citep{atanov2019semi, izmailov2020semi} use this loss exclusively in the case when $x$ is a discrete object, e.g., the class label $x\in\{1,2,...,K\}$. In this case $\log \pi^{\theta}(y)$ can be analytically computed:
        $$\log \pi^{\theta}(y)=\log \mathbb{E}_{x\sim\pi^{*}_x}\pi^{\theta}(y|x)=\log \sum_{k=1}^{K} \pi^{\theta}(y|x=k)\pi^{*}_{x}(x=k),$$
        and $\pi^{*}(x=k)$ are known class probabilities. Unfortunately, in the continuous case $\pi^{*}_x(x)$ is typically not available explicitly, and one has to exploit \textit{approximations} such as
        $$\log \pi^{\theta}(y)=\log \mathbb{E}_{x\sim\pi^{*}_x}\pi^{\theta}(y|x)\approx \log\frac{1}{Q}\sum_{q=1}^{Q} \pi^{\theta}(y|x_{q}),$$
        where $x_{q}$ are train (unpaired) samples. However, such Monte-Carlo estimates are generally \textbf{biased} (because of the logarithm) and do not lead to good results, especially in high dimensions. Nevertheless, for completeness, we also test how this approach performs. In our 2D example (Figure \ref{fig:semi-cnf}), we found there is no significant difference between this loss and the fully supervised loss \eqref{eq:loss_MLE}: both models incorrectly map to the target and fail to learn conditional distributions.
        \item \textbf{Semi-supervised Conditional Gaussian Mixture Model}. Using the natural loss \eqref{eq:naive_loss} for semi-supervised learning, one could also consider a (conditional) Gaussian mixture parameterization for $\pi^{\theta}(y|x)$ instead of the normalizing flow. For completeness, we include this baseline for comparison. Using the same Gaussian mixture parameterization \eqref{eq:cond-distr} as in our method, we observed that this loss quickly overfits and leads to degenerate solutions, see Figure \ref{fig:semi-cgmm}.
    \end{itemize}
\item \textbf{Semi-supervised Methods. }
These methods are designed to learn deterministic OT maps with general cost functions and, as a result, cannot capture stochastic conditional distributions. 
\begin{itemize}
    \item \textbf{Neural optimal transport with pair-guided cost functional} \citep[GNOT]{asadulaev2024neural}. This method employs a general cost function for the neural optimal transport approach, utilizing a neural network parameterization for the mapping function and potentials. In our experiments, we focus on the paired cost function setup, enabling the use of both paired and unpaired data. We use the publicly available implementation\footnote{\faGithub\enspace\href{https://github.com/machinestein/GNOT}{https://github.com/machinestein/GNOT}}, which has been verified through toy experiments provided in the repository. 
    \item \textbf{Differentiable cost-parameterized entropic mapping estimator} \citep[DCPEME]{howard2024differentiable}. We obtained the implementation from the authors but were unable to achieve satisfactory performance. This is likely due to the deterministic map produced by their method based on the entropic map estimator from \citep{cuturi2023monge}. In particular, scenarios where nearby or identical points are mapped to distant locations  may introduce difficulties, potentially leading to optimization stagnation during training.
    \item \textbf{Parametric Pushforward Estimation With Map Constraints} \citep[parOT]{panda2023semi}\footnote{\faGithub\enspace\href{https://github.com/natalieklein229/uq4ml/tree/parot}{https://github.com/natalieklein229/uq4ml/tree/parot}}. We evaluated this method using the $\ell_2$ cost function, where it performed as expected. However, on our setup, the method proved unsuitable because it learns a fully deterministic transport map, which lacks the flexibility needed to model stochastic multimodal mapping. This limitation is visually evident in Figure \ref{fig:panda-16k}.
    \item  \textbf{Optimal Transport-guided Conditional Score-based diffusion model} \citep[OTCS]{gu2023optimal}. We evaluated this method on a two-dimensional example from their GitHub repository\footnote{\faGithub\enspace\href{https://github.com/XJTU-XGU/OTCS/}{https://github.com/XJTU-XGU/OTCS/}}, where it performed as expected. However, when applied to our setup (described in \wasyparagraph\ref{subsec:swiss_roll}), the method failed to yield satisfactory results, even when provided with a large amount of training data (refer to Figure \ref{fig:keydiff-16k} and detailed in Appendix \ref{sec:16k-paired-data}).

    \item \textbf{Feedback Schrödinger Bridge Matching} \citep[FSBM]{theodoropoulos2024feedback}. We first tested the method on a two-dimensional example from their GitHub repository\footnotemark\footnotetext{\faGithub\enspace\href{https://github.com/panostheo98/FSBM}{https://github.com/panostheo98/FSBM}}, where it performed as reported in the original paper. However, as shown in Figure~\ref{fig:fsbm}, the learned target distribution is very noisy with a small amount of data. With more samples (Figure~\ref{fig:fsbm-16k}), the method approximates the target distribution better but still fails to capture the ground-truth conditional distribution, presumably due to misleading guidance from the key-points.
\end{itemize}
\end{enumerate}

\subsection{Metric Computation Details}\label{subsec:metric-computation-details}

In this section, we describe the evaluation metrics used in our experiments.

\textbf{Sinkhorn divergence.} The \textit{Wasserstein-2 distance} is defined as
\begin{equation}\label{eq:W_2}
    \gW_2(\alpha, \beta)
    \defeq
    \left(
    \min_{\pi \in \Pi(\alpha, \beta)}
    \mathbb{E}_{x,y \sim \pi}
    \|x-y\|_2^2
    \right)^{\tfrac{1}{2}},
    \tag{$\gW_2$}
\end{equation}
i.e., the square root of the OT problem \eqref{eq:OT} with quadratic cost
$c^*(x,y)=\|x-y\|_2^2$. Computing OT exactly is computationally expensive: the complexity of solving the OT problem is $O(s^3 \log s)$ for sample size $s$ \citep{cuturi2013sinkhorn}. Moreover, OT suffers from the curse of dimensionality. In particular, for empirical measures
$$
    \hat{\alpha}_s = \frac{1}{s}\sum_{i=1}^s \delta_{X_i},
    \qquad
    \hat{\beta}_s = \frac{1}{s}\sum_{i=1}^s \delta_{Y_i},
$$
the estimation error scales as \citep{weed2019sharp}
\begin{equation}
    \mathbb{E}
    \left|
    \OT_{c^*}(\alpha,\beta)
    -
    \OT_{c^*}(\hat{\alpha}_s,\hat{\beta}_s)
    \right|
    =
    O(s^{-1/D}),
\end{equation}
where $D$ is the data dimension. To address these issues, in practice it's common to use entropic regularized OT \eqref{eq:EOT}, discussed in detail in Appendix~\ref{appendix:background-weak-eot}. This formulation reduces the computational complexity to $O(s^2)$ (up to polylogarithmic terms) \citep{altschuler2017near, dvurechensky2018computational} and achieves improved sample complexity \citep{niles2022estimation, groppe2024lower}:
\begin{equation}
    \mathbb{E}
    \left|
    \EOT_{c^*,\varepsilon}(\alpha,\beta)
    -
    \EOT_{c^*,\varepsilon}(\hat{\alpha}_s,\hat{\beta}_s)
    \right|
    \lesssim
    \frac{1}{\sqrt{s}},
\end{equation}
where $\lesssim$ denotes inequality up to a constant independent of $s$. However, entropic OT is biased, since $\EOT_{c^*,\varepsilon}(\alpha,\alpha) \neq 0$.
To remove this bias, \citet{feydy2019interpolating} introduced the \textit{Sinkhorn divergence}:
\begin{equation}\label{eq:sinkhorn_divergence}
    SD_{c^*,\varepsilon}(\alpha,\beta)
    \defeq
    \EOT_{c^*,\varepsilon}(\alpha,\beta)
    -
    \frac{1}{2}\EOT_{c^*,\varepsilon}(\alpha,\alpha)
    -
    \frac{1}{2}\EOT_{c^*,\varepsilon}(\beta,\beta).
\end{equation}
Moreover, as $\varepsilon \to 0$, the Sinkhorn divergence converges to the OT distance \citep{feydy2019interpolating}. In our experiments, we approximate $\gW_2$ using Sinkhorn divergence with $\varepsilon = 0.001$, computed with the \texttt{GeomLoss} library \citep{feydy2019interpolating}.

\textbf{MMD.} We compute the \textit{Maximum Mean Discrepancy} (MMD) \citep{gretton2012kernel} using its kernel formulation. Let
$K : \mathcal{X} \times \mathcal{X} \to \R$ be a positive-definite kernel with associated reproducing kernel Hilbert space (RKHS) $\mathcal{H}$ and feature map $\phi$.
The squared MMD between distributions $\rho_k$ and $\hat{\rho}_k$ is defined as
\begin{equation}
    \mathrm{MMD}^2_K(\rho_k,\hat{\rho}_k)
    =
    \mathbb{E}_{x,x' \sim \rho_k} K(x,x')
    -
    2\mathbb{E}_{x \sim \rho_k,\, y \sim \hat{\rho}_k} K(x,y)
    +
    \mathbb{E}_{y,y' \sim \hat{\rho}_k} K(y,y').
\end{equation}

Importantly, MMD can be computed directly through kernel evaluations without explicitly constructing feature maps. Given samples
$\{x_i\}_{i=1}^N \sim \rho_k$ and
$\{y_j\}_{j=1}^M \sim \hat{\rho}_k$,
we use the unbiased estimator
\begin{equation}
    \widehat{\mathrm{MMD}}^2_K
    =
    \frac{1}{N(N-1)}
    \sum_{i \neq i'} K(x_i,x_{i'})
    -
    \frac{2}{NM}
    \sum_{i=1}^N \sum_{j=1}^M K(x_i,y_j)
    +
    \frac{1}{M(M-1)}
    \sum_{j \neq j'} K(y_j,y_{j'}).
\end{equation}

The estimator has quadratic complexity in the number of samples. In all experiments, we use a uniform mixture of five Gaussian (RBF) kernels,
\begin{equation}
    K(x,y)
    =
    \frac{1}{5}
    \sum_{r=1}^{5}
    \exp\!\left(
        -\frac{\|x-y\|^2}{2\sigma_r^2}
    \right).
\end{equation}
The bandwidths $\{\sigma_r\}_{r=1}^5$ are chosen on a geometric grid with ratio $2$. First, we compute the median pairwise Euclidean distance over the pooled test samples
$\mathrm{X}_{\mathrm{test}} \cup \mathrm{Y}_{\mathrm{test}}$ (the median heuristic). This value defines the central bandwidth. The remaining bandwidths are obtained by scaling it by factors $\{2^{-2}, 2^{-1}, 1, 2, 2^2\}$.

\textbf{Estimation Protocol.} As discussed above, empirical Wasserstein distances have poor sample complexity and therefore typically require a large number of samples for accurate estimation. However, in the two-dimensional setting this issue is less pronounced. We report both unconditional and conditional metrics. The unconditional metric evaluates how accurately a method reconstructs the target marginal distribution. For this evaluation, we use 1024 generated samples. The conditional metric evaluates how accurately the conditional transport plan is recovered. In this setting, we use 30 initial points and 1024 corresponding target samples.

\subsection{General Implementation Details}
\textbf{Parameterization.} The depth and number of hidden layers vary depending on the experiment.

For $f^\theta$ \eqref{eq:dual_potential} we represent:
\begin{itemize}[topsep=0.25mm, itemsep=0.5mm, parsep=0.5mm]
    \item $w_n$ as $\log w_n$,
    \item $b_n$ directly as a vector,
    \item the matrix $B_n$ in diagonal form, with $\log(B_n)_{i,i}$ on its diagonal. This choice not only reduces the number of learnable parameters in $\theta_f$ but also enables efficient computation of $B_n^{-1}$ with a time complexity of $\mathcal{O}(D_y)$.  
\end{itemize}
For $c^\theta$ \eqref{eq:cost}, we represent:  
\begin{itemize}[topsep=0.25mm, itemsep=0.5mm, parsep=0.5mm]
    \item $v_m(x)$ as a multilayer perceptron (MLP) with ReLU activations \citep{agarap2018deep} and a LogSoftMax output layer,
    \item $a_m(x)$ as an MLP with ReLU activations.
\end{itemize}

\textbf{Optimizers.} We employ two separate Adam optimizers \citep{kingma2014adam} with different step sizes for paired and unpaired data to enhance convergence.  

\textbf{Initialization.}
\begin{itemize}[topsep=0.25mm, itemsep=0.5mm, parsep=0.5mm]
    \item $\log w_n$ as $\log\frac{1}{n}$,
    \item $b_n$ using random samples from $\margY$,
    \item $\log(B_n)_{j,j}$ with $\log(0.1)$,
    \item for the neural networks, we use the default PyTorch initialization \citep{ansel2024pytorch2},
    \item $\varepsilon = 1$ for all experiments, since the solver is independent of $\varepsilon$, as discussed in \wasyparagraph\ref{text:epsilon_independence}. 
\end{itemize}

\subsection{Gaussian to Swiss Roll Mapping Details}\label{app:gauss-to-swiss_details}

\begin{wraptable}{r}{0.43\linewidth}
\vspace{-4mm}
\centering
\resizebox{\linewidth}{!}{
\begin{tabular}{@{}|c|c|c|c|@{}}
\hline
\textbf{Method} & \makecell{\textbf{Paired}\\$(x,y) \sim \gtPlan$} & \makecell{\textbf{Unpaired} \\ $x \sim \margX$}  & \makecell{\textbf{Unpaired} \\ $y \sim \margY$} \\ 
\hline
Regression & \textcolor{Green}{\checkmark} & \textcolor{Red}{\xmark} & \textcolor{Red}{\xmark} \\  
\hline
UGAN + $\ell^{2}$ & \textcolor{Green}{\checkmark} & \textcolor{Green}{\checkmark} & \textcolor{Green}{\checkmark} \\  
\hline
CGAN & \textcolor{Green}{\checkmark} & \textcolor{Green}{\checkmark} & \textcolor{Red}{\xmark} \\ 
\hline
CondNF & \textcolor{Green}{\checkmark} & \textcolor{Red}{\xmark} & \textcolor{Red}{\xmark} \\ 
\hline
CondNF (SS) & \textcolor{Green}{\checkmark} & \textcolor{Green}{\checkmark} & \textcolor{Green}{\checkmark} \\ 
\hline
GNOT & \textcolor{Green}{\checkmark} & \textcolor{Green}{\checkmark} & \textcolor{Green}{\checkmark} \\ 
\hline
DCPEME & \textcolor{Green}{\checkmark} & \textcolor{Green}{\checkmark} & \textcolor{Green}{\checkmark} \\ 
\hline
parOT & \textcolor{Green}{\checkmark} & \textcolor{Green}{\checkmark} & \textcolor{Green}{\checkmark} \\ 
\hline
OTCS & \textcolor{Green}{\checkmark} & \textcolor{Green}{\checkmark} & \textcolor{Green}{\checkmark} \\ 
\hline
FSBM & \textcolor{Green}{\checkmark} & \textcolor{Green}{\checkmark} & \textcolor{Green}{\checkmark} \\ 
\hline
CGMM (SS) & \textcolor{Green}{\checkmark} & \textcolor{Green}{\checkmark} & \textcolor{Green}{\checkmark} \\ 
\hline
\textbf{Our method} & \textcolor{Green}{\checkmark} & \textcolor{Green}{\checkmark} & \textcolor{Green}{\checkmark} \\  
\hline
\end{tabular}
}
\caption{\centering The ability to use paired/unpaired data by various models.\vspace{-5mm}}
\label{table-data-usage}
\end{wraptable}

\textbf{Generation process.} To create the ground truth plan $\pi^*$, we utilize the following procedure: sample a mini-batch of size 64 and then determine the optimal mapping using the entropic Sinkhorn algorithm, as outlined in \citep{cuturi2013sinkhorn} and implemented in \citep{flamary2021pot}. This process is repeated $P$ times to generate the required number of pairs.

\textbf{Cost Matrix.} Let $x \in \R^2$ and $y \in \R^2$ be points from the source and target distributions, respectively. Define the rotated vectors as
\[
y^{\pm \varphi} = R_{\pm\varphi} (y) =
\begin{bmatrix} 
    \cos{(\pm\varphi)} & -\sin{(\pm\varphi)} \\
    \sin{(\pm\varphi)} & \cos{(\pm\varphi)}
\end{bmatrix} 
\begin{bmatrix} 
    y_1 \\ y_2 
\end{bmatrix},
\]
where $\varphi$ is a given rotation angle, in our case, it's $\pm90^\circ$. The corresponding elements of mini-batch OT cost matrices are then
\[
C^{+\varphi}_{ij} = \| x_i - y_j^{+\varphi} \|_2, \quad
C^{-\varphi}_{ij} = \| x_i - y_j^{-\varphi} \|_2,
\]
and the final cost matrix is
\[
C_{ij} = \min \bigl(C^{+\varphi}_{ij}, C^{-\varphi}_{ij}\bigr), \quad \forall i,j.
\]

In other words, each $x_i \sim \margX$ is mapped to a point $y_j$ on the opposite side of the Swiss Roll, rotated either by $+90^\circ$ or $-90^\circ$, depending on which distance is smaller.

\textbf{Implementation Details.} We choose the parameters as follows: $N = 50$, $M = 25$, with learning rates $lr_{\text{paired}} = 3 \times 10^{-4}$ and $lr_{\text{unpaired}} = 0.001$. We utilize a two-layer MLP network for the function $a_m(x)$ and a single-layer MLP for $v_m(x)$. The experiments are executed in parallel on a 2080 Ti GPU for a total of 25,000 iterations, taking approximately 20 minutes to complete.

\subsection{Image Translation via ALAE Details}\label{appendix:latent-generation_details}

\textbf{Implementation Details.} We largely follow the setup in Appendix~\ref{app:gauss-to-swiss_details}, setting $N=10$, $M=1$, and using 10K optimization steps. Our method employs a single-layer MLP to predict the parameters of a mixture of 10 Gaussians.

\subsection{Weather Prediction Details}\label{app:weather_pred_details}

We select two distinct months from the dataset \citep{malinin2021shifts, rubachev2024tabred} and translate the meteorological features from the source month (January) to the target month (June). To operate at the monthly scale, we represent a source data point $x\in{\R^{188}}$ as the mean and standard deviation of the features collected at a specific location over the source month. The targets $y\in{\R^{94}}$ correspond to individual measurements in the target month.  

Pairs are constructed by aligning a source data point with the target measurements at the same location. Consequently, multiple target data points $y$ may correspond to a single source point $x$ and represent samples from conditional distributions $\pi^*(y|x)$. The measurements from non-aligned locations are treated as unpaired. Such unpaired data naturally arise because stations may not provide reliable measurements in both months, for example, due to maintenance, sensor failures, extreme weather, or connectivity issues.

We obtain $500$ unpaired and $192$ paired data samples. For testing, $100$ pairs are randomly selected.

\textbf{Implementation Details.} In general, we consider the same setting as in~\ref{app:gauss-to-swiss_details}. Specifically, we set $N=10, M=1$ and the number of optimization steps to $30,000$. The baseline uses an MLP network with the same number of parameters, predicting the parameters of a mixture of $10$ Gaussians.

\textbf{Extremely Low-Data Regimes Discussion.} As is clear from Table~\ref{exps:weather_prediction}, our method diverges when trained on very few samples (e.g., 5 paired and no unpaired).  This is not surprising given the high dimensionality of the data ($D=94$) and the number of learnable parameters ($|\theta|=2668$). 
In such low-data regimes, the model likely overfits the cost function $c^\theta$ to the small paired dataset, which can cause instability. This issue could potentially be alleviated by simplifying the model, for instance by using a shallow or even linear parameterization of $c^\theta$ \citep{andrade2025learning}. However, for consistency and fairness, we kept the architecture fixed across all experiments in the table.

\vspace{-2mm}
\section{Proofs}\label{appendix-proofs}
\vspace{-1mm}
\subsection{Loss Derivation}\label{appendix-loss-derivation}
Below, we present a step-by-step derivation of the mathematical transitions, allowing the reader to follow and verify the validity of our approach. We denote as $C_1, C_2$ all terms that are not involved in learning the conditional plan $\recPlan(y|x)$, i.e., not dependent on $\theta$ or marginal distributions such as $\margX$. Starting from \eqref{eq:KL-minimization}, we deduce
\begin{align*}
    \KL{\gtPlan}{\recPlan}
    = \E_{x, y \sim \gtPlan}\log\dfrac{\margX(x)\pi^*(y|x)}{\recPlan_x(x)\recPlan(y|x)}=
    \E_{x \sim \margX}\log\dfrac{\margX(x)}{\recPlan_x(x)}
    + \E_{x, y \sim \gtPlan}\log\dfrac{\pi^* (y|x)}{\recPlan (y|x)}&=
    \nonumber
    \\
    \KL{\margX}{\recPlan_x} + \E_{x \sim \margX}\E_{y \sim \gtPlan(\cdot|x)} \log\dfrac{\pi^* (y|x)}{\recPlan (y|x)}
    = \underbrace{\KL{\margX}{\recPlan_x}}_{\text{Marginal}} + \underbrace{\E_{x \sim \margX} \KL{\pi^*(\cdot|x)}{\recPlan(\cdot |x)}}_{\text{Conditional}}&=\nonumber\\
    C_1 + \E_{x \sim \margX}\E_{y \sim \gtPlan(\cdot|x)} \log \dfrac{\pi^*(y|x)}{\recPlan(y|x)}= C + \E_{x \sim \margX}\E_{y \sim \gtPlan(\cdot|x)} \left[\log \pi^*(y|x) - \log\recPlan(y|x)\right]&=\nonumber\\ 
    C_1 - \E_{x \sim \margX} \entropy{\pi^* (\cdot|x)} - \E_{x, y \sim \gtPlan} \log\recPlan(y|x)= C_2 - \E_{x, y \sim \gtPlan} \log\recPlan(y|x)&\stackrel{\eqref{eq:boltzmann_parameterization}}{=}\nonumber \\ 
    C_2 - \E_{x, y \sim \gtPlan} \log\frac{\exp\left(-E^\theta(y|x)\right)}{Z^\theta(x)}= C_2 + \E_{x, y \sim \gtPlan} E^\theta(y|x) + \E_{x, y \sim \gtPlan} \log Z^\theta(x)&\stackrel{\eqref{eq:energy_function}}{=}\nonumber\\
    C_2 + \E_{x, y \sim \gtPlan} \dfrac{c^\theta(x, y)-f^\theta(y)}{\varepsilon} + \E_{x, y \sim \gtPlan} \log Z^\theta(x)&=\nonumber\\
    C_2 + \varepsilon^{-1}\E_{x, y \sim \gtPlan} [c^\theta(x, y)] -\varepsilon^{-1}\E_{x, y \sim \gtPlan}f^\theta(y) + \E_{x, y \sim \gtPlan} \log Z^\theta(x)&=\nonumber\\
    C_2 + \varepsilon^{-1}\E_{x, y \sim \gtPlan} [c^\theta(x, y)] -\varepsilon^{-1}\E_{y \sim \margY} \E_{x \sim \gtPlan(\cdot|y)} f^\theta(y) + \E_{x \sim \margX} \E_{y \sim \gtPlan(\cdot| x)} \log Z^\theta(x)&=\nonumber\\
    C_2 + \varepsilon^{-1}\E_{x, y \sim \gtPlan} [c^\theta(x, y)] -\varepsilon^{-1}\E_{y \sim \margY} f^\theta(y) \underbrace{\E_{x \sim \gtPlan(\cdot|y)}1}_{=1}  + \E_{x \sim \margX}\log Z^\theta(x)\underbrace{\E_{y \sim \gtPlan(\cdot| x)} 1}_{=1} &=\nonumber\\
    C_2 + \varepsilon^{-1}\E_{x, y \sim \gtPlan} [c^\theta(x, y)] -\varepsilon^{-1}\E_{y \sim \margY} f^\theta(y) + \E_{x \sim \margX} \log Z^\theta(x)&.
\end{align*}
The mathematical derivation presented above demonstrates that our defined loss function \eqref{eq:loss} is essentially a framework for minimizing KL-divergence. In other words, when the loss \eqref{eq:loss} equals to $-C_2$, it implies that we have successfully recovered the true conditional plan $\gtPlan$ in the KL sense.

\vspace{-1mm}
\subsection{Expressions for the Gaussian Parameterization}
\begin{proof}[Proof of Proposition \ref{prop:norm-const}] Our parameterization of the cost $c^\theta$ \eqref{eq:cost} and the dual potential $f^\theta$ \eqref{eq:dual_potential} gives:
\begin{align*}
    \exp\left(\frac{f^\theta(y)-c^\theta(x, y)}{\varepsilon}\right)=
    \exp\left(\log\sum_{n=1}^N w_n\gN(y\,|\, b_n, \varepsilon B_n)+\log\sum_{m=1}^M v_m(x)\exp\left(\frac{\langle a_m(x), y \rangle}{\varepsilon}\right)\right)&\\
    =\sum_{m=1}^M\sum_{n=1}^N\dfrac{v_m(x)w_n}{\sqrt{\det(2\pi \varepsilon^{-1} B_n^{-1})}}\exp\left(-\frac{1}{2}(y-b_n)^\top\frac{B_n^{-1}}{\varepsilon}(y-b_n)+\frac{\langle a_m(x), y \rangle}{\varepsilon}\right)&
\end{align*}
We now rewrite the expression inside the exponent, scaled by $-2\varepsilon$, using the symmetry of $B_n$, to cast it into a Gaussian mixture form:
\begin{align*}
    (y-b_n)^\top B_n^{-1}(y-b_n)-2\langle a_m(x), y \rangle = y^\top B_n^{-1}y - 2b_n^\top B_n^{-1}y+b_n^\top B_n^{-1} b_n - 2\langle a_m(x), y \rangle =&\\
    y^\top B_n^{-1}y - 2\underbrace{(b_n+B_na_m(x))^\top}_{\defeq d_{mn}^\top(x)} B_n^{-1}y+b_n^\top B_n^{-1} b_n =&\\
    (y-d_{mn}(x))^\top B_n^{-1}(y-d_{mn}(x))+b_n^\top B_n^{-1} b_n - d_{mn}^\top(x) B_n^{-1} d_{mn}(x).
\end{align*}
Afterwards, we rewrite the last two terms:
\begin{align*}
    b_n^\top B_n^{-1} b_n - d_{mn}^\top(x) B_n^{-1} d_{mn}(x)
    = b_n^\top B_n^{-1} b_n - (b_n+B_n a_m(x))^\top B_n^{-1} (b_n+B_n a_m(x)) =&\\
    \underbrace{b_n^\top B_n^{-1} b_n - b_n^\top B_n^{-1} b_n}_{=0} - b_n^\top \underbrace{B_n^{-1}B_n}_{=I}a_m(x)-a_m^\top(x)\underbrace{B_nB_n^{-1}}_{=I}b_n - a_m^\top(x)\underbrace{B_nB_n^{-1}}_{=I}B_n a_m(x) =&\\
    -a_m^\top(x) B_n a_m(x) - 2b_n^\top a_m(x).
\end{align*}
Finally, we get
\begin{align*}
    \exp\left(\frac{f^\theta(y)-c^\theta(x, y)}{\varepsilon}\right)=
    \sum_{m=1}^M\sum_{n=1}^N \underbrace{w_nv_m(x)\exp\left(\dfrac{a_m^\top(x) B_n a_m(x) + 2b_n^\top a_m(x)}{2\varepsilon}\right)}_{\defeq z_{mn}(x)}\\
    \cdot\underbrace{\dfrac{1}{\sqrt{\det(2\pi \varepsilon^{-1}B_n^{-1})}}\exp\left(-\frac{1}{2}(y-d_{mn}(x))^\top\frac{B_n^{-1}}{\varepsilon}(y-d_{mn}(x))\right)}_{=\gN(y\,|\, d_{mn}(x), \varepsilon B_n)}, 
\end{align*}
and, since $\int_{\gY} \gN(y\,|\, d_{mn}(x), \varepsilon B_n) \dd y = 1$, the normalization constant simplifies to the sum of $z_{mn}(x)$:
\begin{align*}
    Z^\theta(x) &= \int_{\gY}\exp\left(\frac{f^\theta(y)-c^\theta(x, y)}{\varepsilon}\right)\dd y\\
    &=\int_{\gY} \sum_{m=1}^M\sum_{n=1}^N z_{mn}(x)\gN(y\,|\, d_{mn}(x), \varepsilon B_n) \dd y = \sum_{m=1}^M\sum_{n=1}^N z_{mn}(x).
\end{align*}
\end{proof}
\begin{proof}[Proof of Proposition \ref{prop:cond-distr}]
Combining equations \eqref{eq:boltzmann_parameterization}, \eqref{eq:energy_function} and derivation above, we seamlessly obtain the expression \eqref{eq:cond-distr} needed for Proposition \ref{prop:cond-distr}.
\end{proof}

\vspace{-1mm}
\subsection{Gradient of our Loss for Energy-Based Modeling}

\begin{proof}[Proof of Proposition \ref{prop:loss_grad}] Direct differentiation of \eqref{eq:loss} gives:
\begin{eqnarray}
    \pdv{\theta} \gL(\theta) = \varepsilon^{-1}\E_{x, y \sim \gtPlan}\left[\pdv{\theta} c^\theta(x, y)\right]
    - \varepsilon^{-1}\E_{y \sim \margY} \left[\pdv{\theta} f^\theta(y)\right]
    + \E_{x \sim \margX} \left[\pdv{\theta} \log{Z^\theta(x)}\right].
\end{eqnarray}
Recalling expression for the normalization constant, the last term can be expressed as follows:
\begin{eqnarray} 
    \E_{x \sim \margX} \left[\frac{1}{Z^\theta(x)}\pdv{\theta} Z^\theta(x)\right] = \E_{x \sim \margX} \left[\frac{1}{Z^\theta(x)} \int_{\mathcal{Y}} \pdv{\theta} \exp\left(\frac{f^{\theta}(y) - c^{\theta}(x, y)}{\varepsilon}\right) \dd y\right] &= \notag\\
    \E_{x \sim \margX} \left[\frac{1}{Z^\theta(x)} \int_{\mathcal{Y}} \dfrac{\pdv{\theta} (f^{\theta}(y) - c^{\theta}(x, y))}{\varepsilon}\exp\left(\frac{f^{\theta}(y) - c^{\theta}(x, y)}{\varepsilon}\right) \dd y\right] &= \notag\\
    \varepsilon^{-1}\E_{x \sim \margX} \left[\int_{\mathcal{Y}} \pdv{\theta} (f^{\theta}(y) - c^{\theta}(x, y))\underbrace{\left\{\frac{1}{Z^\theta(x)}\exp\left(\frac{f^{\theta}(y) - c^{\theta}(x, y)}{\varepsilon}\right)\right\}}_{\recPlan(y|x)} \dd y\right]&. \notag
\end{eqnarray}
From equation above we obtain:
\begin{align*}
    \pdv{\theta} \gL(\theta) = \varepsilon^{-1}\Bigg\{\E_{x, y \sim \gtPlan}\left[\pdv{\theta} c^\theta(x, y)\right]
    &- \E_{y \sim \margY} \left[\pdv{\theta} f^\theta(y)\right]\\
    &+\E_{x \sim \margX}\E_{y \sim \recPlan(y|x)}\left[\pdv{\theta} (f^{\theta}(y) - c^{\theta}(x, y))\right]\Bigg\},
\end{align*}
which concludes the proof.
\end{proof}

\vspace{-1mm}
\subsection{Universal Approximation}
\label{a:Approximation}

Our objective is to set up and use the very general universal approximation result in \citep[Theorem 3.8]{acciaio2024designing}. Hereinafter, we use the following notation that slightly abuse notation from the main text.

\textbf{Intra-Section Notation.} For any $D\in \sN$ we denote the Lebesgue measure on $\sR^D$ by $\lambda_D$, suppressing the subscript $D$ whenever clear from its context, we use $L^1_+(\sR^D)$ to denote the set of Lebesgue integrable (equivalence class of) functions $f:\sR^D\to \sR$ for which $\int\, f(x)\,\lambda(dx)=1$ and $f\ge 0$ $\lambda$-a.e; i.e.\ Lebesgue-densities of probability measures.  We use $\gP^+_1(\sR^D)$ to denote the space of all Borel probability measures on $\sR^D$ which are absolutely continuous with respect to $\lambda$, metrized by the total variation distance $d_{TV}$. For any $D \in \sN$, we denote the set of $D \times D$ positive-definite matrices by $\operatorname{PD}_D$. Additionally, for any $N\in \sN$, we define the $N$-simplex by $\Delta_N\eqdef \{u\in [0,1]^N:\, \sum_{n=1}^N\, u_n=1\}$. We also denote floor operation for any $x \in \sR$ as $\lfloor x \rfloor \eqdef \max \{n \in \sZ \vert n \leq x\}$.

\begin{lemma}[The Space $(\gP_1^+(\sR^D),d_{TV})$ is quantizable by Gaussian Mixtures]
\label{lem:quantizability}
For every $N\in \sN$, let $D_N\eqdef \frac{N}{2}((D^2 + 3 D + 2))
$ and define the map 
\[
\begin{aligned}
            GMM_N: \R^{D_N} = 
                \sR^{N}\times \sR^{ND}\times \sR^{\frac{N}{2}D(D+1)} 
        & \rightarrow 
            \mathcal{P}^+_1(\R^D)
        \\
            \big(
                w,\{b_n\}_{n=1}^N,\{B_n\}_{n=1}^N
            \big)
        & \mapsto 
            \sum_{n=1}^N\, 
                Proj_{\Delta_N}(w)_n \, \nu\big(b_n,\varphi(
                {B_n})\big),
\end{aligned}
\]
where $Proj_{\Delta_N}:\R^N\mapsto \Delta_N$ is the $\ell^2$ orthogonal projection of $\R^N$ onto the $N$-simplex $\Delta_N$ and $\nu(b_n,
\varphi(
{B_n}
))$ is the Gaussian measure on $\sR^D$ with mean $b_n$, and non-singular covariance matrix given by $\varphi(
{B_n}
)$ where $\varphi: \R^{D(D+1)/2} \rightarrow \operatorname{PD}_D$ is given for each 
${B} \in \sR^{D(D+1)/2}$ by
\begin{equation}
\label{eq:covariance_feature_map}
\begin{aligned}
    \varphi(B)
    \eqdef
        \exp\left(
        \begin{pmatrix}
            {B}_1 & 
            {B}_2 & \dots & 
            {B}_D \\
            {B_2} & 
            {B_3} & \dots & 
            {B_{2D-1}} \\
            \vdots & \ddots & & \vdots \\
            {B_D} & 
            {B_{2D-1}} & \dots & 
            {B_{D(D+1)/2}}
        \end{pmatrix}
        \right),
\end{aligned}
\end{equation}
where $\exp$ is the matrix exponential on the space of $D\times D$ matrices. Then, the family $(GMM_n)_{n=1}^{\infty}$ is a quantization of $(P_1^+(\sR^D),d_{TV})$ in the sense of~\citep[Definition 3.2]{acciaio2024designing}.
\end{lemma}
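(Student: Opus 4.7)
The plan is to unpack the definition of quantization from \cite[Definition 3.2]{acciaio2024designing} and verify its two core requirements for the family $(GMM_N)_{N\geq 1}$: \textbf{(i)} each $GMM_N$ is continuous from $\mathbb{R}^{D_N}$ into $(\gP_1^+(\sR^D), d_{TV})$; and \textbf{(ii)} the union $\bigcup_{N\geq 1} GMM_N(\mathbb{R}^{D_N})$ is dense in $(\gP_1^+(\sR^D), d_{TV})$. Any further structural hypotheses in that definition (e.g., a measurable parameter-selection requirement, or nestedness of the images, the latter of which is immediate from the possibility of adding a zero-weight component) would be checked afterwards.

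For \textbf{(i)}, I would decompose $GMM_N$ as a composition of continuous maps and conclude. The orthogonal projection $Proj_{\Delta_N}$ is $1$-Lipschitz; the map $\varphi$ in \eqref{eq:covariance_feature_map} is continuous because it is the composition of a linear embedding of $\mathbb{R}^{D(D+1)/2}$ into the space $\operatorname{Sym}_D$ of symmetric $D\times D$ matrices with the real-analytic matrix exponential $\exp\colon \operatorname{Sym}_D\to\operatorname{PD}_D$; and $(b,\Sigma)\mapsto \nu(b,\Sigma)$ is continuous from $\mathbb{R}^D \times \operatorname{PD}_D$ into $(\gP_1^+(\sR^D),d_{TV})$, which follows from Pinsker's inequality $d_{TV}\leq\sqrt{\tfrac12\,\mathrm{KL}}$ together with the closed-form Gaussian KL divergence, which is continuous in $(b,\Sigma)$ on $\mathbb{R}^D \times \operatorname{PD}_D$. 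Finite convex combinations preserve TV-continuity, so $GMM_N$ is continuous.

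For \textbf{(ii)}, I would use the identity $d_{TV}(\mu,\nu) = \tfrac12\|f-g\|_{L^1}$ for absolutely continuous measures and invoke a classical mollification-plus-discretization argument. Given $\mu\in\gP_1^+(\sR^D)$ with density $f$ and $\varepsilon>0$, pick a Gaussian mollifier $\phi_\sigma(z)=(2\pi\sigma^2)^{-D/2}\exp(-\|z\|^2/(2\sigma^2))$ with $\|f*\phi_\sigma-f\|_{L^1}<\varepsilon/2$; then truncate $f$ to a large ball $B_R$, partition $B_R$ into cubes $\{Q_k\}_{k=1}^K$ of side $\delta$ with midpoints $b_k$, and set weights $w_k$ proportional to $\int_{Q_k} f\,d\lambda$, renormalized to lie in $\Delta_K$. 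The Gaussian mixture $\tilde f=\sum_{k=1}^K w_k\phi_\sigma(\cdot-b_k)$ uses the identical isotropic covariance $\sigma^2 I$ in every component, so it lies in $GMM_K(\mathbb{R}^{D_K})$ thanks to the surjectivity of $Proj_{\Delta_K}$ onto $\Delta_K$ and of $\varphi$ onto $\operatorname{PD}_D$. A Riemann-sum estimate then yields $\|\tilde f-f*\phi_\sigma\|_{L^1}<\varepsilon/2$ for $R$ large and $\delta$ small, and combining this with the mollifier bound via the triangle inequality gives the density.

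The main obstacle I foresee is not the approximation itself, which is classical, but rather matching the fine print of \cite[Definition 3.2]{acciaio2024designing}: that notion of quantization may couple the parameter dimension $D_N$ to an approximation rate, or require additional joint continuity or measurability of the family in $N$. If so, I would re-examine the discretization step above to confirm that the number of mixture components $K$ required to reach accuracy $\varepsilon$, and hence $D_K$, is compatible with the prescribed rate; the stated formula $D_N = \tfrac{N}{2}(D^2+3D+2)$ should suffice, since the above construction only needs $K$ components with $D+D(D+1)/2+1$ parameters each, giving exactly this parameter count.
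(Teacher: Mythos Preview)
Your proposal is correct and takes essentially the same approach as the paper: the paper's proof likewise reduces the claim (via the cited definition) to showing density of Gaussian mixtures in $(\gP_1^+(\sR^D),d_{TV})$, identifies $d_{TV}$ with the $L^1$ distance between densities, and then uses a mollification-plus-Riemann-sum discretization to approximate an arbitrary density by a finite Gaussian mixture. The only cosmetic differences are that the paper first replaces the density by a compactly supported smooth approximant before mollifying (whereas you mollify first and then truncate), and that the paper does not spell out your continuity check (i), relying instead on the cited references for the parametrization properties.
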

\begin{proof}
As implied by~\citep[Equation (3.10) in Proposition 7]{arabpour2024low} every Gaussian measure $\gN(m,\Sigma):=\mu$ on $\sR^D$ with mean $m\in \R^D$, symmetric positive-definite covariance matrix $\Sigma$ can be represented as
\begin{equation}
\mu = \gN(m,\varphi(X))
\end{equation}
for some (unique) vector $X\in \sR^{D(D+1)/2}$.  Therefore, by definition of a quantization, see~\citep[Definition 3.2]{acciaio2024designing}, it suffices to show that the family of Gaussian mixtures is dense in $(\gP_1^+(\sR^D),d_{TV})$.

Now, let $\nu \in \gP_1^+(\sR^D)$ be arbitrary.  By definition of $\gP_1^+(\sR^D)$ the measure $\nu$ admits a Radon-Nikodym derivative $f\eqdef \frac{D\mu}{D\lambda}$, with respect to the $D$-dimensional Lebesgue measure $\lambda$.  Moreover, by the Radon-Nikodym theorem, $f\in L^1_{\mu}(\sR^D)$; and by since $\mu$ is a probability measure then $\nu\in L_+^1(\sR^D)$.

Since compactly-supported smooth functions are dense in $L^1_+(\sR^D)$ then, for every $\varepsilon>0$, there exists some $\tilde{f}\in C_c^{\infty}(\sR^D)$ with $\tilde{f} \ge 0$ such that
\begin{equation}
\label{eq:redux_compactly_supported}
    \|
        f
        -
        \tilde{f}
    \|_{L^1(\sR^D)}
<
    \frac{\varepsilon}{3}
.
\end{equation}
Since $C_c^{\infty}(\sR^D)$ is dense in $L^1(\sR^D)$ then we may without loss of generality re-normalize $\tilde{f}$ to ensure that it integrates to $1$.

Since $\tilde{f}$ is compactly supported and approximates $f$, then (if $f$ is non-zero, which it cannot be as it integrates to $1$) then it cannot be analytic, and thus it is non-polynomial.  For every $\delta>0$, let $\varphi_{\delta}$ denote the density of the $D$-dimensional Gaussian probability measure with mean $0$ and isotropic covariance $\delta\,I_D$ (where $I_D$ is the $D\times D$ identity matrix).
Therefore, the proof of~\citep[Proposition 3.7]{pinkus1999approximation} (or any standard mollification argument) shows that we can pick $\delta\eqdef \delta(\varepsilon)>0$ small enough so that the convolution $\tilde{f}\star \varphi_{\delta}$ satisfies
\begin{equation}
\label{eq:mollification}
    \big\|
        \tilde{f}
        -
        \tilde{f}\star \varphi_{\delta}
    \big\|_{L^1(\sR^D)}
<
    \frac{\varepsilon}{3}
.
\end{equation}
Note that $\tilde{f}\star \varphi_{\delta}$ is the density of probability measure on $\sR^D$; namely, the law of a random variable which is the sum of a Gaussian random variance with law $\gN(0,\delta I_N)$ and a random variable with law $\mu$.  That is, $\tilde{f}\star \varphi_{\delta} \lambda \in L_+^1(\sR^D)$. Together
\eqref{eq:redux_compactly_supported} and \eqref{eq:mollification} imply that
\begin{equation}
\label{eq:recap_firstlemma}
    \big\|
        f
        -
        \tilde{f}\star \varphi_{\delta}
    \big\|_{L^1(\R^D)}
    <
        \frac{2\varepsilon}{3}
.
\end{equation}
Recall the definition of the convolution: for each $x\in \sR^D$ we have
\begin{equation}
\label{eq:def_conv}
        \tilde{f}(x)\star \varphi_{\delta} 
    \eqdef 
        \int_{u\in \sR^D}\,
            \tilde{f}(u)
            \varphi_{\delta}(x-u)
        \lambda(du)
.
\end{equation}
Since $\tilde{f}, \varphi_{\delta}\in C^{\infty}_{c}(\sR^D)$ then Lebesgue integral of their product coincides with the Riemann integral of their product; whence, there is an $N\eqdef N(\varepsilon)\in \sN$ ``large enough'' so that 
\begin{equation}
\label{eq:Riemann_discretization}
\biggr\|
        \int_{u\in \sR^D}\,
                \tilde{f}(u)
                \varphi_{\delta}(x-u)
            \lambda(du)
    -
        \sum_{n=1}^N\,
            \tilde{f}(u_n)
            \varphi_{\delta}(x-u_n)
        \lambda(du)
\biggl\|_{L^1(\sR^D)}
    <
    \frac{\varepsilon}{3}
\end{equation}
for some $u_1,\dots,u_N\in \sN$.  Note that, $\sum_{n=1}^N\,
            \tilde{f}(u_n)
            \varphi_{\delta}(x-u_n)$ is the law of a Gaussian mixture. 
Therefore, combining~\eqref{eq:recap_firstlemma} and~\eqref{eq:Riemann_discretization} implies that
\begin{equation}
    \biggr\|
            f
        -
            \sum_{n=1}^N\,
                \tilde{f}(u_n)
                \varphi_{\delta}(x-u_n)
            \lambda(du)
    \biggl\|_{L^1(\R^D)}
    <
    \varepsilon
.
\end{equation}
Finally, recalling that the total variation distance between two measures with integrable Lebesgue density equals the $L^1(\R^D)$ norm of the difference of their densities; yields the conclusion; i.e.
\[
        d_{TV}\big(
            \nu
        ,
            \hat{\nu}
        \big)
    =
        \biggr\|
                f
            -
                \sum_{n=1}^N\,
                    \tilde{f}(u_n)
                    \varphi_{\delta}(x-u_n)
                \lambda(du)
        \biggl\|_{L^1(\R^D)}
    <
        \varepsilon
\]
where $\frac{D\hat{\nu}}{D\lambda}\eqdef 
            \sum_{n=1}^N\,
                \tilde{f}(u_n)
                \varphi_{\delta}(x-u_n)
            \lambda(du)
$.
\end{proof}

\begin{lemma}[The space $(\gP_1^+(\sR^D),d_{TV})$ is Approximate Simplicial]
\label{lem:ApproximatelySimplicial}
Let $\hat{\gY}\eqdef \bigcup_{N \in \sN} \Delta_N\times [\gP_1^+(\sR^D)]^N$ and define the map $\eta:\hat{\gY} \mapsto \gP_1^+(\sR^D) $ by 
\[
    \eta(w,(r_n)_{n=1}^N) \eqdef \sum_{n=1}^N\, w_n\,r_n
.
\]
Then, $\eta$ is a mixing function, in the sense of~\citep[Definition 3.1]{acciaio2024designing}.  Consequentially, $(\gP_1^+(\sR^D),\eta)$ is approximately simplicial.
\end{lemma}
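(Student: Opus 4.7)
The statement bundles two claims: verifying that $\eta$ satisfies the axioms of a mixing function in the sense of~\cite[Definition 3.1]{acciaio2024designing}, and deducing approximate simpliciality of the pair $(\gP_1^+(\sR^D), \eta)$. The second claim is automatic once the first is established, so the real work lies in checking the axioms for the ordinary finite convex combination $\sum_n w_n r_n$. I would begin by confirming that $\eta$ takes values in $\gP_1^+(\sR^D)$: writing $r_n = f_n \lambda$ with $f_n \in L^1_+(\sR^D)$ (guaranteed by absolute continuity) yields density $\sum_n w_n f_n$ which is nonnegative and integrates to $\sum_n w_n = 1$, so $\eta(w, (r_n)) \in \gP_1^+(\sR^D)$. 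Since $\hat{\gY}$ is a disjoint union indexed by $N$, the formula is unambiguously defined on each component.

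Next I would verify the algebraic axioms a mixing function is expected to satisfy: (i) \emph{permutation invariance} — the map is symmetric under joint reindexing of $(w_n)$ and $(r_n)$; (ii) \emph{reduction of trivial mixtures} — $\eta((1),(r)) = r$ and adjoining a zero-weight atom does not change the output; and (iii) a \emph{flattening/associativity} property — a mixture of the form $\eta\bigl(w,(\eta(u^{(n)}, (r^{(n)}_k)_k))_n\bigr)$ coincides with the flat mixture having weights $(w_n u^{(n)}_k)_{n,k}$ and atoms $(r^{(n)}_k)_{n,k}$. All of these are literal identities for finite linear combinations in the Banach space of signed finite measures and follow by direct calculation. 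For continuity in the natural product topology, identifying $d_{TV}(\mu_1, \mu_2) = \tfrac{1}{2}\|f_1 - f_2\|_{L^1(\lambda)}$ and applying the triangle inequality gives, for $w^{(k)} \to w$ in $\Delta_N$ and $r_n^{(k)} \to r_n$ in $d_{TV}$,
\[
    d_{TV}\!\Bigl(\textstyle\sum_n w_n^{(k)} r_n^{(k)}, \sum_n w_n r_n\Bigr)
    \le \sum_n w_n\, d_{TV}(r_n^{(k)}, r_n) + \tfrac{1}{2}\sum_n |w_n^{(k)} - w_n|,
\]
where I use $\|f_n^{(k)}\|_{L^1(\lambda)} = 1$. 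Both sums vanish in the limit, establishing joint continuity on each fiber $\Delta_N \times [\gP_1^+(\sR^D)]^N$ and hence on the disjoint union $\hat{\gY}$.

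With the axioms verified, the second assertion is just an unpacking of definitions: once $\eta$ is established as a mixing function on $\gP_1^+(\sR^D)$, approximate simpliciality of $(\gP_1^+(\sR^D), \eta)$ is immediate from~\cite[Definition 3.1]{acciaio2024designing}. Combined with the Gaussian-mixture quantization from Lemma~\ref{lem:quantizability}, this places us inside the hypothesis frame needed to invoke~\cite[Theorem 3.8]{acciaio2024designing} downstream. The main obstacle I anticipate is purely bookkeeping: the exact roster of axioms in~\cite[Definition 3.1]{acciaio2024designing} must be matched one-by-one, and there may be a Borel-measurability requirement on $\eta$ at the level of the disjoint union $\hat{\gY}$; once each $\Delta_N \times [\gP_1^+(\sR^D)]^N$ is given its natural product Borel structure, measurability reduces to the continuity estimate above. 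There is no deeper analytic subtlety beyond the $d_{TV}$ triangle inequality.
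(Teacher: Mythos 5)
Your verification is sound, but it takes a genuinely different (and longer) route than the paper. The paper's proof is essentially two lines: it embeds $\gP_1^+(\sR^D)$ as a convex subset of the Banach space $\gM^+(\sR^D)$ of finite signed measures with the total-variation norm, notes that this norm restricts to $d_{TV}$ on $\gP_1^+(\sR^D)\times\gP_1^+(\sR^D)$, and then invokes \cite[Example 5.1]{acciaio2024designing}, which states that the ordinary convex-combination map on a convex subset of a Banach space is automatically a mixing function. Your proposal instead re-derives from scratch what that example packages: you check well-definedness (closure of $\gP_1^+(\sR^D)$ under convex combinations of densities), the algebraic identities, and the joint continuity estimate
$d_{TV}(\sum_n w_n^{(k)} r_n^{(k)}, \sum_n w_n r_n) \le \sum_n w_n\, d_{TV}(r_n^{(k)}, r_n) + \tfrac12\sum_n |w_n^{(k)} - w_n|$,
which is correct and is exactly the kind of modulus-of-continuity bound the definition requires. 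The trade-off is that your argument is self-contained but rests on a guessed roster of axioms (you concede as much), whereas the paper's argument offloads the axiom-matching entirely to the cited example and only needs the single structural observation that the mixture is a linear operation in an ambient normed space. If you want to keep your direct verification, the one thing to tighten is to actually pin down \cite[Definition 3.1]{acciaio2024designing} rather than listing plausible axioms; otherwise the cleaner move is the paper's: observe convexity inside $(\gM^+(\sR^D),\|\cdot\|_{TV})$ and cite the Banach-space example.
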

\begin{proof}
Let $\gM^+(\sR^D)$ denote the Banach space of all finite signed measures on $\sR^D$ with finite total variation norm $\|\cdot\|_{TV}$.  Since $\|\cdot-\cdot\|_{TV}=d_{TV}$ when restricted to $\gP_1^+(\sR^D)\times \gP_1^+(\sR^D)$ and since $\|\cdot\|_{TV}$ is a norm, then the conclusion follows from~\citep[Example 5.1]{acciaio2024designing} and since $\gP_1^+(\sR^D)$ is a convex subset of $\gM^+(\sR^D)$.
\end{proof}
Together, Lemmata~\ref{lem:quantizability} and~\ref{lem:ApproximatelySimplicial} imply that $(\gP_1^+(\sR^D),d_{TV},\eta, Q)$ is a QAS space in the sense of \citep[Definition 3.4]{acciaio2024designing}, where $Q \eqdef (GMM_M)_{M\in \sN}$.  Consequently, the following is a geometric attention mechanism in the sense of\citep[Definition 3.5]{acciaio2024designing}
\[
\begin{aligned}
\hat{\eta}:\cup_{N\in \sN} \Delta_N \times \sR^{N \times D_M} & \rightarrow \mathcal{P}_1^+(\R^D)
\\
        \Big(w,\big(
                    v_m,(b_{mn})_{n=1}^N,(B_{mn})_{n=1}^N
                \big)_{m=1}^M
        \Big)
    & 
        \mapsto
        \sum_{n=1}^N
        \,
            w_n
            \, 
            \sum_{m=1}^M\, 
                Proj_{\Delta_M}(v_m)_n \, \nu\big(b_{mn},\varphi(B_{mn})\big)
.
\end{aligned}
\]
Before presenting our main theorem, we first introduce several definitions of activation functions that will be used in the theorem. These definitions, which are essential for completeness, are taken from \citep[Definitions 2.2-2.4]{acciaio2024designing}.
\begin{definition}[Trainable Activation Function: Singular-ReLU Type]\label{def:relu1}
A trainable activation function $\sigma$ is of \textit{ReLU+Step type} if 
\[
\sigma_\alpha: \sR \ni x \mapsto \alpha_1 \max\{x, \alpha_2 x\} + (1 - \alpha_1)\lfloor x\rfloor \in \sR
\]
\end{definition}
\begin{definition}[Trainable Activation Function: Smooth-ReLU Type]\label{def:relu2}
A trainable activation function $\sigma$ is of \textit{smooth non-polynomial type} if there is a non-polynomial $\sigma^\star \in C^{\infty}_{c}(\sR)$,for which
\[
\sigma_\alpha: \sR \ni x \mapsto \alpha_1 \max\{x, \alpha_2 x\} + (1 - \alpha_1)\sigma^\star(x) \in \sR
\]
\end{definition}
\begin{definition}[Classical Activation Function]\label{def:relu3}
Let $\sigma^\star \in C^{\infty}_{c}(\sR)$ be non-affine and such that there is some $x \in \sR$ at which $\sigma$ is differentiable and has non-zero derivative. Then $\sigma$ is a classical regular activation function if, for every $\alpha \in \sR^2$, $\sigma_\alpha = \sigma^\star$.
\end{definition}
Further in the text, we assume that activation functions are applied element-wise to each vector $x\in \sR^D$. We are now ready to prove the first part of our approximation theorem.
\begin{proposition}[Deep Gaussian Mixtures are Universal Conditional Distributions in the TV Distance]
\label{prop:Approximation_TV}
Let $\pi:(\sR^D,\|\cdot\|_2)\to (\gP_1^+(\sR^D),d_{TV})$ be H\"{o}lder.  Then, for every compact subset $K\subseteq \sR^D$, every approximation error $\varepsilon>0$ there exists $M, N\in \sN$ and a MLP $\hat{f}:\R^D\mapsto \R^{N\times ND_M}$ with activations as in Definitions \ref{def:relu1}, \ref{def:relu2}, \ref{def:relu3} such that the (non-degenerate) Gaussian-mixture valued map 
\[
    \hat{\pi}(\cdot|x)\eqdef \hat{\eta}\circ \hat{f}(x)
\]
satisfies the uniform estimate
\[
    \max_{x\in K}\,
        d_{TV}\big(
            \hat{\pi}(\cdot|x)
        \|
            \pi(\cdot|x)
        \big)
    <
        \varepsilon
.
\]
\end{proposition}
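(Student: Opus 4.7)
The plan is to reduce the statement to a direct application of the universal approximation theorem of Acciaio et al.\ \cite[Theorem 3.8]{acciaio2024designing}, which asserts that any H\"older continuous map from a Euclidean space into a quantizable approximately simplicial (QAS) space can be uniformly approximated on compact sets by a composition of a standard MLP (with activations of the types listed in Definitions \ref{def:relu1}, \ref{def:relu2}, \ref{def:relu3}) with the associated geometric attention mechanism. The entire proof is a matter of verifying that the hypotheses of this external theorem are met in our setting, and then invoking it.

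The first step is to assemble the ingredients already established in this appendix. Lemma \ref{lem:quantizability} shows that $Q\eqdef (GMM_N)_{N\in \sN}$ is a quantization of $(\gP_1^+(\sR^D), d_{TV})$, and Lemma \ref{lem:ApproximatelySimplicial} shows that $(\gP_1^+(\sR^D), \eta)$ is approximately simplicial. Together they certify that the tuple $(\gP_1^+(\sR^D), d_{TV}, \eta, Q)$ is a QAS space in the sense of \cite[Definition 3.4]{acciaio2024designing}. The map $\hat{\eta}$ constructed immediately before the statement of the proposition is therefore a geometric attention mechanism in the sense of \cite[Definition 3.5]{acciaio2024designing}, since its outputs are exactly convex combinations of the quantized Gaussian mixtures produced by $Q$.

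The second step is to note that the regularity assumption on $\pi$ matches the hypothesis of the external theorem. Since $\pi:(\sR^D,\|\cdot\|_2)\to (\gP_1^+(\sR^D),d_{TV})$ is H\"older and $K\subseteq \sR^D$ is compact, its restriction $\pi|_K$ satisfies the regularity hypothesis of \cite[Theorem 3.8]{acciaio2024designing}. Applying that theorem to $\pi|_K$ with target QAS space $(\gP_1^+(\sR^D), d_{TV}, \eta, Q)$ produces integers $M, N\in \sN$ and an MLP $\hat{f}:\sR^D\to \sR^{N\times ND_M}$ built from an admissible activation family such that $\hat{\pi}(\cdot|x)\eqdef \hat{\eta}\circ \hat{f}(x)$ achieves $\max_{x\in K} d_{TV}(\hat{\pi}(\cdot|x)\|\pi(\cdot|x))<\varepsilon$. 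Non-degeneracy of each Gaussian component is automatic, because the covariance feature map $\varphi$ in \eqref{eq:covariance_feature_map} takes values in $\operatorname{PD}_D$ by construction.

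The bulk of the conceptual work has thus already been absorbed into Lemmas \ref{lem:quantizability} and \ref{lem:ApproximatelySimplicial}; the former requires the nontrivial $L^1$-density of Gaussian mixtures in $L^1_+(\sR^D)$ via mollification and Riemann discretization, and the latter requires recognizing $\gP_1^+(\sR^D)$ as a convex subset of the signed-measure Banach space $(\gM^+(\sR^D),\|\cdot\|_{TV})$. The only point that could be viewed as an obstacle is ensuring the activation-function families admitted by \cite[Theorem 3.8]{acciaio2024designing} coincide with those listed in Definitions \ref{def:relu1}--\ref{def:relu3}; since the cited theorem explicitly allows ReLU+step, smooth ReLU, and classical regular activations, this compatibility is immediate and no further work is needed.
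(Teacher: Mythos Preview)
Your proposal is correct and takes essentially the same approach as the paper: both invoke Lemmata~\ref{lem:quantizability} and~\ref{lem:ApproximatelySimplicial} to certify that $(\gP_1^+(\sR^D),d_{TV},\eta,Q)$ is a QAS space and then apply \cite[Theorem 3.8]{acciaio2024designing} directly. Your version is simply more detailed, spelling out the role of $\hat{\eta}$, the non-degeneracy via $\varphi$, and the activation-function compatibility, whereas the paper's proof is a two-line citation.
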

\begin{proof}
Since Lemmata~\ref{lem:ApproximatelySimplicial} and~\ref{lem:quantizability} imply that $(\mathcal{P}_1^+(\R^D),d_{TV},\eta,Q)$, is a QAS space in the sense of~\citep[Definition 3.4]{acciaio2024designing}, then the conclusion follows directly from \citep[Theorem 3.8]{acciaio2024designing}.
\end{proof}

Since many of our results are formulated in the Kullback-Leibler divergence, then our desired guarantee is obtained only under some additional mild regularity requirements of the target conditional distribution $\hat{\pi}$ being approximated.

\begin{assumption}[Regularity of Conditional Distribution]
\label{ass:Regularity}
Let $\pi:(\sR^D,\|\cdot\|_2)\to (\gP_1^+(\sR^D),d_{TV})$ be H\"{o}lder and, for each $x\in \R^D$, $\pi(\cdot|x)$ is absolutely continuous with respect to the Lebesgue measure $\lambda$ on $\sR^D$. Suppose that there exist some $0<\delta\le \Delta$ such that its conditional Lebesgue density satisfies
\begin{equation}
\label{eq:ass:Regularity__bounds_AboveandBelow}
\delta \le \frac{d\pi(\cdot|x)}{d\lambda} \le \Delta 
\qquad  \mbox{ for all } x \in \sR^D
.
\end{equation}
\end{assumption}
\begin{theorem}[Deep Gaussian Mixtures are Universal Conditional Distributions]
\label{thrm:Main}
Suppose that $\pi$ satisfies Assumption~\ref{ass:Regularity}.  Then, for every compact subset $K\subseteq \sR^{D_x}$, every approximation error $\varepsilon>0$ there exists $M, N\in \sN$ such that: for each $m=1,\dots,M$ and $n=1,\dots,N$ there exist MLPs: $a_m: \sR^{D_x} \mapsto \sR^{D_y}, v_m: \sR^{D_x} \mapsto \sR^M$ with ReLU activation functions and $w_n, B_n$ learnable parameters such that the (non-degenerate) Gaussian-mixture valued map 
\[
    \hat{\pi}(\cdot|x)\eqdef 
            \sum_{n=1}^N
            \sum_{m=1}^M\, 
                z_{mn}(x) \, \nu\big(d_{mn}(x),\varphi(D_{mn}(x))\big)
\]
satisfies the uniform estimate
\begin{equation}
\label{eq:TV_UAT_Bound}
    \max_{x\in K}\,
        d_{TV}\big(
            \pi(\cdot|x)
        ,
            \hat{\pi}(\cdot|x)
        \big)
    <
        \varepsilon
.
\end{equation}
If, moreover, $\hat{\pi}$ also satisfies~\eqref{eq:ass:Regularity__bounds_AboveandBelow} (with $\hat{\pi}$ in place of $\pi$) then additionally
\begin{equation}
\label{eq:Universal_KL}
    \max_{x\in K}\,
        \operatorname{KL}\big(
            \pi(\cdot|x)
        ,
            \hat{\pi}(\cdot|x)
        \big)
    \in 
        \mathcal{O}(\varepsilon)
,
\end{equation}
where $\mathcal{O}$ hides a constant independent of $\varepsilon$ and of the dimension $D$.
\hfill\\
\end{theorem}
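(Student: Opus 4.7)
The plan is to prove Theorem~\ref{thrm:Main} in two stages: first, establish the uniform total-variation estimate~\eqref{eq:TV_UAT_Bound} by directly invoking Proposition~\ref{prop:Approximation_TV}, and then upgrade this bound to the KL estimate~\eqref{eq:Universal_KL} using the two-sided density bounds supplied by Assumption~\ref{ass:Regularity}.

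For the first stage I would apply Proposition~\ref{prop:Approximation_TV} with tolerance $\varepsilon'\eqdef \varepsilon\delta/(2\Delta)$, yielding integers $M, N$ and an MLP $\hat{f}: \sR^{D_x}\to \sR^{N\times ND_M}$ such that $\hat{\pi}(\cdot|x)\eqdef \hat{\eta}\circ\hat{f}(x)$ satisfies $\max_{x\in K}d_{TV}(\pi(\cdot|x),\hat{\pi}(\cdot|x))<\varepsilon'$. Unwinding the definition of the geometric attention $\hat{\eta}$, the output is a mixture $\sum_{n}w_n\sum_m\mathrm{Proj}_{\Delta_M}(v_m(x))_n\,\nu(b_{mn}(x),\varphi(B_{mn}(x)))$, which I would algebraically reorganize to match the form required in the theorem by setting $z_{mn}(x)\eqdef w_n\,\mathrm{Proj}_{\Delta_M}(v_m(x))_n$, $d_{mn}(x)\eqdef b_{mn}(x)$, and $D_{mn}(x)\eqdef B_{mn}(x)$; in particular, $w_n$ and $B_n$ can be taken as $x$-independent learnable constants by redistributing any residual $x$-dependence into the $v_m, a_m$ maps. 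The simplex projection $\mathrm{Proj}_{\Delta_M}$ is piecewise affine and hence exactly representable by a small ReLU block that is absorbed into $\hat{f}$; after this absorption, the $x$-dependent coordinates $a_m(\cdot), v_m(\cdot)$ become pure-ReLU MLPs on the compact set $K$ via classical universal approximation.

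For the second stage I would lift the TV bound to KL under the regularity assumption. Writing $p_x\eqdef d\pi(\cdot|x)/d\lambda$ and $\hat{p}_x\eqdef d\hat{\pi}(\cdot|x)/d\lambda$, both of which lie in $[\delta,\Delta]$ by hypothesis, the elementary inequality $\log t\le t-1$ for $t>0$ combined with the pointwise lower bound $\hat{p}_x\ge\delta$ yields
\[
\mathrm{KL}\bigl(\pi(\cdot|x)\,\|\,\hat{\pi}(\cdot|x)\bigr)
= \int p_x\log\tfrac{p_x}{\hat p_x}\,d\lambda
\le \int \tfrac{p_x(p_x-\hat p_x)}{\hat p_x}\,d\lambda
\le \tfrac{\Delta}{\delta}\int|p_x-\hat p_x|\,d\lambda
= \tfrac{2\Delta}{\delta}\,d_{TV}\bigl(\pi(\cdot|x),\hat{\pi}(\cdot|x)\bigr).
\]
Taking the maximum over $x\in K$ and substituting the TV estimate from the first stage produces a KL bound of order $\varepsilon$ with absolute constant $2\Delta/\delta$ independent of the ambient dimension, which establishes~\eqref{eq:Universal_KL}.

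The principal obstacle I anticipate lies in the reparametrization step of the first stage. The geometric attention mechanism from~\cite{acciaio2024designing} produces a parametrization in which, a priori, every component depends on $x$ through $\hat{f}$ and may use activation functions from the broader Definitions~\ref{def:relu1}--\ref{def:relu3} rather than pure ReLU. One must carefully show that this richer parametrization can be pruned to the specific form stated in the theorem, with all $x$-dependence confined to ReLU-activated MLPs $a_m(\cdot), v_m(\cdot)$ and the remaining structural components (the weights $w_n$ and the covariance shapes $B_n$) realized as $x$-independent learnable parameters. This is plausible given the flexibility of universal approximation on compact sets—any non-ReLU block acting on a compact image can be replaced by a ReLU approximator within an arbitrarily small tolerance, which can be absorbed into $\varepsilon'$—but it constitutes the most delicate bookkeeping portion of the argument.
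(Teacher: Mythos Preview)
Your two-stage plan matches the paper's proof: both invoke Proposition~\ref{prop:Approximation_TV} for the TV bound and then convert TV to KL via the two-sided density bounds in Assumption~\ref{ass:Regularity}. The only substantive difference is in the second stage: the paper obtains the KL-versus-TV comparison by bounding the Radon--Nikodym derivative $d\pi(\cdot|x)/d\hat{\pi}(\cdot|x)$ uniformly by $\Delta/\delta$ and then citing a reverse Pinsker inequality from \cite{sason2015reverse}, whereas you derive the same linear bound directly from $\log t\le t-1$, which is more self-contained and yields the explicit constant $2\Delta/\delta$. The reparametrization issue you flag---reducing the general activation families of Definitions~\ref{def:relu1}--\ref{def:relu3} to pure ReLU and isolating $x$-independent components---is dispatched in the paper's proof with a single sentence appealing to the flexibility of those activation classes, so your caution there already exceeds what the paper supplies.
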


The proof of Theorem~\ref{thrm:Main} makes use of the \textit{symmetrized Kullback-Leibler divergence}  $\operatorname{KL}_{sym}$ which is defined for any two $\alpha,\beta\in \mathcal{P}(\R^D)$ by $\operatorname{KL}_{sym}(\mu,\nu)\eqdef 
\operatorname{KL}(\alpha\|\beta)+\operatorname{KL}(\beta\|\alpha)
$; note, if $\operatorname{KL}_{sym}(\alpha,\beta)=0$ then $\operatorname{KL}_{sym}(\alpha\|\beta)=0$.  We now prove our main approximation guarantee.
\begin{proof}[{Proof of Theorem~\ref{thrm:Main}}] To simplify the explanation of our first claim, we provide the expression for $\hat{\pi}(y|x)$ from \eqref{eq:cond-distr}:
\begin{align*}
    \hat{\pi}(y|x) = \sum_{n=1}^N w_n \sum_{m=1}^M v_m(x)\exp\left(\dfrac{a_m^\top(x) B_n a_m(x) + 2b_n^\top a_m(x)}{2\varepsilon}\right)\gN(y\,|\, d_{mn}(x), \varepsilon B_n)
\end{align*}
Thanks to the wide variety of activation functions available from Definitions  \ref{def:relu1}, \ref{def:relu2}, \ref{def:relu3}, we can construct the map $\hat{f}$ and directly apply Proposition ~\ref{prop:Approximation_TV}. This completes the proof of the first claim.

Under Assumption~\ref{ass:Regularity}, $\pi(\cdot|x)$ and $\hat{\pi}(\cdot|x)$ are equivalent to the $D$-dimensional Lebesgue measure $\lambda$.  Consequently, for all $x\in \R^{D_x}$:
\[
\pi(\cdot|x) \ll 
\hat{\pi}(\cdot|x)
\]
Therefore, the Radon-Nikodym derivative $
\frac{\hat{\pi}(\cdot|x)}{\pi(\cdot|x)}
$ is a well-defined element of $L^1(\sR^{D_x})$, for each $x\in \sR^{D_x}$; furthermore, we have
\begin{equation}
\label{eq:RN_yoga}
        \frac{\pi(\cdot|x)}{\hat{\pi}(\cdot|x)}
    =
        \frac{\pi(\cdot|x)}{d\lambda}
        \frac{d\lambda}{\hat{\pi}(\cdot|x)}
.
\end{equation}
Again, leaning on Assumption~\eqref{eq:ass:Regularity__bounds_AboveandBelow} and the H\"{o}lder inequality, we deduce that
\begin{align}
\nonumber
    \sup_{a\in \sR^D}\,
        \Big|
            \frac{\pi(\cdot|x)}{\hat{\pi}(\cdot|x)}(a)
        \Big|
    &=
    \sup_{a\in \sR^D}\,
        \Big|
            \frac{\pi(\cdot|x)}{d\lambda}
            (a)
            \frac{d\lambda}{\hat{\pi}(\cdot|x)}
            (a)
        \Big|
\\
\nonumber
    &\le
    \sup_{a\in \sR^D}\,
        \Big|
            \frac{\pi(\cdot|x)}{d\lambda}
            (a)
        \Big|
    \sup_{a\in \sR^D}\,
        \Big|
            \frac{d\lambda}{\hat{\pi}(\cdot|x)}
            (a)
        \Big|
\\
\nonumber
    &\le
    \sup_{a\in \R^D}\,
        \Big|
            \frac{\pi(\cdot|x)}{d\lambda}
            (a)
        \Big|
    \frac{1}{\delta}
\\
\label{eq:the_good_bound}
&
    \le 
        \frac{\Delta}{\delta}
\end{align}
where the final inequality under the assumption that $\hat{\pi}$ also satisfies Assumption~\ref{eq:ass:Regularity__bounds_AboveandBelow}.  Importantly, we emphasize that the right-hand side of~\eqref{eq:the_good_bound} holds \textit{independently of $x\in \sR^{D_x}$} (``which we are conditioning on'').
A nearly identical estimate holds for the corresponding lower-bound.
Therefore, we may apply~\citep[Theorem 1]{sason2015reverse} to deduce that: there exists a constant $C>0$ (independent of $x\in \sR^{D_x}$ and depending only on the quantities $\frac{\Delta}{\delta}$ and $\frac{\delta}{\Delta}$; thus only on $\delta,\Delta$) such that: for each $x\in \sR^{D_x}$
\begin{equation}
\label{eq:redux_KL_to_TV__Uniformly}
        \operatorname{KL}\big(
                \pi(\cdot|x)
            ,
                \hat{\pi}(\cdot|x)
        \big)
    \le C\,
        d_{TV}\big(
            \pi(\cdot|x)
        ,
            \hat{\pi}(\cdot|x)
        \big)
.
\end{equation}
The conclusion now follows, since the right-hand side of~\eqref{eq:redux_KL_to_TV__Uniformly} was controllable by the first statement; i.e.\ since~\eqref{eq:TV_UAT_Bound} holds we have
\begin{equation}
\label{eq:KL_1}
        \operatorname{KL}\big(
                \pi(\cdot|x)
            ,
                \hat{\pi}(\cdot|x)
        \big)
    \le C\,
        d_{TV}\big(
            \pi(\cdot|x)
        ,
            \hat{\pi}(\cdot|x)
        \big)
    \le 
        C
        \varepsilon
.
\end{equation}
A nearly identical derivation shows that
\begin{equation}
\label{eq:KL_2}
        \operatorname{KL}\big(
                \hat{\pi}(\cdot|x)
            ,
                \pi(\cdot|x)
        \big)
    \le 
        C
        \varepsilon
.
\end{equation}
Combining~\eqref{eq:KL_1} and~\eqref{eq:KL_2} yields the following bound
\begin{equation}
\label{eq:Universal_KL__sym}
    \max_{x\in K}\,
        \operatorname{KL}_{sym}\big(
            \pi(\cdot|x)
        ,
            \hat{\pi}(\cdot|x)
        \big)
    \in 
        \mathcal{O}(\varepsilon)
.
\end{equation}
Since $\operatorname{KL}(\alpha\|\beta)\le \operatorname{KL}_{sym}(\alpha,\beta)$ for every pair of Borel probability measures $\alpha$ and $\beta$ on $\R^{D_x}$ then~\eqref{eq:Universal_KL__sym} implies~\eqref{eq:Universal_KL}.

\end{proof}

\begin{figure}[ht]
    \centering
    
    \begin{tabular}{C C C C}
    \textbf{Source} & \textbf{Target} & \textbf{FSBM} & \textbf{Ours} \\
    \end{tabular}
    
    \includegraphics[width=0.8\linewidth]{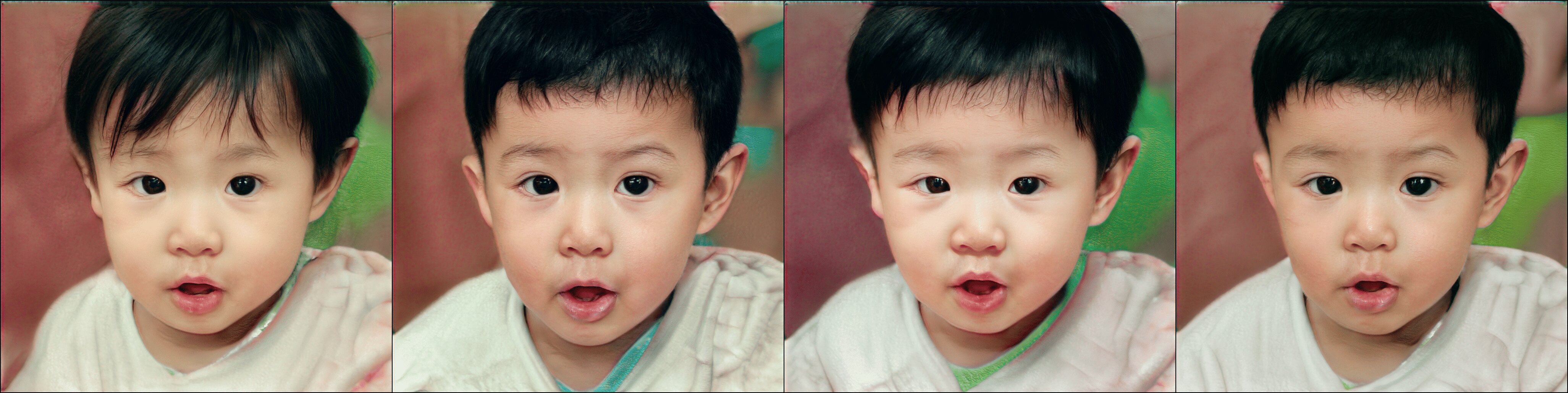}
    \includegraphics[width=0.8\linewidth]{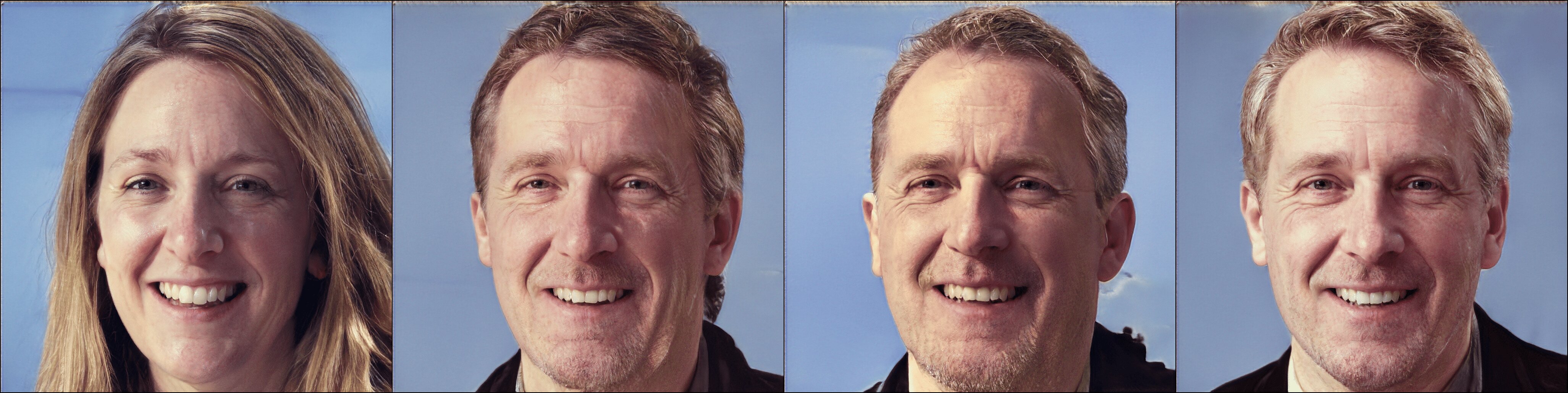}
    \includegraphics[width=0.8\linewidth]{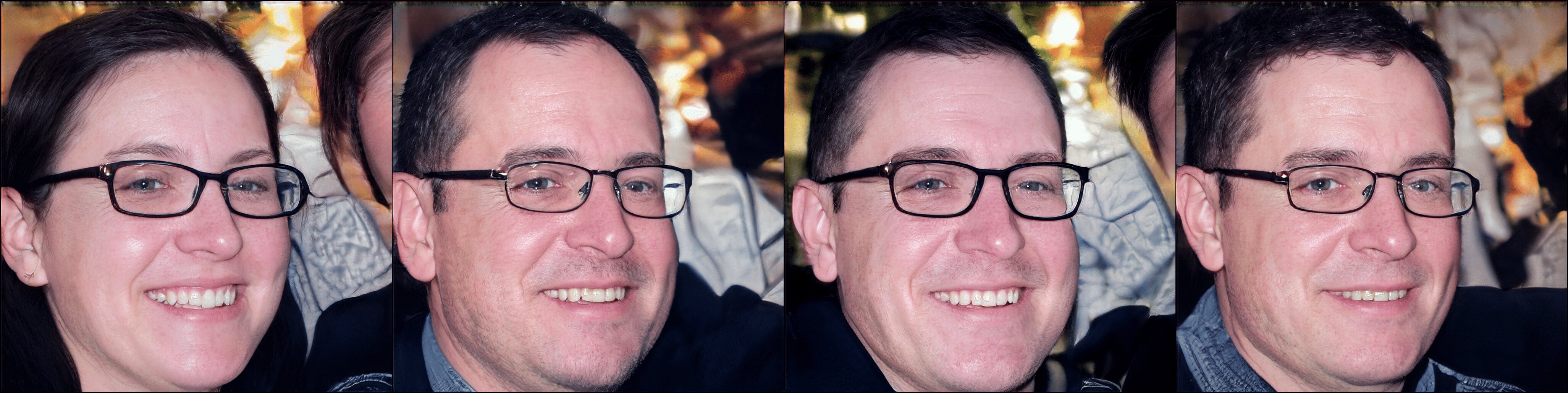}
    \includegraphics[width=0.8\linewidth]{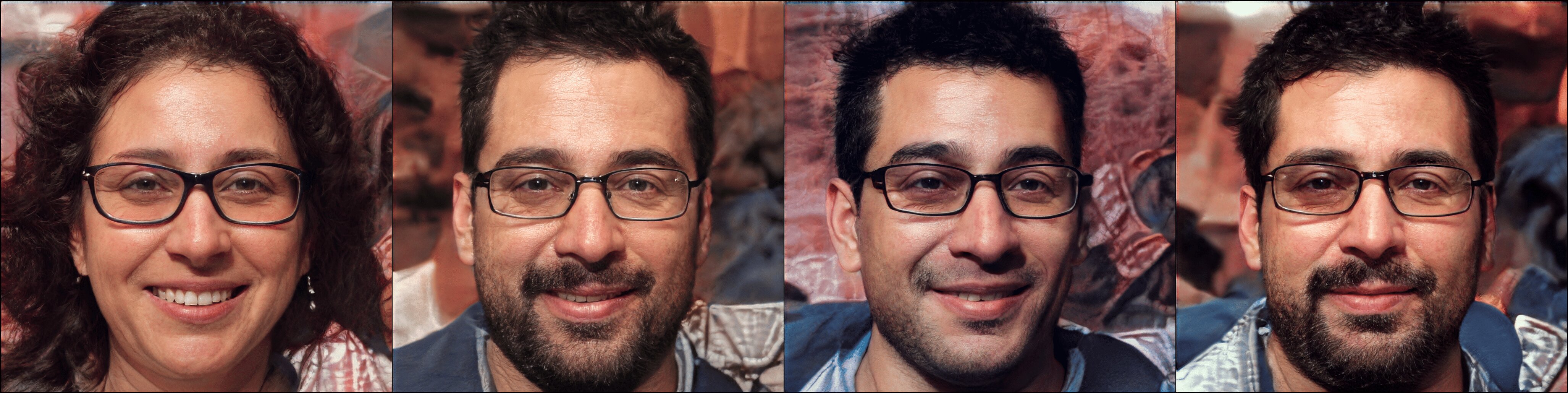}
    \includegraphics[width=0.8\linewidth]{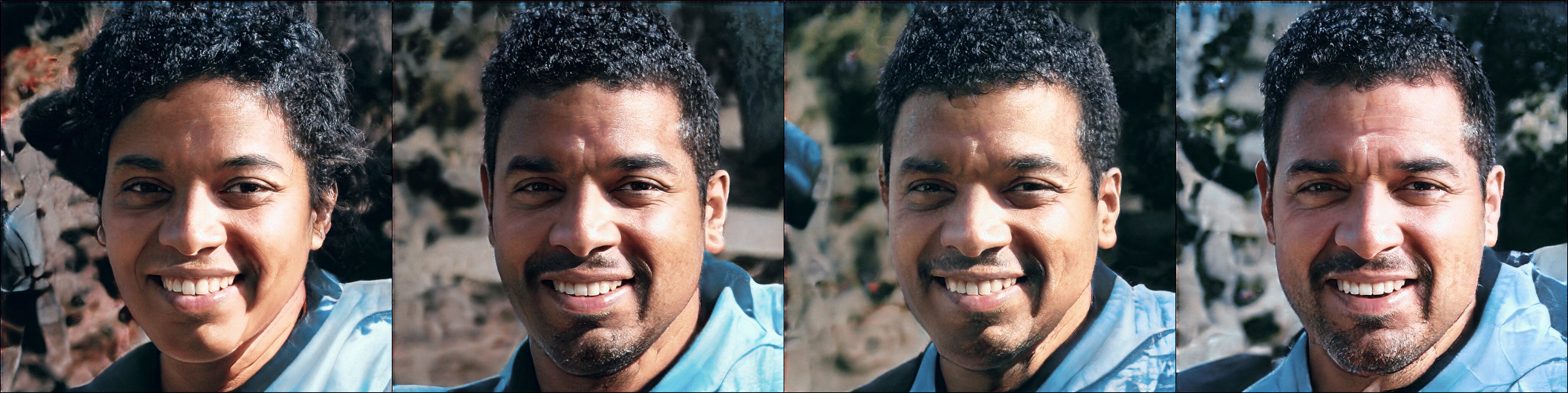}
    \includegraphics[width=0.8\linewidth]{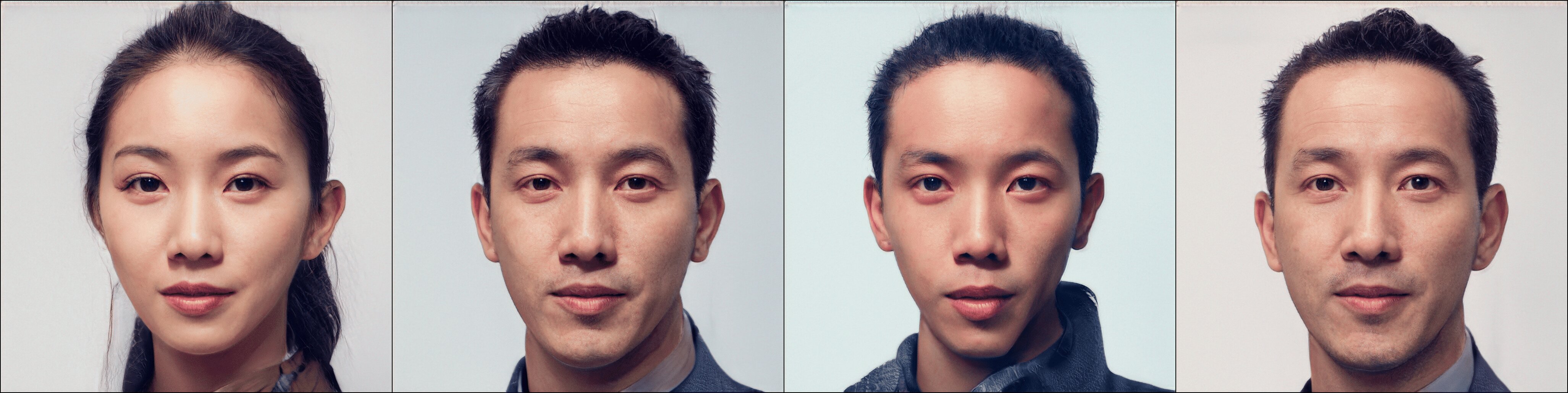}
    
    \caption{Extended visual comparisons between the FSBM \citep{theodoropoulos2024feedback} method (3rd column) and our method (4th column) for Woman-to-Man translation are shown here. The task is described in \wasyparagraph\ref{subsec:latent-generation}. The first column shows the source image and the second column the target image.}\label{fig:woman-to-man-extended}
\end{figure}
\begin{figure}[ht]
    \centering
    
    \begin{tabular}{C C C C}
    \textbf{Source} & \textbf{Target} & \textbf{FSBM} & \textbf{Ours} \\
    \end{tabular}
    
    \includegraphics[width=0.8\linewidth]{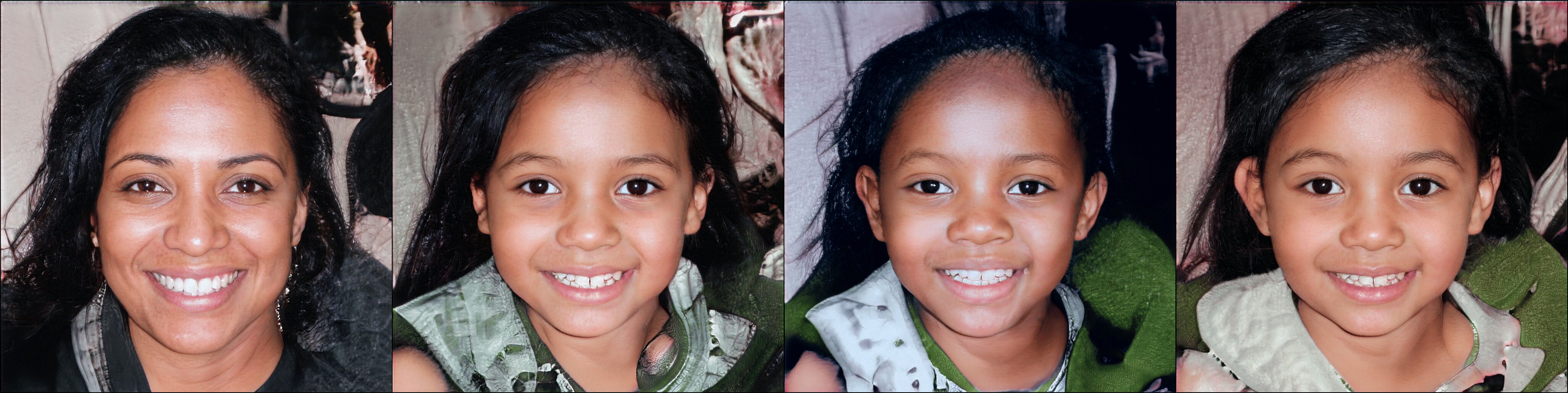}
    \includegraphics[width=0.8\linewidth]{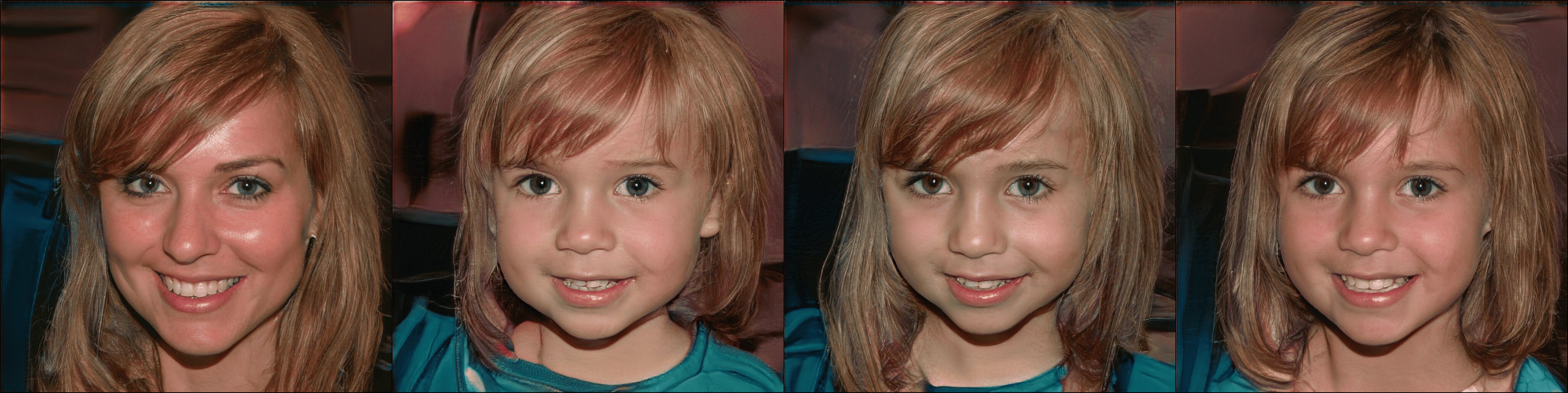}
    \includegraphics[width=0.8\linewidth]{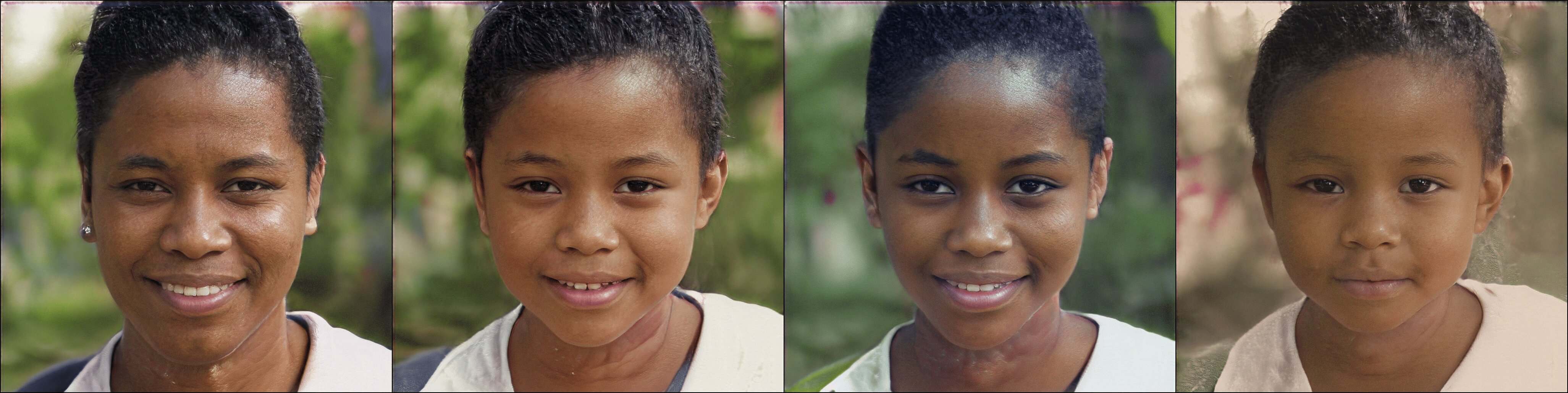}
    \includegraphics[width=0.8\linewidth]{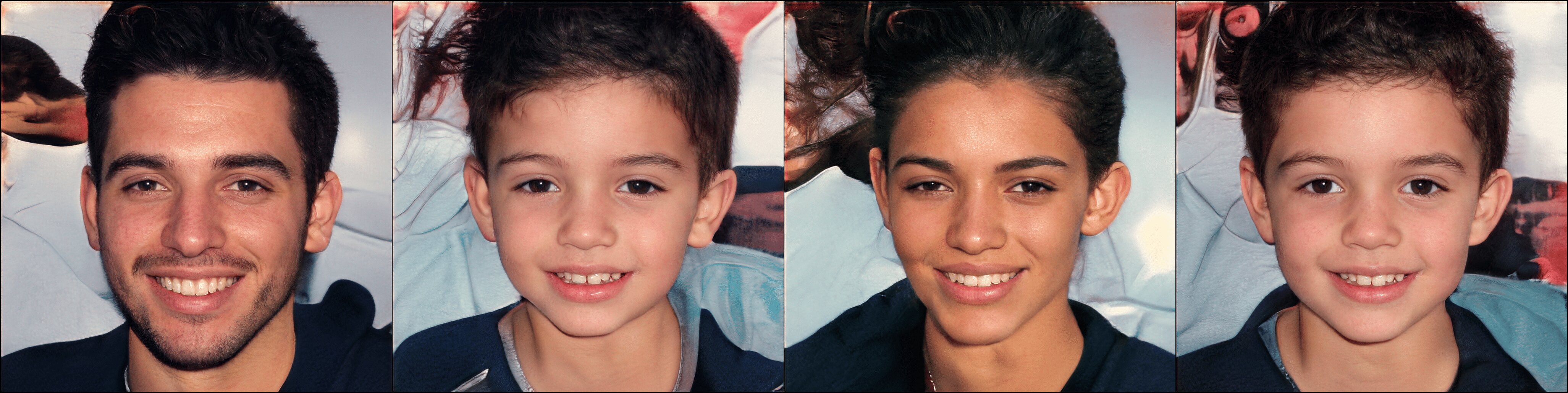}
    \includegraphics[width=0.8\linewidth]{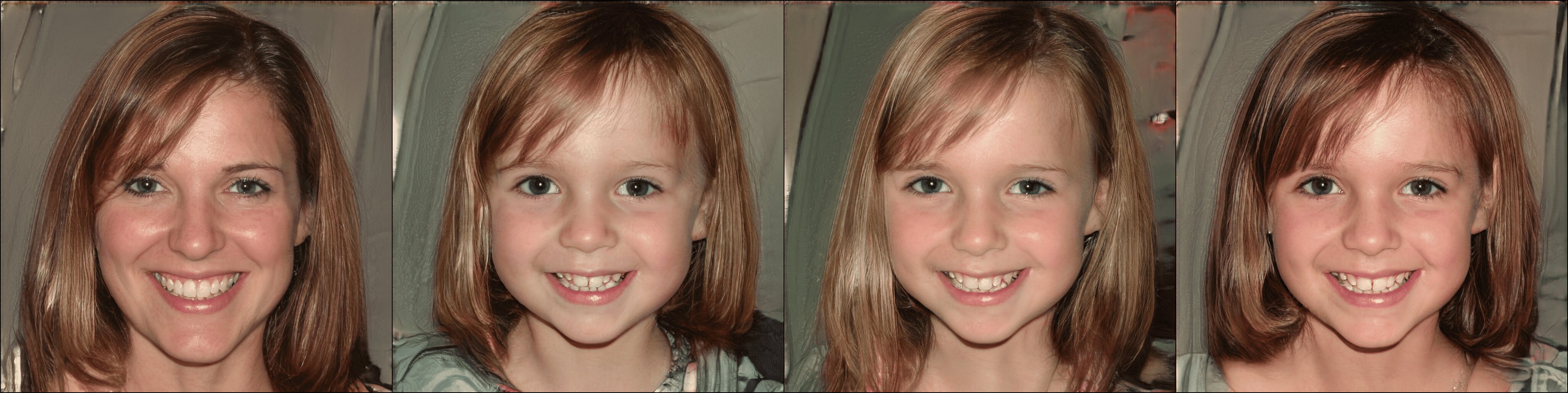}
    \includegraphics[width=0.8\linewidth]{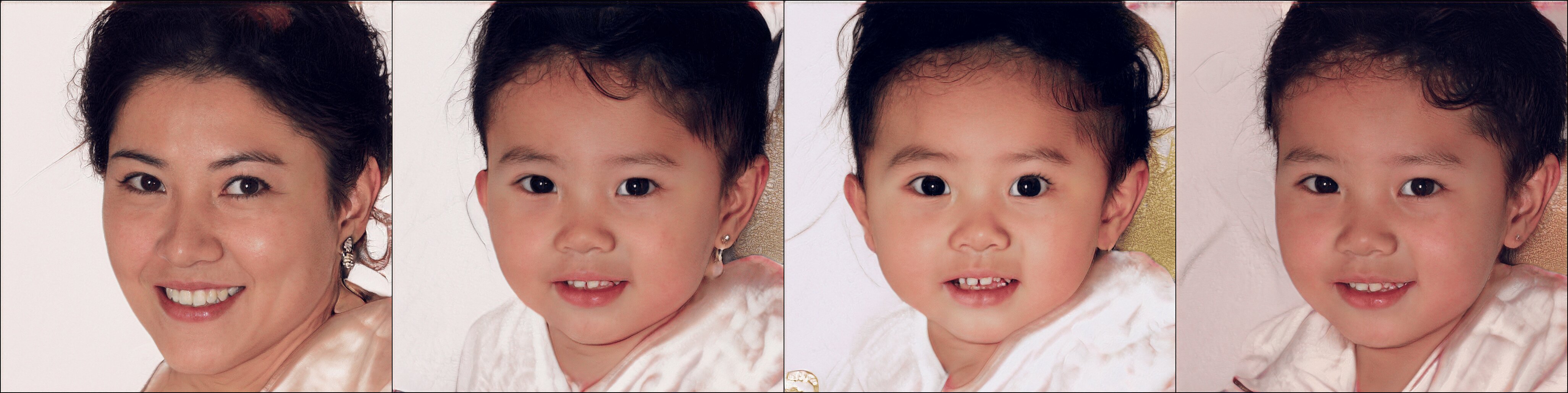}
    
    \caption{Visual comparisons for the Old-to-Young translation task between the FSBM \citep{theodoropoulos2024feedback} method (3rd column) and our method (4th column). The task is described in \wasyparagraph\ref{subsec:latent-generation}. The first column displays the source image, and the second column shows the target image.}\label{fig:old-to-young-extended}
\end{figure}


\end{document}